\def\eqref#1{equation~\ref{#1}}
\def\1{\bm{1}}
\def\rd{{\textnormal{d}}}
\DeclareMathAlphabet{\mathsfit}{\encodingdefault}{\sfdefault}{m}{sl}
\SetMathAlphabet{\mathsfit}{bold}{\encodingdefault}{\sfdefault}{bx}{n}
\newcommand{\R}{\mathbb{R}}
\DeclareMathOperator*{\argmax}{arg\,max}
\DeclareMathOperator{\Tr}{Tr}
\newcommand{\bv}{\boldsymbol{v}}
\newcommand{\bx}{\boldsymbol{x}}
\newcommand{\btheta}{\boldsymbol{\theta}}
\newcommand{\bxi}{\bm{\xi}}
\newcommand{\cD}{\mathcal{D}}
\newcommand{\cL}{\mathcal{L}}
\newcommand{\cM}{\mathcal{M}}
\newcommand{\cN}{\mathcal{N}}
\newcommand{\cO}{\mathcal{O}}
\newcommand{\bbE}{\mathbb{E}}
\newcommand{\bbN}{\mathbb{N}}
\newcommand{\bzero}{\mathbf{0}}
\newcommand{\<}{\left\langle}
\renewcommand{\>}{\right\rangle}
\theoremstyle{plain}
\newtheorem{theorem}{Theorem}[section]
\newtheorem{proposition}{Proposition}[section]
\newtheorem{lemma}{Lemma}[section]
\newtheorem{example}{Example}[section]
\newtheorem{corollary}[theorem]{Corollary}
\theoremstyle{definition}
\newtheorem{definition}{Definition}[section]
\newtheorem{assumption}{Assumption}[section]
\newtheorem{remark}{Remark}[section]
\newtheorem*{main result}{Main Theorem}
\newcommand{\norm}[1]{\left\lVert#1\right\rVert}
\newcommand{\rbracket}[1]{\left(#1\right)}
\title{Sharpness-Aware Minimization Efficiently Selects Flatter Minima Late In Training}
\author{%
Zhanpeng Zhou\textsuperscript{1}$^{*\dagger}$,
Mingze Wang\textsuperscript{2}\thanks{Equal contribution. \normalsize}, 
Yuchen Mao\textsuperscript{1},
Bingrui Li\textsuperscript{3},
Junchi Yan\textsuperscript{1}\thanks{Corresponding author. The work was in part supported by NSFC (62222607) and Shanghai Municipal Science and Technology Major Project (2021SHZDZX0102).\normalsize}\\
\textsuperscript{1}Sch. of Computer Science \& Sch. of Artificial Intelligence, Shanghai Jiao Tong University \\
\textsuperscript{2}Peking University
\textsuperscript{3}Tsinghua University \\
\texttt{\small \{zzp1012,yanjunchi\}@sjtu.edu.cn}\\
}
\begin{document}

\maketitle
\vspace{-5pt}
\begin{abstract}
Sharpness-Aware Minimization (SAM) has substantially improved the generalization of neural networks under various settings.
Despite the success, its effectiveness remains poorly understood.
In this work, we discover an intriguing phenomenon in the training dynamics of SAM, shedding light on understanding its implicit bias towards flatter minima over Stochastic Gradient Descent (SGD). Specifically, we find that \emph{SAM efficiently selects flatter minima late in training\footnote{``Late in training'' refers to the phase when the optimization process is nearing the late stages of training.}}. Remarkably, even a few epochs of SAM applied at the end of training yield nearly the same generalization and solution sharpness as full SAM training. Subsequently, we delve deeper into the underlying mechanism behind this phenomenon. Theoretically, we identify two phases in the learning dynamics after applying SAM late in training: i) SAM first escapes the minimum found by SGD exponentially fast; and ii) then rapidly converges to a flatter minimum within the same valley. Furthermore, we empirically investigate the role of SAM during the early training phase. We conjecture that the optimization method chosen in the late phase is more crucial in shaping the final solution's properties. Based on this viewpoint, we extend our findings from SAM to Adversarial Training. 
We released our source code at \url{https://github.com/zzp1012/SAM-in-Late-Phase}.
\end{abstract}

\vspace{-7pt}
\section{Introduction}\label{sec:intro}\vspace{-7pt}
Understanding the surprising generalization abilities of over-parameterized neural networks yields an important yet open problem in deep learning.
Recently, it has been observed that the generalization of neural networks is closely tied to the sharpness of the loss landscape~\citep{keskar2017on,zhang2017understanding,Neyshabur2017exploring,Jiang2020Fantastic}.
This has led to the development of many gradient-based optimization algorithms that explicitly/implicitly regularize the sharpness of solutions.
In particular, \citet{foret2021sharpnessaware} proposed Sharpness-Aware Minimization (SAM), which has substantially improved the generalization and robustness~\citep{zhang2024duality} of neural networks across many tasks, including computer vision~\citep{foret2021sharpnessaware,chen2022when,kaddour2022when} and natural language processing~\citep{bahri-etal-2022-sharpness}.

Despite the empirical success of SAM, its effectiveness is not yet fully understood.
\citet{Andriushchenko2022towardsunderstand} has shown that existing theoretical justifications based on PAC-Bayes generalization bounds~\citep{foret2021sharpnessaware,wu2020adversarial} are incomplete in explaining the superior performance of SAM.
Additionally, recent theoretical analyses of SAM's dynamics and properties often rely on unrealistic assumptions, such as a sufficiently small learning rate or perturbation radius, which undermines the validity of their conclusions.
Understanding the hidden mechanisms behind SAM remains an active area of research.

Recent works show that the effectiveness of gradient-based optimization methods can be attributed to their \emph{implicit bias} toward solutions with favorable properties~\citep{vardi2023on}.
One notable example of such implicit bias is that Stochastic Gradient Descent (SGD) and its variants tend to find flat minima, which often leads to better generalization~\citep{keskar2017on,zhang2017understanding}.
It is known that SAM selects flatter minima over SGD in practice, which represents a form of implicit bias as well.
While the design of SAM is inspired by sharpness regularization, its practical implementation (see \cref{eq:update_rule_of_SAM}), which minimizes a first-order approximation of the original objective (see \cref{eq:obj_SAM,eq:approximation}), does not explicitly achieve this. 
Understanding SAM's implicit bias towards flatter minima, especially in comparison to SGD, is of paramount importance for explaining its effectiveness.

\begin{figure}[tb!]
    \centering
    \includegraphics[width=0.90\textwidth]{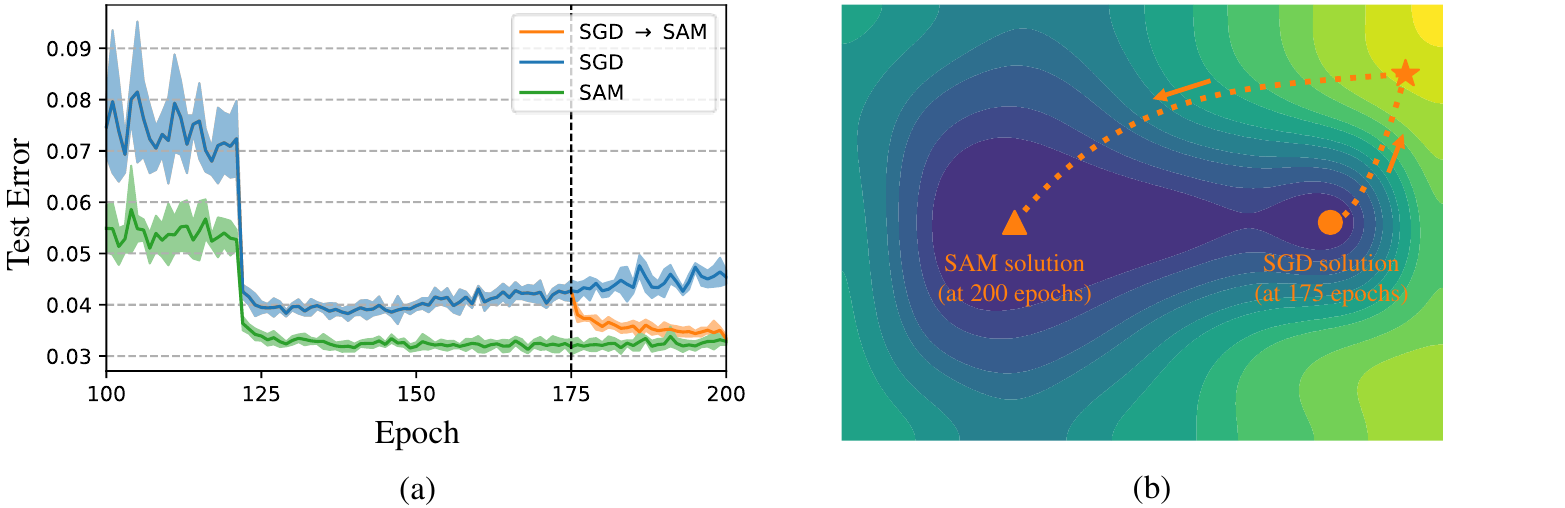}
    \caption{
    \textbf{SAM operates efficiently even when applied only during the final few epochs of training}. 
    \textbf{(a)}
    Test error curves for WideResNet-16-8 on CIFAR-10 trained with different strategies.
    The blue and green baseline curves represent training solely with SGD and SAM, respectively.
    The orange curve shows the test error of a neural network initially trained with SGD up to epoch $t = 175$, followed by SAM training up to epoch $T = 200$.
    The first 100 epochs are omitted for clarity.
    Curves are means and standard deviations over five trials with different random seeds.
    The detailed settings and hyper-parameters are described in \cref{sec:prelim}.
    \textbf{(b)}
    A schematic picture of the training trajectory after applying SAM late in training.
    The contour plot represents the loss landscape,  with darker regions indicating lower loss.
    The orange dotted lines depict the path of the SAM iterator.
    }
    \label{fig:overview}
    \vspace{-10pt}
\end{figure}

In this work, we uncover an intriguing phenomenon in the training dynamics of SAM, which sheds light on understanding its implicit bias toward flatter minima.
Specifically, we find that SAM operates efficiently even when applied only during the last few epochs of training\footnote{ \cite{Andriushchenko2022towardsunderstand} observed a similar finding but did not explore it in depth. Surprising, we first show that even few epochs of SAM at the end of training yield generalization comparable to full SAM training. 
We study this further via extensive experiments under various setup and rigorous theoretical analysis. }.
As shown in \cref{fig:overview} (a), SAM applied in the last few epochs---after initial training with SGD---can achieve nearly the same test performance as training exclusively with SAM.
Furthermore, our analysis of the solution sharpness reveals that last-few-epochs SAM identifies flatter minima as effectively as full SAM training. 
We conclude that \emph{SAM efficiently selects flatter minima over SGD late in training}.
Contrary to the conventional view that early training dynamics are critical~\citep{achille2018critical,Frankle2020The}, our discovery underscores the importance of the late training phase for generalization.

Subsequently, we delve deeper into the mechanism behind the efficiency of SAM in selecting flatter minima over SGD during the late phase of training.
Theoretically, we identify a two-phase picture in the learning dynamics after applying SAM late in training: \emph{Phase I.} SAM initially escapes from the relatively sharp minimum found by SGD exponentially fast; \emph{Phase II.} it then rapidly converges to a flatter minimum over SGD within the same valley.
See \cref{fig:overview}~(b) for an illustration.
As outlined in \cref{tab: two-phase picture}, we characterize the two-phase picture into four key claims \textbf{(P1-4)}, each supported by rigorous analyses of SAM's convergence, escape behavior, and bias toward flatter minima.
These claims are further validated by experiments in both toy models and real-world scenarios.
This two-phase picture explains SAM's ability to efficiently find flatter minima, even when applied only in the last few epochs, yielding novel insights into its implicit bias and effectiveness.

Furthermore, we investigate the necessity of applying SAM during the early phase of training.
Our results show that early-phase SAM has a minimal effect on both generalization and sharpness of the final solution.
Based on these findings, we conjecture that the choice of the optimization algorithm during the late phase of training plays a more critical role in shaping the final properties of the solution. 
Extending this viewpoint to Adversarial Training (AT)~\citep{madry2018towards} and robustness, we find that similar to SAM, AT efficiently selects robust minima when applied late in training.

In summary, our work focuses on the implicit bias of SAM in the late training phase, which has not yet been carefully studied.
Our two-phase picture advances the understanding of its effectiveness.

\section{Background and Preliminaries}\label{sec:prelim}\vspace{-7pt}

\textbf{Notation and Setup.}
Let $\mathcal{D}_{\rm train} = \{({\boldsymbol{x}}_i, \boldsymbol{y}_i)\}_{i=1}^n$ and $\mathcal{D}_{\rm test} = \{({\boldsymbol{x}}_i, \boldsymbol{y}_i)\}_{i=n+1}^{n+m}$ be the training set and the testing set, where ${\boldsymbol{x}}_i\in \R^{d}$ is the input and $\boldsymbol{y}_i \in \R^{k}$ is the corresponding target.
We consider a model in the form of $f(\boldsymbol{x}; \boldsymbol{\theta})$, where ${\boldsymbol{\theta}} \in \R^p$ are the model parameters and $f(\boldsymbol{x};\boldsymbol{\theta})\in\R^k$ is the output of the input $\boldsymbol{x}$.
The loss of the model at the $i$-th sample $(\boldsymbol{x}_i, \boldsymbol{y}_i)$ is denoted as $\ell(\boldsymbol{y}_i; f(\boldsymbol{x}_i; \boldsymbol{\theta}))$, simplified to $\ell_i(\boldsymbol{\theta})$.
The loss over the training set and the testing loss are then given by $\mathcal{L}_{\cD_{\rm train}}(\boldsymbol{\theta}) = \frac{1}{n} \sum_{i=1}^n \ell_i(\boldsymbol{\theta})$ and $\mathcal{L}_{\cD_{\rm test}}(\boldsymbol{\theta}) = \frac{1}{m} \sum_{i=n+1}^{n+m} \ell_i(\boldsymbol{\theta})$, respectively.
Unless otherwise specified, we denote $\cL(\btheta)=\cL_{\cD_{\rm train}}(\btheta)$ for simplicity.
For classification tasks, we define the classification error on the test set $\cD_{\rm test}$ as $\operatorname{Err}_{\cD_{\rm test}}(\boldsymbol{\theta})$.
For a matrix $A$, let $\norm{A}_2$, $\norm{A}_F$, and $\Tr(A)$ denote its spectral norm, Frobenius norm and trace, respectively.

For our theoretical analysis in \cref{sec:understanding}, we always assume $k=1$ for simplicity\footnote{The extension to the case where $k>1$ is straightforward.} and consider the squared loss, i.e., $\ell_i(\boldsymbol{\theta}) = \frac{1}{2}| f(\boldsymbol{x}_i; \boldsymbol{\theta}) - y_i|^2$.
We focus on the over-parameterized case in the sense that $\min_{\btheta}\cL(\btheta)=0$.
To characterize the local geometry of the training loss at point $\boldsymbol{\theta}$, we consider the Fisher matrix
$G(\btheta)=\frac{1}{n}\sum_{i=1}^n\nabla f(\bx_i;\btheta)\nabla f(\bx_i;\btheta)^{\top}$
and the Hessian matrix $H(\boldsymbol{\theta}) = G(\btheta) + \frac{1}{n}\sum_{i=1}^n \rbracket{f(\boldsymbol{x}_i; \btheta) - y_i} \nabla^2 f(\boldsymbol{x}_i; \btheta)$, where $G(\btheta)\approx H(\btheta)$ in the low-loss region.

\textbf{Sharpness-Aware Minimization.}
The central idea behind SAM is to minimize the worst-case loss within a neighborhood of the current parameters.
Specifically, SAM problem can be formulated as \begin{align}
    \min_{\boldsymbol{\theta}} \mathcal{L}^{\rm SAM}(\boldsymbol{\theta}; \rho), \quad {\rm where} \quad \mathcal{L}^{\rm SAM}(\boldsymbol{\theta}; \rho) = \max_{\norm{\boldsymbol{\epsilon}}_2 \leq \rho} \mathcal{L}(\boldsymbol{\theta} + \boldsymbol{\epsilon})\label{eq:obj_SAM}.
\end{align}
Here, $\rho$ is a given neighborhood radius.
However, finding the perturbation in the parameter space that maximizes the loss, i.e. $\boldsymbol{\epsilon}$, can be computationally intractable in practice.
Thus \cite{foret2021sharpnessaware} approximated the maximize-loss perturbation using a first order solution: 
\begin{align}
    \boldsymbol{\epsilon} \approx \argmax_{\norm{\boldsymbol{\epsilon}}_2 \leq \rho} \rbracket{ \mathcal{L}\rbracket{\boldsymbol{\theta}} + \boldsymbol{\epsilon}^{\top} \nabla \mathcal{L}\rbracket{\boldsymbol{\theta}}} = \rho\nabla \mathcal{L}\rbracket{\boldsymbol{\theta}}/\norm{\nabla \mathcal{L}\rbracket{\boldsymbol{\theta}}}_2\label{eq:approximation}.
\end{align}
Consequently, let $\eta$ be the learning rate and $\xi_t=\{\xi_{t,1},\cdots,\xi_{t,B}\}\subset[n]$ be the batch indices sampled at iteration $t$, where $B$ is the batch size.
The mini-batch loss is given by $\cL_{\xi_t}(\btheta)=\frac{1}{B}\sum_{i\in\xi_t}\ell_{i}(\btheta)$.
With a small yet key modification to SGD, the update rule of SAM with stochastic gradients is: 
\begin{align}
    \boldsymbol{\theta}^{t+1} = \boldsymbol{\theta}^{t} - \eta \nabla \mathcal{L}_{\xi_t}\rbracket{\boldsymbol{\theta}^{t} + \rho_t \nabla \mathcal{L}_{\xi_t}(\boldsymbol{\theta}^{t})}, \quad {\rm where} \quad \rho_t = \rho/\norm{\nabla \mathcal{L}_{\xi_t}\rbracket{\boldsymbol{\theta}^t}}_2.
    \label{eq:update_rule_of_SAM}
\end{align}

\begin{wrapfigure}{R}{0.35\textwidth}
    \vspace{-15pt}
    \centering
    \includegraphics[width=0.3067\textwidth]{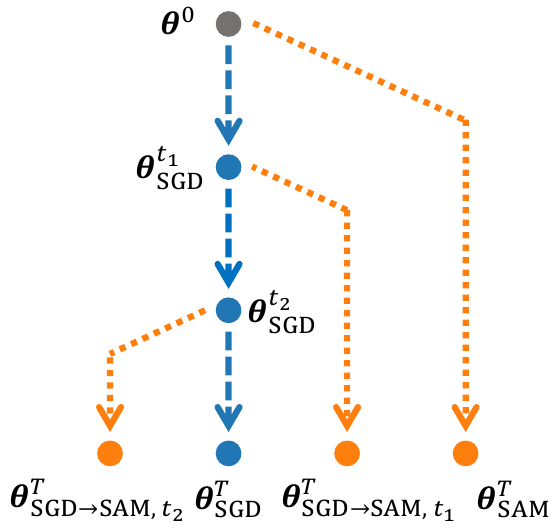}
    \caption{
    \textbf{The illustration of the switching method.}
    Blue dashed lines represent SGD training, while orange dashed lines represent SAM training. $\boldsymbol{\theta}^0$ denotes the random initialization.
    $t_1, t_2$ denotes two different switching points.
    }
    \label{fig:switch_method}
    \vspace{-10pt}
\end{wrapfigure}

Our experiments are conducted using SAM in \cref{eq:update_rule_of_SAM}, whereas our theoretical analyses in \cref{sec:understanding} apply the simplified SAM in  \cref{eq:update_rule_of_USAM}. 
Specifically, we approximate the step size $\rho_t$ in \cref{eq:update_rule_of_SAM} with a constant $\rho$.
We also assume the independence between the randomness in the inner and outer updates, i.e., $\xi_t^1, \xi_t^2$.
These simplifications facilitate mathematical tractability while capturing the central behavior of SAM in training dynamics and generalization, which has been supported by empirical evidence~\citep{Andriushchenko2022towardsunderstand}.
They have also been widely adopted in recent theoretical advances on SAM~\citep{Andriushchenko2022towardsunderstand,behdin2023statistical,agarwala2023operate,behdin2023msam,monzio2023sde}.
\begin{align}
    \boldsymbol{\theta}^{t+1} = \boldsymbol{\theta}^{t} - \eta \nabla \mathcal{L}_{\xi_t^1}\rbracket{\boldsymbol{\theta}^{t} + \rho \nabla \mathcal{L}_{\xi_t^2}(\boldsymbol{\theta}^{t})}.
    \label{eq:update_rule_of_USAM}
\end{align}

\textbf{Switching Method.}
In this paper, we demonstrate our main discovery by switching between SGD and SAM at a specific point during training, referred to as \emph{switching method}.
Specifically, let $T$ be the total number of training epochs and $t$ be the switching epoch, considering the switch from SGD to SAM without loss of generality.
We initially train the model from scratch with SGD for $t$ epochs to get {\small $\boldsymbol{\theta}_{\textrm{SGD}}^{t}$}, and then switch to SAM for the remaining $T-t$ epochs to get {\small $\boldsymbol{\theta}_{\textrm{SGD}\to\textrm{SAM},t}^{T}$}.
Notably, when $t=T$ or $t=0$, the model is trained exclusively with SGD or SAM, simply denoted as {\small $\boldsymbol{\theta}_{\textrm{SGD}}^{T}$} and {\small $\boldsymbol{\theta}_{\textrm{\textrm{SAM}}}^T$}, respectively.
See \cref{fig:switch_method} for an illustration.
We fix the total number of training epochs $T$ and vary the switching point $t$ to study the effect of SAM at different training phases.
To ensure a fair comparison, we ensure the same initialization, mini-batch orders, and hyper-parameters across all models, regardless of the switching point.
Consequently, the training trajectories of models with different switching points align precisely before the switch.

\textbf{Main Experimental Setup.}
Following \citet{foret2021sharpnessaware,kwon2021asam}, we perform experiments on the commonly used image classification datasets CIFAR-10/100~\citep{krizhevsky2009learning}, with standard architectures such as WideResNet~\citep{zagoruyko2016wide}, ResNet~\citep{kaiming2016residual}, and VGG~\citep{simonyan2015vgg}.
We use the standard configurations for the basic training settings shared by SAM and SGD (e.g., learning rate, batch size, and data augmentation) as in the original papers, and set the SAM-specific perturbation radius $\rho$ to 0.05, as recommended by \citet{foret2021sharpnessaware}.
We also extend our findings to AT~\citep{madry2018towards} and ASAM~\citep{kwon2021asam}.
Due to space limits, more experimental details and results can be found in \cref{suppl:more-exp}.

\section{SAM Efficiently Selects Flatter Minima Late in Training}
\label{sec:observations}
\vspace{-7pt}

\begin{figure}[tb!]
    \centering
    \includegraphics[width=0.9182\textwidth]{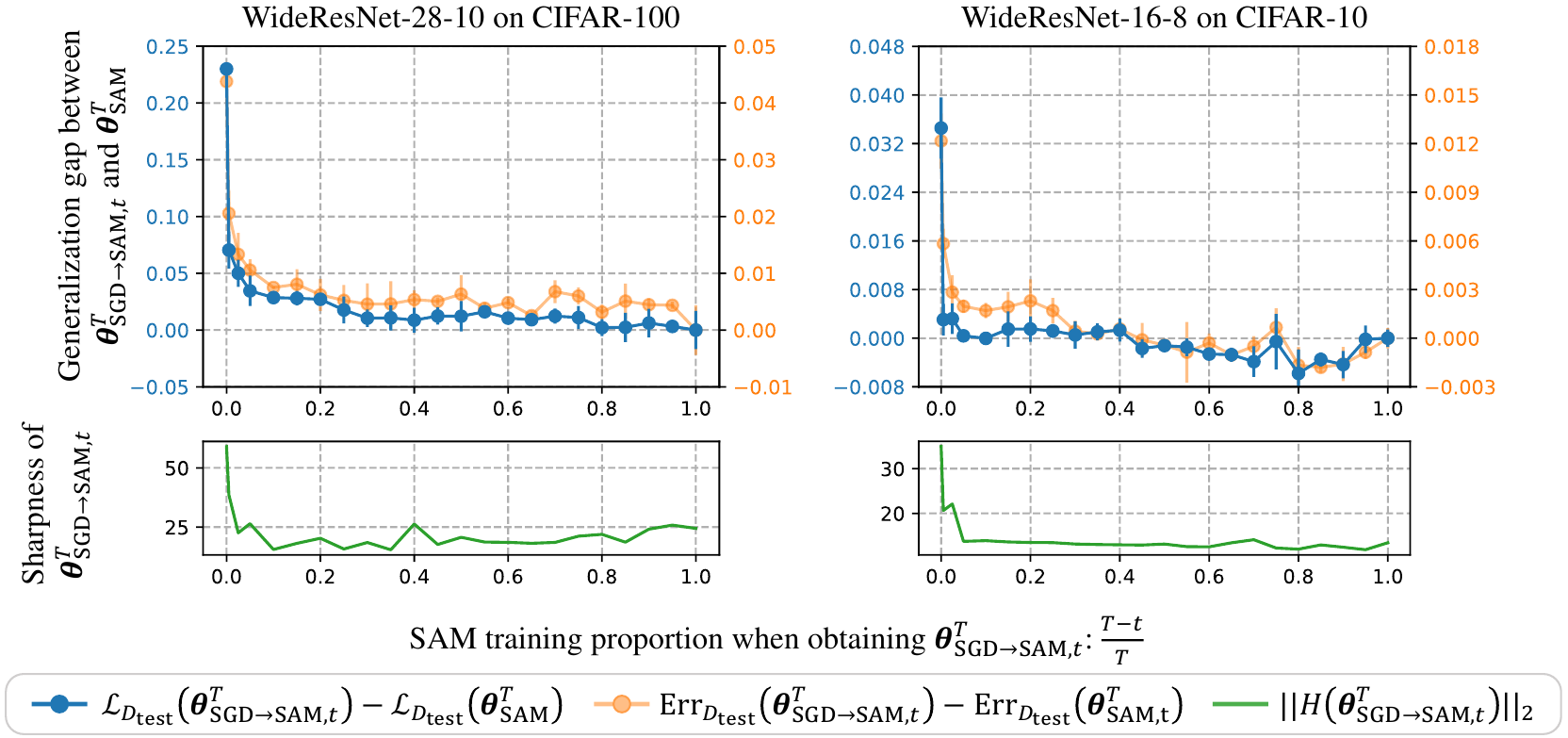}
    \caption{
    \textbf{The impact of SAM training proportion on generalization/sharpness when switching from SGD to SAM.}
    The generalization gap between the models {\small $\boldsymbol{\theta}_{\textrm{SGD} \to \textrm{SAM}, t}^T$} and {\small $\boldsymbol{\theta}_{\textrm{SAM}}^T$} \textbf{(top row)} / the sharpness of {\small $\boldsymbol{\theta}_{\textrm{SGD} \to \textrm{SAM}, t}^T$} \textbf{(bottom row)} vs. the SAM training proportion of {\small $\boldsymbol{\theta}^T_{\textrm{SGD} \to \textrm{SAM}, t}$}.
    Dots represent the mean over three trials with different random seeds, and error bars indicate standard deviations.
    Results on more datasets and architectures can be found in \cref{supp:exp_more_late_sam}.
    }
    \label{fig:SGD_to_SAM_generalization}
    \vspace{-10pt}
\end{figure}

In this section, we provide empirical evidence indicating that SAM efficiently selects flatter minima over SGD late in training.
We first show that applying SAM only during the last few epochs of training achieves generalization comparable to training entirely with SAM.
Taking a step further, we demonstrate that last-few-epochs SAM finds flatter minima as effectively as full SAM training.

\textbf{Applying SAM in the last few epochs achieves generalization comparable to full SAM training.}\newline
We conduct experiments on various datasets and architectures to investigate the effect of applying SAM late in training, after initial training with SGD, on generalization. 
Specifically, we use the switching method (detailed in \cref{sec:prelim}), which switches from SGD to SAM at epoch $t$, to obtain the model {\small $\boldsymbol{\theta}_{\textrm{SGD} \to \textrm{SAM}, t}^T$}.
For comparison, we train a baseline model entirely with SAM under the same settings, resulting in {\small $\boldsymbol{\theta}_{\textrm{SAM}}^T$}.
Both models are trained for a total of $T$ epochs.
To assess the generalization ability of {\small $\boldsymbol{\theta}_{\textrm{SGD} \to \textrm{SAM}, t}^T$} relative to {\small $\boldsymbol{\theta}_{\textrm{SAM}}^T$}, we calculated the test loss and test error gaps between these two models: {\small $\mathcal{L}_{\mathcal{D}_{\textrm{test}}}\rbracket{\boldsymbol{\theta}_{\textrm{SGD} \to \textrm{SAM}, t}^T} - \mathcal{L}_{\mathcal{D}_{\textrm{test}}}\rbracket{\boldsymbol{\theta}_{\textrm{SAM}}^T}$} and {\small $\operatorname{Err}_{\mathcal{D}_{\textrm{test}}}\rbracket{\boldsymbol{\theta}_{\textrm{SGD} \to \textrm{SAM}, t}^T} - \operatorname{Err}_{\mathcal{D}_{\textrm{test}}}\rbracket{\boldsymbol{\theta}_{\textrm{SAM}}^T}$}, where $\mathcal{D}_{\textrm{test}}$ denotes the unseen test set.
If the generalization gap between {\small $\boldsymbol{\theta}_{\textrm{SGD} \to \textrm{SAM}, t}^T$} and {\small $\boldsymbol{\theta}_{\textrm{SAM}}^T$} approaches zero, then we say {\small $\boldsymbol{\theta}_{\textrm{SGD} \to \textrm{SAM}, t}^T$} achieve nearly the same generalization performance as {\small $\boldsymbol{\theta}_{\textrm{SAM}}^T$}.

Surprisingly, we find that applying SAM only in the final few epochs can achieve nearly the same generalization performance as training entirely with SAM.
In particular, we fix the number of total training epochs $T$ and vary the switching point $t$ to adjust the SAM training proportion of {\small $\boldsymbol{\theta}_{\textrm{SGD} \to \textrm{SAM}, t}^T$}, which is exactly $(T-t)/T$. 
We analyze how the generalization gap between {\small $\boldsymbol{\theta}_{\textrm{SGD} \to \textrm{SAM}, t}^T$} and {\small $\boldsymbol{\theta}_{\textrm{SAM}}^T$} changes as the SAM training proportion increases.
In \cref{fig:SGD_to_SAM_generalization}, the generalization gap becomes negligible once the SAM training proportion exceeds zero, indicating that even a few epochs of SAM applied at the end of training yield generalization performance comparable to full SAM training. 
This phenomenon is consistently observed across various datasets and architectures.

A similar phenomenon was noted by \citet{Andriushchenko2022towardsunderstand}, but it was not given significant attention. 
We isolate and extend this phenomenon by showing that even fewer epochs of SAM (e.g. 1-5 epochs for WideResNet-16-8 on CIFAR-10) applied in the late phase of training yield test error and loss comparable to full SAM training.

\textbf{Applying SAM in the last few epochs finds flatter minima as effectively as full SAM training.}\newline
Furthermore, we investigate the effect of applying SAM in the late phase of training on the sharpness of the final solution.
Similarly as before, we analyze how the sharpness of the solution {\small $\boldsymbol{\theta}_{\textrm{SGD} \to \textrm{SAM}, t}^T$} changes w.r.t. the SAM training proportion.
We measure the sharpness along the most curved direction of the loss surface, i.e., the spectral norm of the Hessian matrix {\small 
$\norm{H(\boldsymbol{\theta})}_2$}.
In \cref{fig:SGD_to_SAM_generalization}, the sharpness of the solution {\small $\boldsymbol{\theta}_{\textrm{SGD} \to \textrm{SAM}, t}^T$} dramatically drops once the SAM training proportion is greater than zero, reflecting a similar trend to the generalization gap.
This indicates that applying SAM in the last few epochs of training is as effective as training entirely with SAM in achieving flatter minima.

In summary, we see that SAM efficiently selects flatter minima over SGD when applied late in training.
This suggests a fine-tuning scheme for SAM: start from a pretrained checkpoint and continue training with SAM for several epochs, reducing computational costs while maintaining similar performance.
In \cref{sec:understanding}, we will dig deeper into the underlying mechanism behind this phenomenon and gain new insights into SAM's implicit bias towards flatter minima.

\section{How SAM Efficiently Selects Flatter Minima Late in Training}\label{sec:understanding}
\vspace{-7pt}

We have seen that SAM efficiently selects flatter minima over SGD when applied late in training.
In this section, we aim to explore the precise mechanism behind this phenomenon.
Theoretically, we identify a two-phase picture in training dynamics after switching to SAM in the late phase, which is characterized by four key claims \textbf{(P1-4)}, as outlined in \cref{tab: two-phase picture}.
This picture demonstrates that SAM provably escapes the minimum found by SGD and converges to a flatter minimum within the same valley in just a few steps, explaining its efficiency in selecting flatter minima late in training.

\begin{table}[!ht]
    \centering
    \begin{tabular}{l|l|l}
    \hline\hline
        \multirow{2}*{\makecell[l]{\bf Phase I \\ \bf (escape)}}
        & {\bf (P1).} {\em SAM rapidly escapes from the minimum found by SGD;}
        & \bf Corollary~\ref{thm: escape exponential} \\
        & {\bf (P2).} {\em However, the iterator remains within the current valley.}
        &\bf Proposition~\ref{prop: SAM subquadratic same valley} \\\hline
        \multirow{2}*{\makecell[l]{\bf Phase II \\\bf (converge)}} 
        & {\bf (P3).} 
        {\em SAM selects a flatter minimum compared to SGD;} & \bf Theorem~\ref{thm: SAM flatter minima} \\
        & {\bf (P4).} 
        {\em The convergence rate of SAM is extremely fast.} & \bf Theorem~\ref{thm: converge exponential} \\\hline\hline
    \end{tabular}
    \caption{Overview of the two-phase picture and corresponding theoretical results.}
    \label{tab: two-phase picture}
    \vspace{-5pt}
\end{table}

\begin{figure}
    \centering
    \includegraphics[width=1\linewidth]{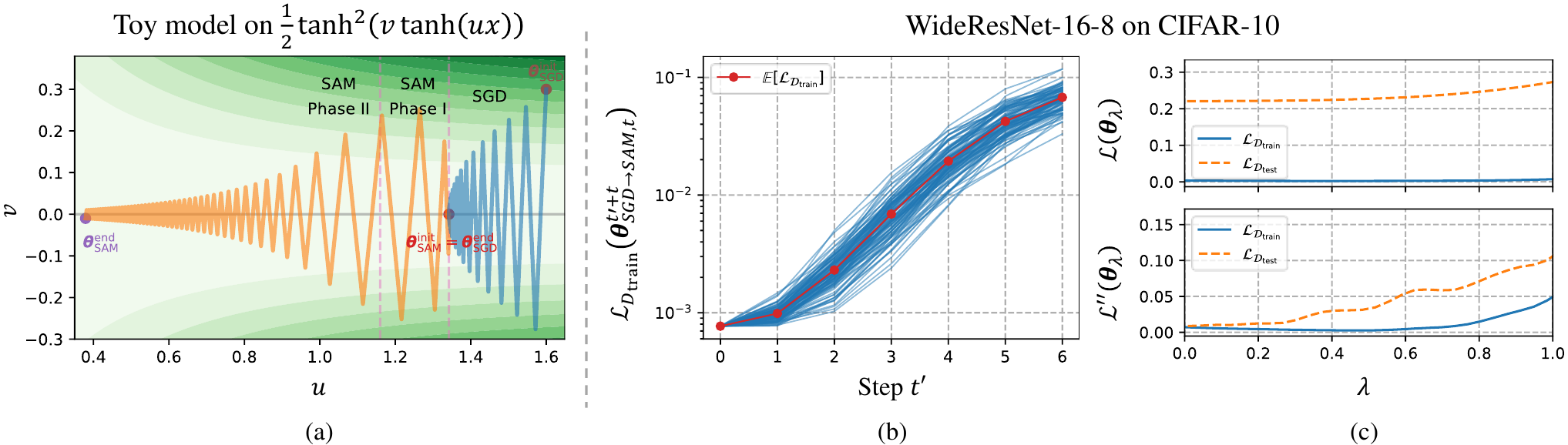}
    \caption{
    \textbf{(a) Visualization of the two-phase dynamics for \cref{example: toy}}. 
    The horizontal gray line represents the set of the global minima $\cM=\{(u,v):v=0\}$, where smaller values of $|u|$ correspond to flatter minima.
    The blue lines trace the SGD iterates, leading to $\btheta_\text{SGD}^\text{end}$, while the orange lines show the SAM iterates, which converge to a flatter minimum $\btheta_\text{SAM}^\text{end}$.
    Notably, $\btheta_\text{SGD}^\text{end}$ and $\btheta_\text{SAM}^\text{end}$ stay in the same valley around $\cM$.
    \textbf{(b) The exponentially fast escape from minima found by SGD.}
    Train loss $\cL_{\mathcal{D}_{\textrm{train}}}(\btheta_{\textrm{SGD}\to\textrm{SAM}, t}^{t+t'})$ v.s. the update step $t'$.
    Here, $t$ is the switching step, chosen to be sufficiently large for SGD to converge, and $t'$ is the number of updates after switching to SAM.
    The red line represents the mean over $100$ trials with different randomness.
    \textbf{(c) SAM converges to a flatter minimum within the same valley as the one found by SGD.}
    The loss $\cL$ \textbf{(top row)} / the second-order finite difference $\cL''$ \textbf{(bottom row)} of the interpolated model $\btheta_{\lambda}$ v.s. the interpolation coefficient $\lambda$.
    Here, $\btheta_{\lambda} = (1-\lambda) \btheta_{\textrm{SGD}\to\textrm{SAM}}^{\textrm{end}} + \lambda \btheta_{\textrm{SGD}}^{\textrm{end}}$, and $\cL''(\btheta_\lambda) = \rbracket{\cL(\btheta_{\lambda+h}) + \cL(\btheta_{\lambda-h}) - 2\cL(\btheta_{\lambda})}/(2h)^2$, where $h$ is fixed to $0.1$.
    This is no barrier in the loss curve, indicating $\btheta_{\textrm{SGD}\to\textrm{SAM}}^{\textrm{end}}$ and $\btheta_{\textrm{SGD}}^{\textrm{end}}$ stay within the same valley.
    $\cL''(\btheta_{\lambda})$ gradually increases with $\lambda$, implying $\btheta_{\textrm{SGD}\to\textrm{SAM}}^{\textrm{end}}$ is flatter than $\btheta_{\textrm{SGD}}^{\textrm{end}}$.
    }
    \label{fig:two_phase}
    \vspace{-5pt}
\end{figure}

To understand the two-phase picture, let us first consider a toy but representative example.
\begin{example}\label{example: toy}
    Consider using the shallow neural network $f(u,v;x)=\tanh(v\tanh(ux))$ to fit a single data point ($x=1,y=0$) under the squared loss $\ell(y;y')=(y-y')^2/2$, then the loss landscape can be written as $\cL(\btheta)=\frac{1}{2}\tanh^2(v\tanh(u))$, where $\btheta=(u,v)$.
\end{example}

\cref{fig:two_phase}~(a) visualizes the training dynamics for \cref{example: toy}.
In \cref{fig:two_phase}~(a), the dynamics occur within the valley surrounding the set of the minima $\cM=\{(u,v):v=0\}$. 
On $\cM$, the sharpness is given by $\norm{H(\btheta)}_2=\Tr(H(\btheta))=\|H(\btheta)\|_{\rm F}=\tanh^2(u)$, implying that the smaller the $|u|$, the flatter the minimum.
Initially, SGD converges to a relatively sharp minimum $\btheta_\text{SGD}^\text{end}$. 
After switching to SAM, the iterator exhibits the two-phase dynamics: 
\begin{itemize}[leftmargin=2em]
    \item {\bf Phase I (escape)}. SAM first escapes the sharp minimum $\btheta_\text{SGD}^\text{end}$ found by SGD within a few iterations \textbf{(P1)}, but remains within the same valley around $\cM$\footnote{Note that the landscape has another valley around a different set of minima $\cN=\{(u,v):u=0\}$} \textbf{(P2)};
    \item {\bf Phase II (converge)}. SAM then rapidly converges to another flatter minimum $\btheta_\text{SAM}^\text{end}$ \textbf{(P3-4)}, with a smaller $|u|$ compared to $\btheta_\text{SGD}^\text{end}$.
\end{itemize}

This toy example successfully illustrates the two-phase picture after switching to SAM in the late training phase, but it offers only a basic intuition.
In the following, we will formally present our theoretical results, with each claim rigorously supported by corresponding analysis.

\subsection{Theoretical guarantees for \textbf{P1} and \textbf{P3}: linear stability analysis}
\label{subsec: P1 P3 theory}
\vspace{-7pt}

In this subsection, we provide theoretical support for two key claims in our two-phase picture through a linear stability analysis: \textbf{P1}. \emph{SAM escapes from the relatively sharp minima found by SGD exponentially fast}; \textbf{P3}. \emph{SAM selects a flatter minimum compared to SGD}. 

During the late phase of training, the model remains close to some global minimum $\btheta^\star$ and can be locally approximated by its linearization:
\begin{equation}\label{equ: linearized model}
    f_{\rm lin}(\bx;\btheta)=f(\bx;\btheta^\star)+\<\nabla f(\bx;\btheta^\star),\btheta-\btheta^\star\>.
\end{equation}
Consequently, we can characterize the local dynamical stability~\citep{strogatz:2000} of the SAM iterator near the global minimum on this linearized model.

Similar to the linear stability of SGD~\citep{wu2022alignment}, we define the linear stability of SAM in \cref{def: linear stability}. 
Note that for a specific optimizer, only linearly stable minima can be selected.
\begin{definition}[Loss-based linear stability of SAM]\label{def: linear stability}
    A global minimum $\btheta^\star$ is said to be linearly stable for SAM if there exists a $C>0$ such that,
    when SAM optimizes the loss $\cL(\btheta)=\frac{1}{2n}\sum_{i=1}^n(f_{\rm lin}(\bx_i;\btheta)-y_i)^2$ on the linearized model (\cref{equ: linearized model}), the following condition holds: $\bbE[\cL(\btheta^{t})]\leq C\bbE[\cL(\btheta^0)],\forall t\geq0$. 
    Here, $\btheta^0$ could be any point near $\btheta^\star$ where the linearization is valid.
\end{definition}

\citet{wu2022alignment} has shown that the linear stability of stochastic algorithms is significantly influenced by the gradient noise $\bxi(\btheta)=\nabla\ell_{\xi}(\btheta)-\nabla\cL(\btheta)$, where $\xi$ denotes the index of a random sample. 
Thus, it is crucial to consider the structural properties of the gradient noise in our analysis.

\textbf{Gradient noise structure.}
A series of studies~\citep{zhu2018anisotropic,feng2021inverse,wu2020noisy,wu2022alignment,mori2022power,wojtowytsch2024stochastic} have established two key properties of the gradient noise $\bxi(\btheta)$: (i) its shape is highly anisotropic; (ii) its magnitude is loss dependent.
Specifically, by incorporating both the noise shape and magnitude, \citet{mori2022power} has proposed an approximation for the gradient noise covariance $\Sigma(\btheta)=\frac{1}{n}\sum_{i=1}^n\bxi(\btheta)\bxi(\btheta)^\top$, suggesting $\Sigma(\btheta)\sim 2\cL(\btheta) G(\btheta)$.

Inspired by this approximation, we assume a weaker alignment property of the noise covariance $\Sigma(\btheta)$ (see Assumption~\ref{ass:noise}) in our analysis, which has been theoretically and empirically validated in recent works~\citep{wu2022alignment,wang2023theoretical}.

\begin{assumption}[Gradient noise alignment]\label{ass:noise}
    There exists $\gamma>0$ s.t. $\frac{\Tr(\Sigma(\btheta)G(\btheta))}{2\cL(\btheta)\norm{G(\btheta)}_F^2}\geq\gamma$ for all $\btheta$.
\end{assumption}

By analyzing the linear stability of SAM under this gradient noise alignment assumption, we establish two theorems supporting the two key claims \textbf{P1} and \textbf{P3}.

\begin{theorem}[\textbf{P3}. Proof in \cref{suppl:proof_linear_stability}.]\label{thm: SAM flatter minima}
    Let $\btheta^\star$ be a global minimum that is linearly stable for SAM in~\cref{eq:update_rule_of_USAM}, and suppose Assumption~\ref{ass:noise} holds. 
    Then we have $\norm{H(\btheta^\star)}_F^2\left(1+\frac{\rho^2\gamma}{B}\norm{H(\btheta^\star)}_F^2\right)\leq\frac{B}{\eta^2\gamma}$.
\end{theorem}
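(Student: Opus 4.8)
\textbf{Proof plan for Theorem~\ref{thm: SAM flatter minima}.}

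The plan is to run a linear stability analysis of the (unnormalized) SAM iteration on the linearized model and then extract a constraint on $\norm{H(\btheta^\star)}_F$. First I would reduce the problem to the linearized model near $\btheta^\star$. On $f_{\rm lin}$, the full-batch gradient is $\nabla\cL(\btheta)=G(\btheta^\star)(\btheta-\btheta^\star)$ (since $f_{\rm lin}-y$ is affine with Jacobian $\nabla f(\bx_i;\btheta^\star)$ and $\cL$ is quadratic), and the mini-batch gradient is $\nabla\cL_{\xi_t}(\btheta)=G_{\xi_t}(\btheta-\btheta^\star)+\text{(noise)}$. Writing $\bu_t=\btheta_t-\btheta^\star$ and plugging $\btheta^t+\rho\nabla\cL_{\xi_t}(\btheta^t)$ into the outer gradient, the USAM update (\cref{eq:update_rule_of_USAM}) becomes, on the linearization, $\bu_{t+1}=(\rmI-\eta G_{\xi_t}(\rmI+\rho G_{\xi_t}))\bu_t$ up to the zero-mean noise term. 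Since on the linearized model $G(\btheta^\star)=H(\btheta^\star)$ (the residual term in the Hessian vanishes at the global minimum), all curvature quantities can be written in terms of $H:=H(\btheta^\star)$.

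The core step is to track the expected loss $\bbE[\cL(\btheta_t)]=\tfrac12\bbE[\bu_t^\top H\bu_t]$ (using $\cL(\btheta)=\tfrac12\bu^\top G(\btheta^\star)\bu$ on $f_{\rm lin}$, with $G(\btheta^\star)=H$). I would compute $\bbE[\bu_{t+1}^\top H\bu_{t+1}\mid\bu_t]$ by expanding the quadratic form: the ``mean'' part contributes $\bu_t^\top(\rmI-\eta\bbE[G_\xi(\rmI+\rho G_\xi)])H(\cdots)\bu_t$, and the ``noise'' part contributes $\eta^2\bbE[\text{noise}^\top H\,\text{noise}]$, which is controlled by $\Tr(\Sigma(\btheta_t)H)$ — and here is where Assumption~\ref{ass:noise} enters, lower-bounding $\Tr(\Sigma(\btheta_t)G(\btheta_t))\geq 2\gamma\cL(\btheta_t)\norm{G(\btheta_t)}_F^2$, with $G(\btheta_t)\approx G(\btheta^\star)=H$ on the linearization. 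The necessary condition for linear stability (boundedness of $\bbE[\cL(\btheta_t)]$ by $C\bbE[\cL(\btheta_0)]$ for all $t$) is that the effective one-step multiplier on $\bbE[\cL]$ is at most $1$. Isolating the diagonal/trace contribution (as in~\citet{wu2022alignment}'s treatment of SGD), the dominant instability comes from the noise term scaled by the curvature along the steepest direction: requiring $1-2\eta\,(\text{something}\geq 0)+\tfrac{\eta^2\gamma}{B}\norm{H(\rmI+\rho H)}_F^2\leq 1$ type inequality, which rearranges to $\norm{H(\rmI+\rho H)}_F^2\leq B/(\eta^2\gamma)$ after dividing through. Finally I would note $\norm{H(\rmI+\rho H)}_F^2=\norm{H}_F^2+2\rho\Tr(H^3)+\rho^2\norm{H^2}_F^2\geq\norm{H}_F^2+\tfrac{\rho^2}{1}\,\frac{(\Tr(H^2))^2}{\,\text{rank}\,}$—more carefully, using $\norm{H^2}_F^2=\sum\lambda_i^4\geq\tfrac1n(\sum\lambda_i^2)^2$ one loses a factor; the clean statement in the theorem uses $\norm{H^2}_F\geq \norm{H}_F^2\cdot(\text{ratio})$... so instead I would bound $\norm{H(\rmI+\rho H)}_F^2\geq \norm{H}_F^2(1+\rho^2\gamma' \norm{H}_F^2)$ via $\Tr(H^3)\geq0$ (as $H\succeq0$) and $\norm{H^2}_F^2\geq\norm{H}_F^4$ only if we replace $1/B$ bookkeeping appropriately — matching the stated $1+\tfrac{\rho^2\gamma}{B}\norm{H}_F^2$ factor requires carrying the $\gamma/B$ constant from the noise term through the rearrangement rather than from a norm inequality.

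The main obstacle I anticipate is making the linear-stability reduction rigorous while handling two approximations simultaneously: (i) replacing $G(\btheta_t)$ by $G(\btheta^\star)$ inside the noise-alignment bound (this is exact on the strictly linearized model but the statement of Assumption~\ref{ass:noise} is phrased for general $\btheta$, so I would just apply it at $\btheta_t$ and use that on $f_{\rm lin}$ we have $G(\btheta_t)\equiv G(\btheta^\star)$ identically); and (ii) extracting a scalar recursion for $\bbE[\cL]$ from the matrix recursion for $\bbE[\bu_t\bu_t^\top]$ — the subtlety is that the loss is measured by $H$ while the dynamics mix all eigendirections through $G_{\xi_t}$, so I would either diagonalize assuming the noise covariance is simultaneously approximately diagonalizable with $H$ (consistent with $\Sigma\sim 2\cL\,G$), or follow~\citet{wu2022alignment} in taking the trace against $H$ and bounding cross terms. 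The algebra identity $\norm{H(\rmI+\rho H)}_F^2=\norm{H}_F^2\big(1+\tfrac{2\rho\Tr(H^3)+\rho^2\norm{H^2}_F^2}{\norm{H}_F^2}\big)$ together with $\Tr(H^3)\geq0$ and a Cauchy–Schwarz-type lower bound $\norm{H^2}_F^2\geq \norm{H}_F^4 \cdot c$ then yields the advertised inequality; checking that the constant works out to exactly $\rho^2\gamma/B$ is the bookkeeping I would be most careful about.
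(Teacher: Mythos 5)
Your overall strategy (derive a necessary condition for loss-based linear stability by lower-bounding the one-step growth of $\bbE[\cL]$ via the noise-alignment assumption) is the right spirit, but the execution plan has a genuine gap at exactly the point you flag: where the factor $1+\tfrac{\rho^2\gamma}{B}\norm{H}_F^2$ comes from. In the paper's argument this factor is \emph{not} a deterministic curvature effect of the composite map $H(\rmI+\rho H)$; it is a noise effect of the \emph{inner ascent step}. Concretely, the paper analyzes the two half-steps separately (using the simplification, as in Appendix C.2 of \citet{Andriushchenko2022towardsunderstand}, that the inner and outer mini-batches are independent, so mean--noise cross terms vanish in expectation). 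For the ascent step, writing $G:=H(\btheta^\star)$, one expands $\bbE[\cL(\btheta_{t+1/2})]$, drops nothing negative, uses $(\rmI+\rho G)G(\rmI+\rho G)\succeq G$ for the mean part, and applies Assumption~\ref{ass:noise} at $\btheta_t$ to the term $\tfrac{\rho^2}{2B}\Tr(\Sigma(\btheta_t)G)$, giving $\bbE[\cL(\btheta_{t+1/2})]\geq\bigl(1+\tfrac{\rho^2\gamma\norm{G}_F^2}{B}\bigr)\bbE[\cL(\btheta_t)]$. For the descent step, the mean part is simply dropped (it is a nonnegative quadratic form in $G$), and Assumption~\ref{ass:noise} applied at $\btheta_{t+1/2}$ gives $\bbE[\cL(\btheta_{t+1})]\geq\tfrac{\eta^2\gamma\norm{G}_F^2}{B}\bbE[\cL(\btheta_{t+1/2})]$. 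Multiplying the two bounds and requiring the product $\leq1$ for stability yields exactly the stated inequality.

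Your route instead tries to extract the $\rho$-dependence from a deterministic quantity $\norm{H(\rmI+\rho H)}_F^2$ appearing in a single-step noise term, and then convert it to the theorem's form by a norm inequality. This fails for two reasons. First, the intermediate condition $\norm{H(\rmI+\rho H)}_F^2\leq B/(\eta^2\gamma)$ is not what Assumption~\ref{ass:noise} delivers: the assumption only controls $\Tr(\Sigma G)$ against $\cL\,\norm{G}_F^2$, and if you keep a single batch for the inner and outer gradients the fluctuations of $G_{\xi}$ enter through correlated, fourth-moment-type terms that the assumption does not cover (this is precisely what the independent-batch simplification removes). Second, even granting that condition, the conversion needs $2\rho\Tr(H^3)+\rho^2\norm{H^2}_F^2\geq\tfrac{\rho^2\gamma}{B}\norm{H}_F^4$, and the inequality you reach for goes the wrong way: for PSD $H$ one has $\norm{H^2}_F^2=\sum_i\lambda_i^4\leq\bigl(\sum_i\lambda_i^2\bigr)^2=\norm{H}_F^4$, with the ratio as small as $1/\mathrm{rank}(H)$, so the desired bound fails whenever the effective rank exceeds $B/\gamma$. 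The fix is the paper's two-stage decomposition: apply the alignment assumption twice, once per half-step, so that the $\rho^2\gamma/B$ term is inherited directly from the inner-step noise rather than manufactured from $\Tr(H^3)$ and $\norm{H^2}_F^2$.
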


\cref{thm: SAM flatter minima} characterizes the sharpness of the linearly stable minima for SAM, which is influenced by the learning rate $\eta$, the batch size $B$, and the perturbation radius $\rho$. 
Since only linearly stable minima can be selected by optimizer, \cref{thm: SAM flatter minima} characterizes the sharpness of global minima selected by SAM.
Notably, taking $\rho=0$ recovers the corresponding result for SGD (see \cref{tab: comparison SAM SGD}).

\begin{table}[!ht]
    \centering
    \begin{tabular}{c|c|c}
    \hline\hline
        & SAM (Theorem~\ref{thm: SAM flatter minima}) & SGD~\citep{wu2022alignment} \\ \hline
       Sharpness bound  & $\norm{H(\btheta^\star)}_F^2\cdot\left(1+\frac{\rho^2\gamma}{B} \norm{H(\btheta^\star)}_F^2\right)\leq\frac{B}{\eta^2\gamma}$ & $\norm{H(\btheta^\star)}_F^2\leq\frac{B}{\eta^2\gamma}$ \\ \hline\hline
    \end{tabular}
    \vspace{-.2cm}
    \caption{Comparison of the sharpness of global minima selected by SAM and SGD.}
    \label{tab: comparison SAM SGD}
    \vspace{-2pt}
\end{table}

{\bf Support for {\color{red!70!black} P3}.} 
We compare the bounds on the sharpness of the global minima selected by SAM and SGD in \cref{tab: comparison SAM SGD}.
The additional term $1+\rho^2\gamma\norm{H(\btheta^\star)}_F^2>1$ in SAM's bound imposes a stricter requirement on the sharpness for SAM compared to SGD, which supports our key claim \textbf{P3} that SAM tends to select flatter global minima over SGD.
This term arises from SAM's inner update, i.e., $\btheta^{t+1/2}=\btheta^t+\rho\nabla\ell_{\xi_t}(\btheta^t)=\btheta^t+\rho\nabla\cL(\btheta^t)+\rho(\nabla\ell_{\xi_t}(\btheta^t)-\nabla\cL(\btheta^t))$. 
Intuitively, the ascent update $\rho\nabla\cL(\btheta^t)$ tends to increase the loss, while the noise $\rho(\nabla\ell_{\xi_t}(\btheta^t)-\nabla\cL(\btheta^t))$ introduces extra stochasticity.
These factors together contribute to the flatter-minima selection by SAM.

\begin{corollary}[\textbf{P1}. Proof in \cref{suppl:proof_linear_stability}.]\label{thm: escape exponential}
    Let $\btheta^\star$ be a global minimum, and suppose Assumption~\ref{ass:noise} holds.
    If $\norm{H(\btheta^\star)}_F^2\left(1+\frac{\rho^2\gamma}{B}\norm{H(\btheta^\star)}_F^2\right)>\frac{B}{\eta^2\gamma}$, then $\btheta^\star$ is linearly non-stable for SAM in~\cref{eq:update_rule_of_USAM}. Indeed, $\bbE[\cL(\btheta^t)]\geq C^t\bbE[\cL(\btheta^0)]$ holds for all $t>0$ with $C>1$.
\end{corollary}

Corollary~\ref{thm: escape exponential}, together with \cref{thm: SAM flatter minima}, characterizes the necessary condition for a global minimum to be linearly stable for SAM, namely the stability condition of SAM.
Furthermore, Corollary~\ref{thm: escape exponential} illustrates that if a global minimum fails to meet the stability condition of SAM, the SAM iterator will escape from the minimum exponentially fast.
Notably, such escape occurs within the low-loss regions, where the linear approximation in~\cref{equ: linearized model} holds. 
Complementary analysis of dynamics in the high-loss regions is provided in~\cref{prop: SAM subquadratic same valley}.

{\bf Support for {\color{red!70!black} P1}.}
Although the sharpness bound for SGD in \cref{tab: comparison SAM SGD} is an upper bound, \citet{wu2022alignment} empirically demonstrates that the bound is nearly tight, with SGD solutions approximately satisfying $\norm{H(\btheta_\text{SGD}^\text{end})}_F^2\approx\frac{B}{\eta^2\gamma}$\footnote{$\approx$ is not used in any formal proofs. Here, it indicates empirical finding that has not been rigorously proven.}. 
Thus, with the same $\eta$ and $B$, the solution found by SGD cannot satisfy the stability condition of SAM. 
This leads to our key claim \textbf{P1} that SAM escapes from the sharp minima found by SGD exponentially fast, as validated by our experiments in \cref{fig:two_phase}~(b).

\textbf{Comparison} with other linear stability analyses of SAM~\citep{shin2023effects,behdin2023msam}.
These works have focused on the norm-based linear stability, characterized by $\bbE[\norm{\btheta^{t}-\btheta^\star}^2]\leq C\norm{\btheta^{0}-\btheta^\star}^2,\forall t\geq0$, which is stricter than our loss-based stability (see \cref{def: linear stability}) due to the presence of degenerate directions in over-parameterized models (see Lemma~\ref{lemma: norm based and loss based}). 
Another key difference is the novel use of the alignment property of gradient noise (see Assumption~\ref{ass:noise}) in our analysis, enabling us to derive a concrete bound on the sharpness of SAM's solutions.

\subsection{Theoretical Guarantee for \textbf{P2}: Beyond Local Analysis}
\label{subsec: P2 theory}
\vspace{-7pt}
In this subsection, we provide theoretical support for another key claim in our two-phase picture: \textbf{P2.} \emph{SAM remains within the current valley during escape.}

\textbf{Sub-quadratic landscape.}
In high-loss regions, the local linearization in \cref{equ: linearized model} no longer provides a valid approximation for neural networks, and consequently, the \emph{quadratic approximation} of the neural network loss landscape \emph{fails to} capture its true structure. 
Beyond the quadratic landscape, \citet{ma2022beyond} has introduced the concept of the \emph{sub-quadratic landscape}: {\em the loss grows slower than a quadratic function, and the landscape in high-loss regions is significantly flatter than in low-loss regions}.
The sub-quadratic landscape has been empirically verified in high-loss regions, and is able to explain the Edge of Stability phenomenon \citep{cohen2021gradient} (also observed in~\citet{wu2018sgd,jastrzebski2020break}), which the quadratic landscape cannot account for.

While \cref{subsec: P1 P3 theory} focuses on a local analysis near global minima, the analysis here is necessarily non-local, as SAM might iterate into high-loss regions during escape.
In this subsection, we need to show that SAM remains within the current valley during escape under a sub-quadratic landscape.

\textbf{Additional setup.} 
For simplicity, we assume a model size of $p=1$ and consider the full-batch SAM, i.e., $\theta^{t+1}=\theta^t-\eta\nabla\cL(\theta^t+\rho\nabla\cL(\theta^t))$. 
This setup is sufficient to illustrate that SAM will remain within the current valley under a sub-quadratic landscape.

\begin{definition}[Sub-quadratic landscape for $p=1$]~\label{def: sub-quadratic p=1}
Let $\theta^\star=0$ be a global minimum of $\cL$ with sharpness $\cL''(0)=a>0$. 
The valley $V=[-c,c]$ is termed {\em sub-quadratic} if for all $z,z_1,z_2\in V$,  it holds that $z\cL'(z)\geq 0$ and $\cL''(|z_1|)\geq\cL''(|z_2|)$ if $|z_1|\leq|z_2|$.
\end{definition}

Following Theorem 1 in~\citep{ma2022beyond}, we define the sub-quadratic landscape in \cref{def: sub-quadratic p=1}.
Specifically, for any $0<z_1<z_2<c$, it holds that $\cL''(z_1)>\cL''(z_2)$ and $\cL(z_1)<\cL(z_2)$, indicating that high-loss regions are flatter than low-loss regions.
The same applies for $0>z_1>z_2>-c$.

\begin{proposition}[\textbf{P2}. Proof in \cref{suppl:proof_p2}.]\label{prop: SAM subquadratic same valley}
Under Definition~\ref{def: sub-quadratic p=1}, assume the landscape is sub-quadratic in the valley $V=[-2b,2b]$.
Then, for all initialization $\theta^0\in (-b,b)$, and $\eta,\rho$ s.t. $\eta<\min\limits_{z\in V} b/|\cL'(z)|$, $\rho\leq\min\{1/a,\eta,\eta\min\limits_{0<|z|<b} |\cL'(2z)/\cL'(z)|\}$, the full-batch SAM (see \cref{eq:full_SAM}) will remain within the valley $V$, i.e., $\theta^t\in V$ for all $t\in\bbN$.
\end{proposition}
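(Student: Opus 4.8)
The plan is to show by induction on $t$ that $\theta_t \in V = [-2b, 2b]$, and more specifically that the SAM update starting from any point in $(-b,b)$ lands back in $(-b,b)$ (or at least stays in $V$). The key is to control the two-step composition: the inner ascent step $z \mapsto z + \rho\cL'(z)$ followed by the outer descent step evaluated at that perturbed point. First I would establish that for $\theta_t \in (-b,b)$, the perturbed iterate $\tilde\theta_t := \theta_t + \rho\cL'(\theta_t)$ stays in $V$ and, crucially, has the same sign as $\theta_t$ with $|\tilde\theta_t| \le 2|\theta_t| < 2b$. The sign-preservation and the factor-of-two bound should follow from $\rho \le 1/a$ together with the sub-quadratic condition: since $z\cL'(z) \ge 0$ on $V$, the ascent moves $\theta_t$ away from $0$ but not past $2|\theta_t|$ because $|\cL'(z)| \le a|z|$ on $V$ (this is the consequence of $\cL''$ being largest at $0$, i.e. $\cL''(|z|) \le \cL''(0) = a$, integrated from $0$).

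Next, with $\tilde\theta_t \in V$ in hand, the outer step gives $\theta_{t+1} = \theta_t - \eta\cL'(\tilde\theta_t)$. I would bound $|\theta_{t+1}|$ in two regimes. The condition $\eta < \min_{z\in V} b/|\cL'(z)|$ ensures $|\eta\cL'(\tilde\theta_t)| < b$, so a crude triangle inequality already gives $|\theta_{t+1}| < |\theta_t| + b < 2b$, which keeps us inside $V$ — this handles the "remain within the valley" conclusion directly once the inductive hypothesis $\theta_t \in (-b,b)$ is maintained. To close the induction at the tighter level $(-b,b)$, I would use the sign structure: $\cL'(\tilde\theta_t)$ has the same sign as $\tilde\theta_t$, hence the same sign as $\theta_t$, so the descent step moves $\theta_{t+1}$ toward $0$ relative to $\theta_t$; the only danger is overshooting past $0$ to the other side. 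The condition $\rho \le \eta \cdot \min_{0<|z|<b}|\cL'(2z)/\cL'(z)|$ is precisely what I'd use to rule out a large overshoot — it compares the magnitude of the descent correction $\eta|\cL'(\tilde\theta_t)| \ge \eta|\cL'(\theta_t)|\cdot(\text{something})$ against a controlled quantity, ensuring $\theta_{t+1}$ does not leave $(-b,b)$ on the far side. I would make this precise by a case split on whether $|\theta_t|$ is small (descent step is small, trivially stays) or moderate (use the ratio bound).

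The main obstacle I anticipate is the overshoot analysis: showing $\theta_{t+1}$ cannot jump from near $+b$ across $0$ to beyond $-b$. This requires carefully relating $\cL'(\tilde\theta_t)$ — evaluated at the *ascended* point, which can be as far as $2\theta_t$ — back to $\cL'(\theta_t)$, and the ratio condition on $\rho$ is the hypothesis that makes this work. The subtlety is that sub-quadraticity gives monotonicity of $\cL''$ but not a clean closed form, so all the bounds must be phrased as inequalities $|\cL'(z_1)| \le |\cL'(z_2)|$ for $|z_1| \le |z_2|$ (integrate $\cL''$) and lower bounds like $|\cL'(2z)| \ge$ something times $|\cL'(z)|$ that come from the explicit $\min$ in the hypothesis rather than from the abstract definition. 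I would also need to double-check the edge case where $\cL'(z) = 0$ for some $z \ne 0$ in $V$ (a degenerate flat stretch) — the hypotheses as stated with the $\min$ over ratios implicitly assume $\cL'$ doesn't vanish on $(0,b)$, or one interprets the ratio as $+\infty$ there, in which case those points are harmless. Everything else — the base case $\theta_0 \in (-b,b) \subset V$ and the bookkeeping — is routine.
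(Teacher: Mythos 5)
Your proposal is correct and follows essentially the same skeleton as the paper's argument: an inductive (hitting-time) invariant $\theta_t\in(-b,b)$, the inner-step bound $\theta_t\le\tilde\theta_t\le 2\theta_t<2b$ obtained from $\rho\le 1/a$ together with $|\cL'(z)|=\bigl|\int_0^z\cL''(s)\,ds\bigr|\le a|z|$, the outer-step bound $\eta|\cL'(\tilde\theta_t)|<b$ from the condition on $\eta$, and a case split on the sign of $\theta_t$ (with $\theta_t=0$ handled trivially). The one place where your plan deviates is the role you assign to the ratio condition $\rho\le\eta\cdot\min_{0<|z|<b}|\cL'(2z)/\cL'(z)|$: a \emph{lower} bound on the descent correction cannot rule out overshoot past $-b$; what rules it out is the \emph{upper} bound $\eta|\cL'(\tilde\theta_t)|<b$ that you already invoked, combined with $\theta_t>0$, which gives $\theta_{t+1}\in(\theta_t-b,\theta_t]\subset(-b,b)$ directly because the subtracted gradient has the same sign as $\theta_t$. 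In other words, with the update written exactly as in the proposition, $\theta_{t+1}=\theta_t-\eta\cL'(\theta_t+\rho\cL'(\theta_t))$, your induction closes using only $\rho\le 1/a$ and the $\eta$ condition, and the near-side bound $\theta_{t+1}\le\theta_t$ is immediate. The ratio condition is what the paper's own proof uses for the near-side bound, because there the iterate is written with the ascent retained, $\theta_{T+1}=\theta_{T+1/2}-\eta\cL'(\theta_{T+1/2})=\theta_T+\rho\cL'(\theta_T)-\eta\cL'(\theta_{T+1/2})$, so one must show the descent at the perturbed point at least cancels the ascent, $\rho\cL'(\theta_T)\le\eta\cL'(\theta_{T+1/2})$. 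So your anticipated ``main obstacle'' (relating $\cL'(\tilde\theta_t)$ back to $\cL'(\theta_t)$ to control overshoot) is not actually needed for the statement as given; just make sure your write-up uses the $\eta$ bound for the far side and sign-monotonicity for the near side, rather than the ratio condition. Your caveat about $\cL'$ vanishing on $(0,b)$ matches an implicit assumption in the paper and is harmless.
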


\cref{prop: SAM subquadratic same valley} shows that under a sub-quadratic landscape, even if SAM escapes the minimum $\theta^\star$ when $\eta>{2}/{a(1+a\rho)}$ and $\theta^0\approx\theta^\star$, it will still remain within the current valley\footnote{Multiple valleys exist in the settings of Proposition~\ref{prop: SAM subquadratic same valley} (see Remark~\ref{remark: prop, multiple valleys} for more details).}.
This proposition holds when $\eta<\min\limits_{z\in V} b/|\cL'(z)|$, where the upper bound $\min\limits_{z\in V} b/|\cL'(z)|$ can be sufficiently large under highly sub-quadratic landscapes (see Example~\ref{example: highly sub-quadratic}). 
Furthermore, this proposition can be extended to almost all initializations within the valley $V$. See Proposition~\ref{prop: SAM subquadratic same valley, extend init}.

{\bf Support for {\color{red!70!black}P2}.}
Proposition~\ref{prop: SAM subquadratic same valley} supports our key claim \textbf{P2} that SAM remains within the current valley during escape.
Intuitively, as shown in \cref{subsec: P1 P3 theory}, the low-loss region around the sharp minimum found by SGD cannot satisfy SAM's stability condition, causing SAM to escape toward higher-loss regions; 
however, under the sub-quadratic landscape, the high-loss regions are much flatter, which satisfies SAM's stability condition and prevents it from escaping the current valley.
The experiment in \cref{fig:two_phase}~(c) backs up this theoretical claim.

\subsection{Theoretical Guarantee for P4: Convergence Analysis}
\label{subsec: P4 theory}
\vspace{-7pt}

In this subsection, we provide theoretical support for the final claim in our two-phase picture: \textbf{P4.} \emph{The convergence rate of SAM is significantly fast}.

To establish the convergence results for stochastic optimization algorithms, it is often necessary to make assumptions about the loss landscape and the magnitude of gradient noise $\bxi(\btheta)$.

\begin{assumption}[Loss landscape: $L$-smoothness and Polyak–Łojasiewicz]\label{ass: smooth PL} 
(i) There exists $L>0$ s.t.  $\norm{H(\btheta)}_2\leq L$ for all $\btheta$; (ii) There exists $\mu>0$ s.t. $\norm{\nabla\cL(\btheta)}^2\geq2\mu\cL(\btheta)$ for all $\btheta$.
\end{assumption}

Assumption~\ref{ass: smooth PL} is quite standard and is frequently used in the non-convex optimization literature~\citep{karimi2016linear}. 
Specifically, for neural networks, if $\cL(\cdot)$ is four-times continuously differentiable, this assumption holds at least near the set of global minima of $\cL$~\citep{arora2022understanding}.

\begin{assumption}[Gradient noise magnitude]\label{ass: noise magnitude} 
There exists $\sigma>0$ s.t. $\bbE[\norm{\bxi(\btheta)}^2]\leq\sigma^2\cL(\btheta),\forall\btheta$.
\end{assumption}

The bounded variance assumption $\bbE[\norm{\bxi(\btheta)}^2]\leq\sigma^2$ is commonly employed in classical optimization theory~\citep{bubeck2015convex}.
However, as discussed in Section~\ref{subsec: P1 P3 theory}, recent studies~\citep{mori2022power,feng2021inverse,liu2021noise,wojtowytsch2024stochastic} have shown that the magnitude of SGD noise is also bounded by the loss value.
Thus, we introduce Assumption~\ref{ass: noise magnitude}: $\bbE[\norm{\bxi(\btheta)}^2]\leq\sigma^2\cL(\btheta)$.
This indicates that the noise magnitude diminishes to zero at global minima, contrasting sharply with the classical bounded variance assumption and closely aligning with real-world scenarios.

Under these assumptions, we present the convergence result for stochastic SAM in \cref{thm: converge exponential}.

\begin{theorem}[\textbf{P4}. Proof in \cref{suppl:proof_p4}.]\label{thm: converge exponential}
Under Assumption~\ref{ass: smooth PL} and~\ref{ass: noise magnitude}, let $\{\btheta^t\}_t$ be the parameters trained by SAM in~\cref{eq:update_rule_of_USAM}.
If $\eta\leq\min\{1/2L,\mu B/2L\sigma^2\}$ and $\rho\leq\min\left\{1/4L,\mu B/4 L\sigma^2,\eta\mu^2/24L^2\right\}$, then we have $\bbE\left[\cL(\btheta^{t})\right]\leq\left(1-\eta\mu/2\right)^t\cL(\btheta^{0}),\ \forall t\in\bbN$.
\end{theorem}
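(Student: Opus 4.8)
The plan is to establish a one-step descent recursion of the form $\bbE[\cL(\btheta_{t+1})\mid\btheta_t]\leq(1-\eta\mu/2)\cL(\btheta_t)$, and then iterate. The first step is to control the SAM ascent step: writing $\btheta_{t+1/2}=\btheta_t+\rho\nabla\cL_{\xi_t}(\btheta_t)$, I would use $L$-smoothness to bound how much the loss and the gradient can change between $\btheta_t$ and $\btheta_{t+1/2}$. Concretely, $\norm{\nabla\cL(\btheta_{t+1/2})-\nabla\cL(\btheta_t)}\leq L\rho\norm{\nabla\cL_{\xi_t}(\btheta_t)}$, and since the stochastic gradient norm is controlled by $\norm{\nabla\cL(\btheta_t)}+\norm{\bxi(\btheta_t)}$ with $\bbE[\norm{\bxi(\btheta_t)}^2]\leq\sigma^2\cL(\btheta_t)$ and $\norm{\nabla\cL(\btheta_t)}^2\geq 2\mu\cL(\btheta_t)$ from the PL inequality, plus $L$-smoothness giving $\norm{\nabla\cL(\btheta_t)}^2\leq 2L\cL(\btheta_t)$, everything can be expressed in terms of $\cL(\btheta_t)$.

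Next I would do the standard smoothness descent expansion on the outer step $\btheta_{t+1}=\btheta_t-\eta\nabla\cL_{\xi_t}(\btheta_{t+1/2})$:
\begin{align*}
\cL(\btheta_{t+1})\leq\cL(\btheta_t)-\eta\<\nabla\cL(\btheta_t),\nabla\cL_{\xi_t}(\btheta_{t+1/2})\>+\frac{L\eta^2}{2}\norm{\nabla\cL_{\xi_t}(\btheta_{t+1/2})}^2.
\end{align*}
Taking conditional expectation over $\xi_t$, the cross term splits as $\<\nabla\cL(\btheta_t),\nabla\cL(\btheta_{t+1/2})\>$ plus a noise term; I would add and subtract $\nabla\cL(\btheta_t)$ inside each slot, so the main piece is $-\eta\norm{\nabla\cL(\btheta_t)}^2\leq-2\eta\mu\cL(\btheta_t)$ and all remaining pieces are error terms proportional to $L\rho\cdot(\text{gradient norms})$ or to $\eta^2 L\cdot(\text{gradient-plus-noise norms squared})$. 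Using the bounds from the first step, every error term is bounded by a constant times $\cL(\btheta_t)$, with the constant being a sum of terms like $\eta L$, $\eta^2 L^2$, $L\rho$, $L^2\rho/\eta$, and $\eta\sigma^2/B$ (the $1/B$ coming from mini-batch averaging of the independent-noise variance). The stepsize and $\rho$ constraints $\eta\leq\min\{1/L,\mu B/2L\sigma^2\}$ and $\rho\leq\min\{1/L,\mu B/4L\sigma^2,\eta\mu^2/24L^2\}$ are exactly what is needed to make the total error at most $3\eta\mu/2\cdot\cL(\btheta_t)$, leaving the net coefficient $1-\eta\mu/2$. Iterating the recursion and using the tower property gives $\bbE[\cL(\btheta_t)]\leq(1-\eta\mu/2)^t\cL(\btheta_0)$.

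The main obstacle is the careful bookkeeping of the perturbation-induced error terms: unlike vanilla SGD, here the gradient is evaluated at the perturbed point $\btheta_{t+1/2}$, which depends on $\xi_t$, so $\nabla\cL_{\xi_t}(\btheta_{t+1/2})$ is a biased estimate of $\nabla\cL(\btheta_{t+1/2})$ and even the "clean" inner product $\bbE[\<\nabla\cL(\btheta_t),\nabla\cL_{\xi_t}(\btheta_{t+1/2})\>]$ is not simply $\<\nabla\cL(\btheta_t),\nabla\cL(\btheta_{t+1/2})\>$. I would handle this by a further smoothness expansion of $\nabla\cL_{\xi_t}$ around $\btheta_t$ to extract the bias term explicitly and bound it by $L\rho\bbE[\norm{\nabla\cL_{\xi_t}(\btheta_t)}^2]$, which is again $O(\cL(\btheta_t))$. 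A secondary subtlety is keeping the mini-batch factor $1/B$ tracked correctly through both the inner perturbation and the outer update so that it lands in the stated constraints; I expect no deeper difficulty beyond being careful that the noise appearing in $\btheta_{t+1/2}$ and the noise in the outer gradient are the same $\xi_t$ and must not be treated as independent.
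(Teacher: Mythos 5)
Your overall strategy (a smoothness descent step, the PL inequality, and the loss-proportional noise bound yielding a per-step contraction) is in the same spirit as the paper, but the way you handle the stochasticity leaves a genuine gap. The paper's proof explicitly adopts, at the start of the appendix, the simplification that the inner and outer gradients are computed on two \emph{independent} mini-batches $\xi_t^1,\xi_t^2$; it then splits the update into the ascent half-step $\btheta_{t+1/2}=\btheta_t+\rho\nabla\cL_{\xi_t^1}(\btheta_t)$ and a descent half-step taken from $\btheta_{t+1/2}$, and proves the two bounds $\bbE[\cL(\btheta_{t+1})]\leq\left(1-\tfrac{3\eta\mu}{4}\right)\bbE[\cL(\btheta_{t+1/2})]$ and $\bbE[\cL(\btheta_{t+1/2})]\leq\left(1+\tfrac{6\rho L^2}{\mu}\right)\bbE[\cL(\btheta_t)]$, whose product is at most $1-\eta\mu/2$ precisely because $\rho\leq\eta\mu^2/24L^2$. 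The independence is what makes this clean: conditioned on $\btheta_{t+1/2}$, the outer noise $\bxi^B(\btheta_{t+1/2})$ is mean-zero and Assumption~\ref{ass: noise magnitude} can be applied at $\btheta_{t+1/2}$, so only smoothness of the averaged loss $\cL$ is ever needed.

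You instead insist on the same batch $\xi_t$ in both the perturbation and the outer gradient and expand around $\btheta_t$, treating $\nabla\cL_{\xi_t}(\btheta_{t+1/2})$ as a biased SGD gradient. The fix you propose for the bias---a ``further smoothness expansion of $\nabla\cL_{\xi_t}$ around $\btheta_t$'' giving an error of order $L\rho\norm{\nabla\cL_{\xi_t}(\btheta_t)}$---requires Lipschitzness of the \emph{per-batch} gradient $\nabla\cL_{\xi_t}$, which Assumption~\ref{ass: smooth PL} does not supply (it bounds only the Hessian of the averaged loss $\cL$). For the same reason you cannot invoke Assumption~\ref{ass: noise magnitude} at $\btheta_{t+1/2}$: that point is a function of the very batch $\xi_t$ whose noise you are bounding, whereas the assumption is a statement about the noise averaged over batches at a fixed $\btheta$; the second-moment term $\norm{\nabla\cL_{\xi_t}(\btheta_{t+1/2})}^2$ in your descent inequality suffers from the same problem. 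So, as sketched, the argument does not close under the theorem's stated assumptions; you would need either an additional per-batch smoothness assumption, or to adopt the paper's independent-two-batch convention---in which case your one-step expansion essentially collapses back into the paper's two half-step argument. The remaining claim that the stated thresholds on $\eta$ and $\rho$ are ``exactly'' what is needed is plausible but unverified bookkeeping; in the paper those constants enter only through the elementary inequality $\left(1-\tfrac{3\eta\mu}{4}\right)\left(1+\tfrac{6\rho L^2}{\mu}\right)\leq1-\tfrac{\eta\mu}{2}$.
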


{\bf Support for {\color{red!70!black} P4}.}
\cref{thm: converge exponential} supports our key claim \textbf{P4} that the convergence rate of stochastic SAM is significantly fast, and notably faster than the previous result on SAM's convergence rate~\citep{Andriushchenko2022towardsunderstand}.

{\bf Comparison} with other convergence results for stochastic SAM~\citep{Andriushchenko2022towardsunderstand,shin2023effects}.
These works, along with our result, all study the convergence on stochastic SAM under the standard Assumption~\ref{ass: smooth PL}.
\citet{Andriushchenko2022towardsunderstand} has shown that stochastic SAM converges under a polynomially fast rate of $\cO(1/t)$, but this is much slower than our exponentially fast rate.
The difference lies in the assumptions on gradient noise magnitude: they have used the classical assumption $\bbE[\norm{\bxi(\btheta)}^2]\leq \sigma^2$, whereas we rely on the more realistic Assumption~\ref{ass: noise magnitude}.
\citet{shin2023effects} has also proved that stochastic SAM converges exponentially fast.
However, their analysis is based on an additional interpolation assumption due to over-parameterization.

\section{Is SAM Necessary in the Early Phase of Training?}
\label{sec:early_phase}
\vspace{-7pt}

Since our main discovery in \cref{sec:observations}, 
one might question the necessity of using SAM in the early phase.
In this section, we demonstrate that applying SAM during the early phase offers only marginal benefits for generalization and sharpness compared to full SGD training.
Thus, we conjecture that the optimization algorithm chosen at the end of training is more critical in shaping the final solution’s properties.
Based on this viewpoint, we extend our findings from SAM to Adversarial Training.

\begin{figure}[tb!]
    \centering
    \includegraphics[width=0.9182\textwidth]{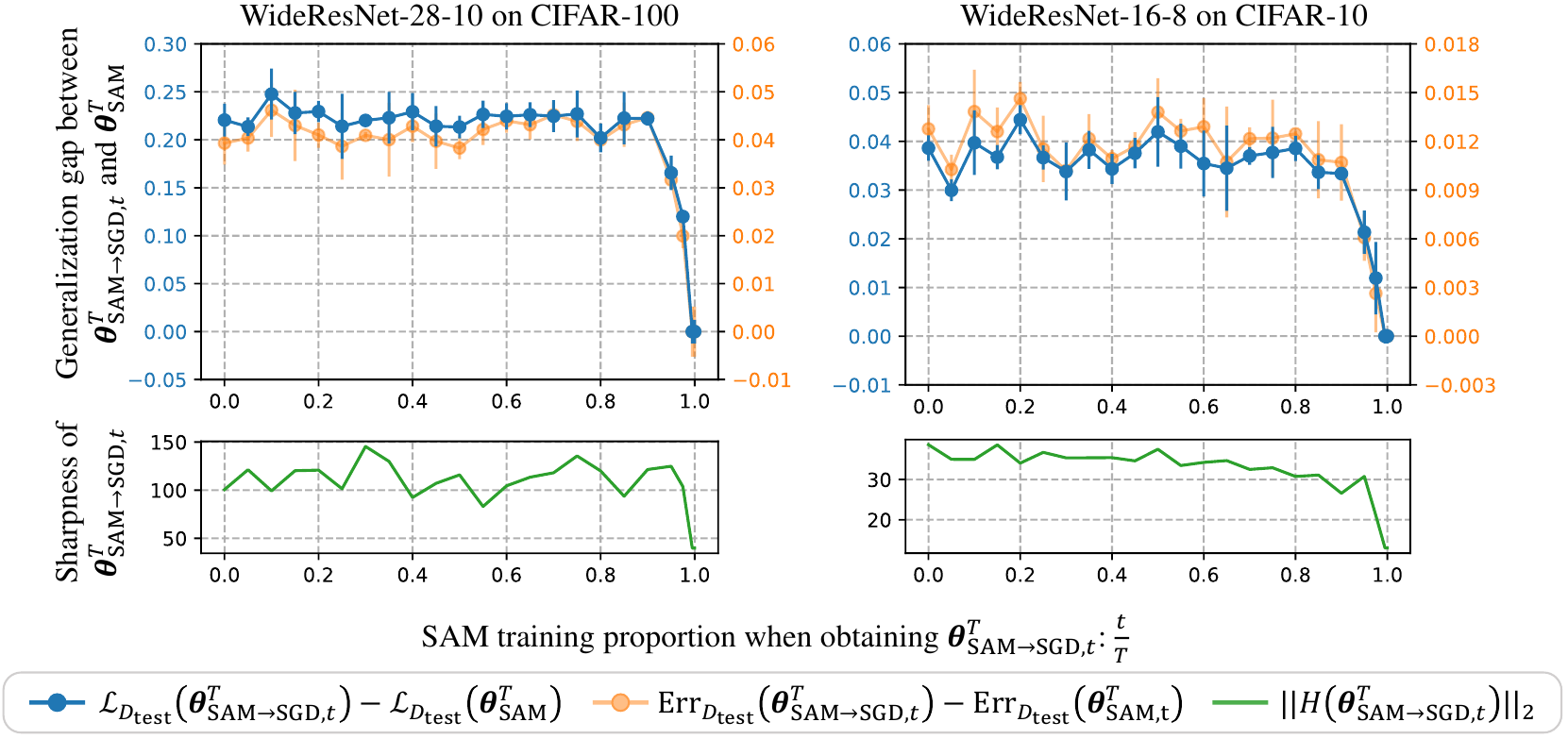}
    \caption{
    \textbf{The impact of SAM training proportion on generalization/sharpness when switching from SAM to SGD.}
    The generalization gap between the models {\small $\boldsymbol{\theta}_{\textrm{SAM} \to \textrm{SGD}, t}^T$} and {\small $\boldsymbol{\theta}_{\textrm{SAM}}^T$} \textbf{(top row)}/the sharpness of {\small $\boldsymbol{\theta}_{\textrm{SAM} \to \textrm{SGD}, t}^T$} \textbf{(bottom row)} vs. the SAM training proportion of {\small $\boldsymbol{\theta}^T_{\textrm{SAM} \to \textrm{SGD}, t}$}.
    Dots represent the mean over three trials with different random seeds, and error bars indicate standard deviations.
    Results on more datasets and architectures can be found in \cref{supp:exp_more_early_sam}.
    }
    \label{fig:SAM_to_SGD_generalization}
    \vspace{-10pt}
\end{figure}

\textbf{Early-phase SAM may offer limited improvements over SGD in generalization and sharpness.}
We conduct experiments under various settings to investigate the effect of applying SAM during the early phase of training, followed by SGD, on generalization and sharpness.
Similarly to \cref{sec:observations}, we use the switching method but switch from SAM to SGD, to obtain the model {\small $\boldsymbol{\theta}_{\textrm{SAM} \to \textrm{SGD}, t}^T$}.
In \cref{fig:SAM_to_SGD_generalization}, the generalization gap remains substantial even when trained with SAM for most of the time (e.g., $t/T=0.8$).
A similar trend is observed in the sharpness of {\small $\boldsymbol{\theta}_{\textrm{SAM} \to \textrm{SGD}, t}^T$}. 
Only full SAM training (i.e., $t/T=1$) consistently finds the flat minima.
This suggests that applying SAM before SGD could yield only limited improvements in generalization and sharpness compared to full SGD training.

Together with findings in \cref{sec:observations}, we conjecture that the optimization algorithm chosen in the late training phase is more critical in determining the final solution's properties.
However, current findings are based solely on generalization/sharpness---can we extend to other properties?

\begin{figure}[tb!]
    \centering
    \includegraphics[width=0.90\textwidth]{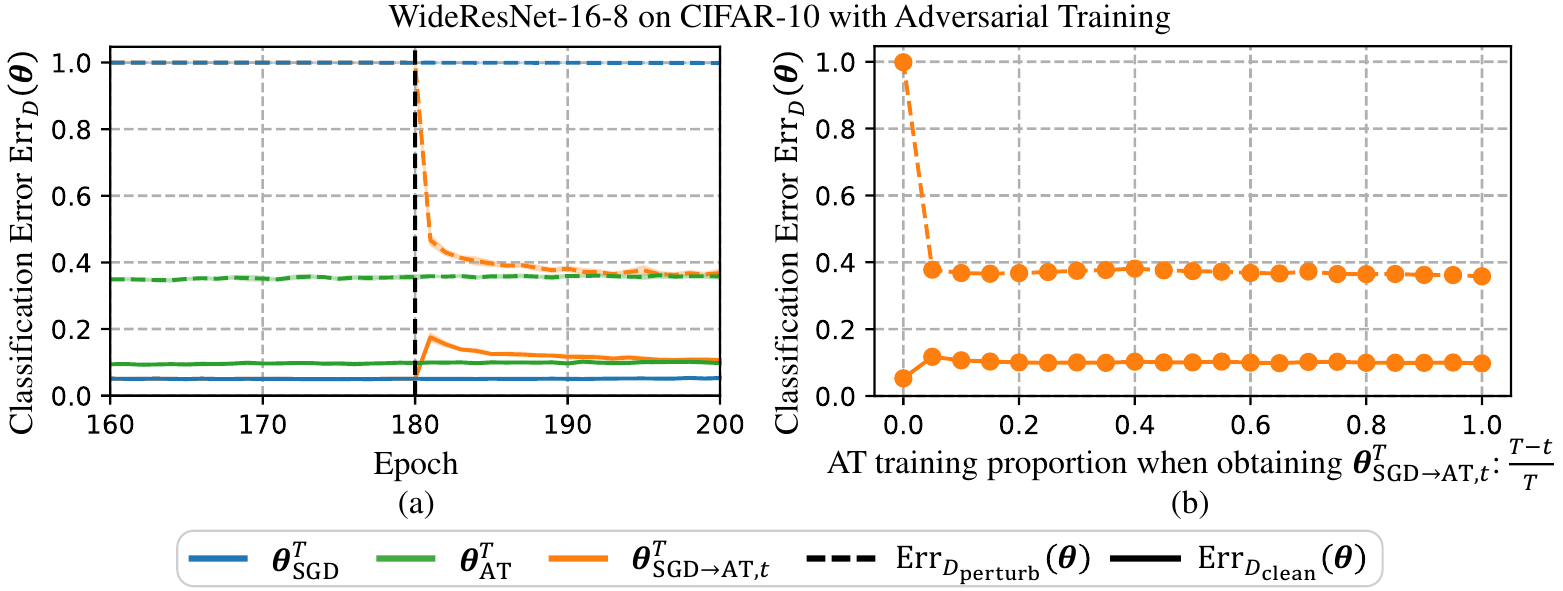}
    \caption{
    \textbf{AT improves robustness efficiently even when applied only during the final few epochs of training}.
    \textbf{(a)}
    Robust/natural error vs. training epochs for models trained with different strategies.
    \textbf{(b)}
    Robust/natural error of {\small $\boldsymbol{\theta}_{\textrm{SGD}\to\textrm{AT}}^{T}$} vs. the AT training proportion $(T-t)/T$.
    Dashed lines denotes the robust error $\operatorname{Err}_{\mathcal{D}_{\textrm{perturb}}}$, and solid lines denotes the natural error $\operatorname{Err}_{\mathcal{D}_{\textrm{clean}}}$.
    Blue and green curves represent models trained solely with SGD and AT, i.e., {\small$\boldsymbol{\theta}^{T}_{\textrm{SGD}}$} and {\small$\boldsymbol{\theta}^{T}_{\textrm{AT}}$}, respectively. 
    Orange curves indicate models which switching from SGD to AT at epoch $t$, i.e., {\small$\boldsymbol{\theta}^{T}_{\textrm{SGD} \to \textrm{AT},t}$}.
    All results are averaged over five trials with different randomness.
    }
    \label{fig:AT}
    \vspace{-10pt}
\end{figure}

\textbf{Extension: Adversarial Training efficiently finds robust minima late in training.}
Adversarial Training (AT)~\citep{madry2018towards}, in which networks are trained on adversarial examples~\citep{Szegedy2014Intriguing}, significantly improves the robustness of neural networks.
Recently, \citet{zhang2024duality} highlighted a duality between SAM and AT: while SAM perturbs the model parameters during training (see \cref{eq:obj_SAM}), AT perturbs input samples, with both methods relying on gradients for perturbations.
Here, we pose the question: can our findings on SAM be extended to AT?

Similar to SAM, we discover that applying AT only during the final few epochs yields robustness comparable to training entirely with AT.
With the switching method, which switches from SGD to AT at epoch $t$, we obtain the model {\small $\boldsymbol{\theta}_{\textrm{SGD}\to\textrm{AT},t}^T$}.
We evaluate its robustness by classification error on adversarial examples, termed robust error $\operatorname{Err}_{\mathcal{D}_{\textrm{perturb}}}(\boldsymbol{\theta})$, and its performance on clean examples, referred to as natural error $\operatorname{Err}_{\mathcal{D}_{\textrm{clean}}}(\boldsymbol{\theta})$.
In \cref{fig:AT} (a), the robust error of {\small $\boldsymbol{\theta}_{\textrm{SGD} \to \textrm{AT},t}^T$} drops sharply after switching to AT at $180$ epoch, matching that of full AT training, while the natural error only slightly spikes.
Additionally, we vary the switching point $t$ to examine how the robustness of {\small $\boldsymbol{\theta}_{\textrm{SGD} \to \textrm{AT},t}^T$} changes with the increasing proportion of AT epochs, i.e., $(T-t)/T$.
In \cref{fig:AT} (b), the robust error of {\small $\boldsymbol{\theta}_{\textrm{AT}}^T$} is comparable to that of models trained entirely with AT (i.e., when $(T-t)/T=1$) once the proportion of AT epochs exceeds $0$.
These results together confirm that AT efficiently enhances the robustness of neural networks late in training.

\section{Conclusion and Limitations}
\vspace{-7pt}
In conclusion, we discovered an intriguing phenomenon that SAM efficiently selects flatter minima over SGD when applied late in training.
We provided a theoretical explanation for this by conducting a two-phase analysis of the learning dynamics after switching to SAM in the late training phase.
Our results suggest that late training dynamics may play an important role in understanding the implicit bias of optimization algorithms.
Since we extended our findings to AT with the switching method, a natural future direction is to generalize this novel approach to other optimization algorithms.

\textbf{Limitations.}
We note that even if our theoretical setup closely reflects real settings, it still relies on several simplifications, such as assuming a constant perturbation radius $\rho$ and using squared loss.
We also note that our theory only explains the behavior of SAM in the late training phase; it cannot account for the phenomena observed for early-phase SAM.

\clearpage
\newpage

\bibliography{ref}
\bibliographystyle{iclr2025_conference}

\clearpage
\newpage

\appendix

\section{Related Work}\label{sec:related_work}

\textbf{Implicit flat-minima bias.}
Extensive studies have been dedicated to understanding the implicit bias in deep learning~\citep{vardi2023on}.
One notable example is the \emph{flat-minima bias}: SGD and its variants tend to select flat minima~\citep{keskar2017on,zhang2017understanding}, which often generalize better~\citep{hochreiter1997flat, Jiang2020Fantastic}.
Many works~\citep{ma2021linear,mulayoff2021implicit,wu2023implicit,gatmiry2023inductive,wen2023sharpness} have theoretically justified the superior generalization of flat minima in neural networks.
\citet{wu2018sgd,wu2022alignment,ma2021linear} have taken a dynamical stability perspective to explain why SGD favors flat minima; in-depth analysis of SGD dynamics near global minima shows that the SGD noise drives SGD towards flatter minima~\citep{blanc2020implicit,li2021happens,ma2022beyond,damian2021label}. 
Further studies~\citep{jastrzkebski2017three,lyu2022understanding} have explored how training components, like learning rate and normalization, influence the flat-minima bias of SGD.

\textbf{Understandings towards SAM.}
Inspired by the implicit flat-minima bias of SGD, \citet{foret2021sharpnessaware} has proposed Sharpness-Aware Minimization (SAM) algorithm.
SAM and its variants~\citep{kwon2021asam,zhuang2022surrogate,du2022efficient,liu2022towards,kim2022fisher,mueller2023normalization,becker2024momentum,wang2024improving} have achieved superior performance across a wide range of tasks~\citep{chen2022when,kaddour2022when,bahri-etal-2022-sharpness}. 
Despite their successes, their effectiveness remains poorly understood.
{\em On the optimization front}, \citet{Andriushchenko2022towardsunderstand,si2023practical,shin2023effects} have studied the convergence properties of stochastic SAM;
\citet{monzio2023sde,kim2023stability} have demonstrated that SAM requires more time to escape saddle points compared to SGD;
\citet{long2024sameos} have analyzed the edge of stability phenomenon for SAM.
{\em On the implicit bias side},
\citet{shin2023effects,behdin2023msam} have examined SAM's flat-minima bias using dynamical stability analysis;
~\citet{wen2023how,monzio2023sde,bartlett2023dynamics} have studied how SAM moves towards flat minima;
\citet{Andriushchenko2022towardsunderstand} has shown that SAM has the low-rank bias; 
and \citet{wen2023sharpness} has revealed that SAM does not only minimize sharpness to achieve better generalization.
\citet{dai2023the} has characterized the role of Normalization in SAM.
{\em In comparison, our work} focuses on the dynamics of SAM in the late training phase, a topic largely unattended in the literature, offering both theoretical and empirical analyses of SAM's convergence, escape behavior, and bias toward flatter minima.
Additionally, several studies have explored the connection between SAM and other algorithms:~\citet{zhang2024duality} explored the duality between SAM and AT;  \citet{zhu2023decentralized} demonstrated the equivalence between average-direction SAM and decentralized SGD.
Another related work is~\cite{jia2020information}, which introduces a sharpness-regularized training algorithm with a formulation similar to SAM. Notably, its ablation study demonstrates the algorithm's effectiveness when applied in the mid or late training stages.
\cite{Karakida2023GR} has analyzed the implicit bias of Gradient Regularization (GR), which shares a similar form with SAM, and showed that GR tends to select solutions in rich regimes.

\clearpage
\newpage

\section{More Experimental Details and Results}
\label{suppl:more-exp}
\vspace{-7pt}
\subsection{Detailed Experimental Settings}
\label{suppl:settings}

In this subsection, we introduce the detailed experimental settings.

In this work, we mainly compare the training dynamics of SAM and SGD.
In fact, SAM and SGD share almost identical training settings, with the only difference lying in that SAM introduces an extra hyper-parameter, perturbation radius $\rho$.
Therefore, we first outline the basic training settings for training with SGD across different network architectures and datasets.

\textbf{VGG-16 and ResNet-20 on the CIFAR-10 dataset.}
We train the VGG-16~\citep{simonyan2015vgg} and the ResNet-20 architecture~\citep{kaiming2016residual} on
the CIFAR-10 dataset~\citep{krizhevsky2009learning}, using standard data
augmentation and hyperparameters as in the original papers~\citep{simonyan2015vgg,kaiming2016residual}.
Data augmentation techniques include random horizontal flips and random
$32 \times 32$ pixel crops.
A weight decay of $5 \times 10^{-4}$ is applied, and the momentum for gradient update is set to $0.9$.
The learning rate is initialized at $0.1$ and is dropped by $10$ times at epoch $80$\footnote{Note that in the original paper, the learning rate is reduced by a factor of $10$ at epoch $120$ as well.
However, based on our theory in \cref{subsec: P1 P3 theory}, the learning rate affects the solution sharpness (see \cref{tab: comparison SAM SGD}).
To fairly compare the sharpness of solutions found by the switching method, which switches at different epochs, we remove the learning rate decay during the late phase of training.
The same applies for other architectures and datasets.
}.
The total number of training epochs is $160$.

\textbf{WideResNet-16-8 on the CIFAR-10 dataset, and WideResNet-28-10 on the CIFAR-100 dataset.}
We train the WideResNet-16-8 architecture~\citep{zagoruyko2016wide} on the CIFAR-10 dataset and the WideResNet-28-10 architecture on the CIFAR-100 dataset~\citep{krizhevsky2009learning}, following the standard settings as in the original papers~\citep{zagoruyko2016wide}.
Data augmentation techniques include random horizontal flips, random
$32 \times 32$ pixel crop, and cutout regularization~\citep{devries2017improved}.
A weight decay of $5 \times 10^{-4}$ is applied, and the momentum for gradient update is set to $0.9$.
The learning rate is initialized at $0.1$ and is dropped by $5$ times at $60$ and $120$ epochs.
The total number of training epochs is $200$.

Then, we specify the extra hyper-parameter for SAM, i.e., the perturbation radius $\rho$.

\textbf{Additional setup for SAM, ASAM and USAM.}
For SAM, the perturbation radius is set to $0.05$ across all architectures and datasets, as recommended by \citet{foret2021sharpnessaware}.
In \cref{supp:exp_asam}, our findings on SAM generalize to another SAM variant, ASAM~\citep{kwon2021asam}.
For ASAM, the perturbation radius is set to $2.0$, as recommended by \citet{kwon2021asam}.
In \cref{suppl:exp_USAM}, we further generalize our findings to USAM~\citep{Andriushchenko2022towardsunderstand} to verify the simplified SAM update rule used in our theoretical analyses.
For USAM, the perturbation radius is set to $0.1$.

In \cref{sec:early_phase}, we also extend our findings to Adversarial Training (AT).
AT shares almost the same settings as SGD as well, except that AT trains neural networks on adversarial examples generated from the original images.
Here, we specify the additional settings for creating adversarial examples.

\textbf{Additional setup for AT.}
We considered untargeted PGD attack~\citep{madry2018towards} to generate adversarial examples $\boldsymbol{x}^{\textrm{Adv}}$ from the original image $\boldsymbol{x}$, with the constraint that $\norm{\boldsymbol{x}^{\textrm{Adv}} - \boldsymbol{x}}_{\infty} \leq \epsilon$, where $\norm{\cdot}_{\infty}$ denotes the $L^{\infty}$ norm.
Following the standard configurations of \citet{madry2018towards}, we set the perturbation limit $\epsilon$ to $4/255$ and the step size for PGD attack to $2/255$.

\textbf{Additional setup for experiments on ViTs.}
In \cref{fig:suppl_SGD_to_SAM_generalization_ViT}, we train the ViT-T/S architecture~\citep{dosovitskiy2021an} on the CIFAR-100 dataset, following the settings in \citet{mueller2023normalization}.
Data augmentation techniques include random horizontal flips, random $32 \times 32$ pixel crops and AutoAugment.
The early phase optimization is done with AdamW~\cite{kingma2014adam}, with a constant learning rate $1\times 10^{-4}$.
A weight decay of $5 \times 10^{-4}$ is applied, and the batch size is set to $64$.
The late phase optimization is done with SAM, with an extra hyperparameter, the perturbation radius, set to $0.1$.
The total number of training epochs is $200$.

\subsection{More Experiments on Late-Phase SAM}
\label{supp:exp_more_late_sam}

\begin{figure}[tb!]
    \centering
    \includegraphics[width=0.9182\textwidth]{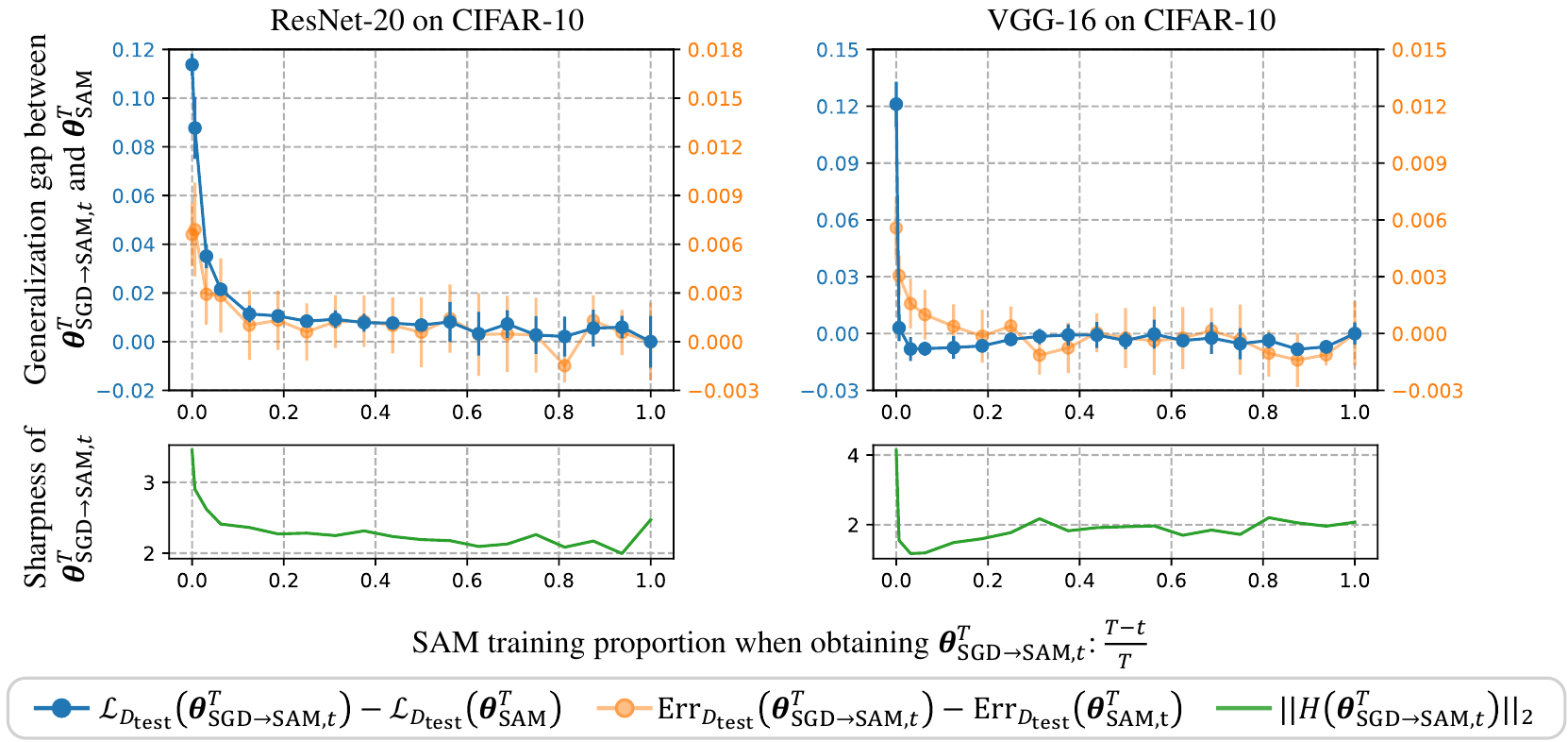}
    \caption{
    \textbf{Additional experiments for VGG-16 and ResNet-20 on CIFAR-10: the impact of SAM training proportion on generalization/sharpness when switching from SGD to SAM.}
    The generalization gap between the models {\small $\boldsymbol{\theta}_{\textrm{SGD} \to \textrm{SAM}, t}^T$} and {\small $\boldsymbol{\theta}_{\textrm{SAM}}^T$} \textbf{(top row)} / the sharpness of {\small $\boldsymbol{\theta}_{\textrm{SGD} \to \textrm{SAM}, t}^T$} \textbf{(bottom row)} vs. the SAM training proportion of {\small $\boldsymbol{\theta}^T_{\textrm{SGD} \to \textrm{SAM}, t}$}.
    Dots represent the mean over five trials with different random seeds, and error bars indicate standard deviations.
    }
    \label{fig:suppl_SGD_to_SAM_generalization}
    \vspace{-5pt}
\end{figure}

\begin{figure}[tb!]
    \centering
    \includegraphics[width=0.9182\textwidth]{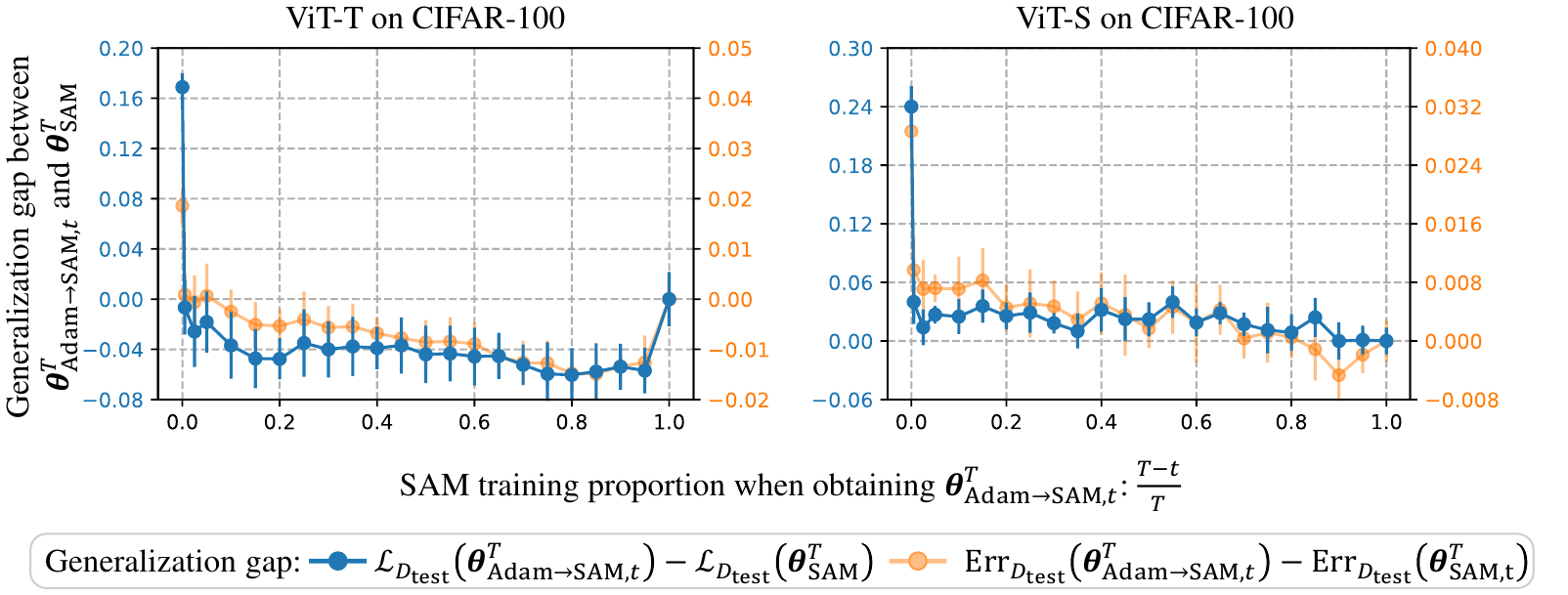}
    \caption{
    \textbf{Additional experiments for ViTs on CIFAR-100: the impact of SAM training proportion on generalization when switching from Adam to SAM.}
    The generalization gap between the models {\small $\boldsymbol{\theta}_{\textrm{Adam} \to \textrm{SAM}, t}^T$} and {\small $\boldsymbol{\theta}_{\textrm{SAM}}^T$} vs. the SAM training proportion of {\small $\boldsymbol{\theta}^T_{\textrm{Adam} \to \textrm{SAM}, t}$}.
    Dots represent the mean over five trials with different random seeds, and error bars indicate standard deviations.
    }
    \label{fig:suppl_SGD_to_SAM_generalization_ViT}
    \vspace{-5pt}
\end{figure}

\begin{figure}[tb!]
    \centering
    \includegraphics[width=0.9182\textwidth]{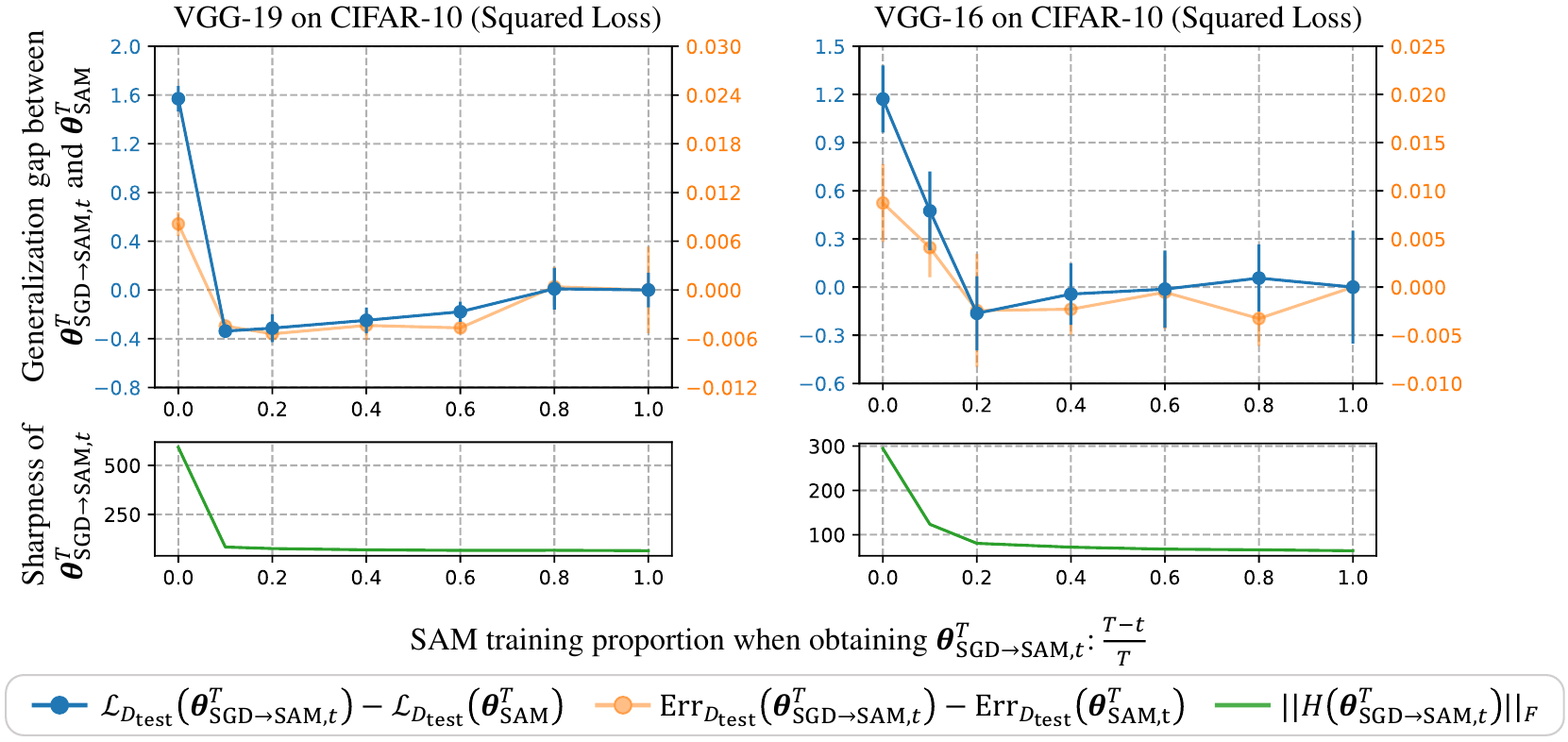}
    \caption{
    \textbf{Additional experiments using squared loss: the impact of SAM training proportion on generalization when switching from SGD to SAM.}
    The generalization gap between the models {\small $\boldsymbol{\theta}_{\textrm{SGD} \to \textrm{SAM}, t}^T$} and {\small $\boldsymbol{\theta}_{\textrm{SAM}}^T$} vs. the USAM training proportion of {\small $\boldsymbol{\theta}^T_{\textrm{SGD} \to \textrm{SAM}, t}$}.
    Dots represent the mean over three trials with different random seeds, and error bars indicate standard deviations.
    }
    \label{fig:suppl_SGD_to_SAM_generalization_sq_loss}
\end{figure}

In this subsection, we provide more experimental results under various settings to demonstrate the effect of SAM on generalization and sharpness when applied during the late phase of training.

In \cref{fig:suppl_SGD_to_SAM_generalization}, we include more experimental results for ResNet-20 and VGG-16 on the CIFAR-10 datasets.

In \cref{fig:suppl_SGD_to_SAM_generalization_ViT}, we include experiment results for ViT-T/S on the CIFAR-100 datasets.
Notice that due to an implementation limitation in PyTorch, computing the gradient twice consecutively for ViTs is not feasible.
Thus, only generalization results are presented.
Additionally, as Adam is a more common practice for training ViTs, we replace SGD with Adam before the switch.

In \cref{fig:suppl_SGD_to_SAM_generalization_sq_loss}, we include further experimental results using the squared loss.
In \cref{fig:suppl_SGD_to_SAM_generalization_sq_loss}, both the generalization gap and sharpness follow trends consistent with our previous findings in \cref{fig:SGD_to_SAM_generalization}, thereby bridging the gap between our theoretical and experimental settings.

\subsection{Extended Experiments on ASAM}
\label{supp:exp_asam}

\begin{figure}[tb!]
    \centering
    \includegraphics[width=0.9182\textwidth]{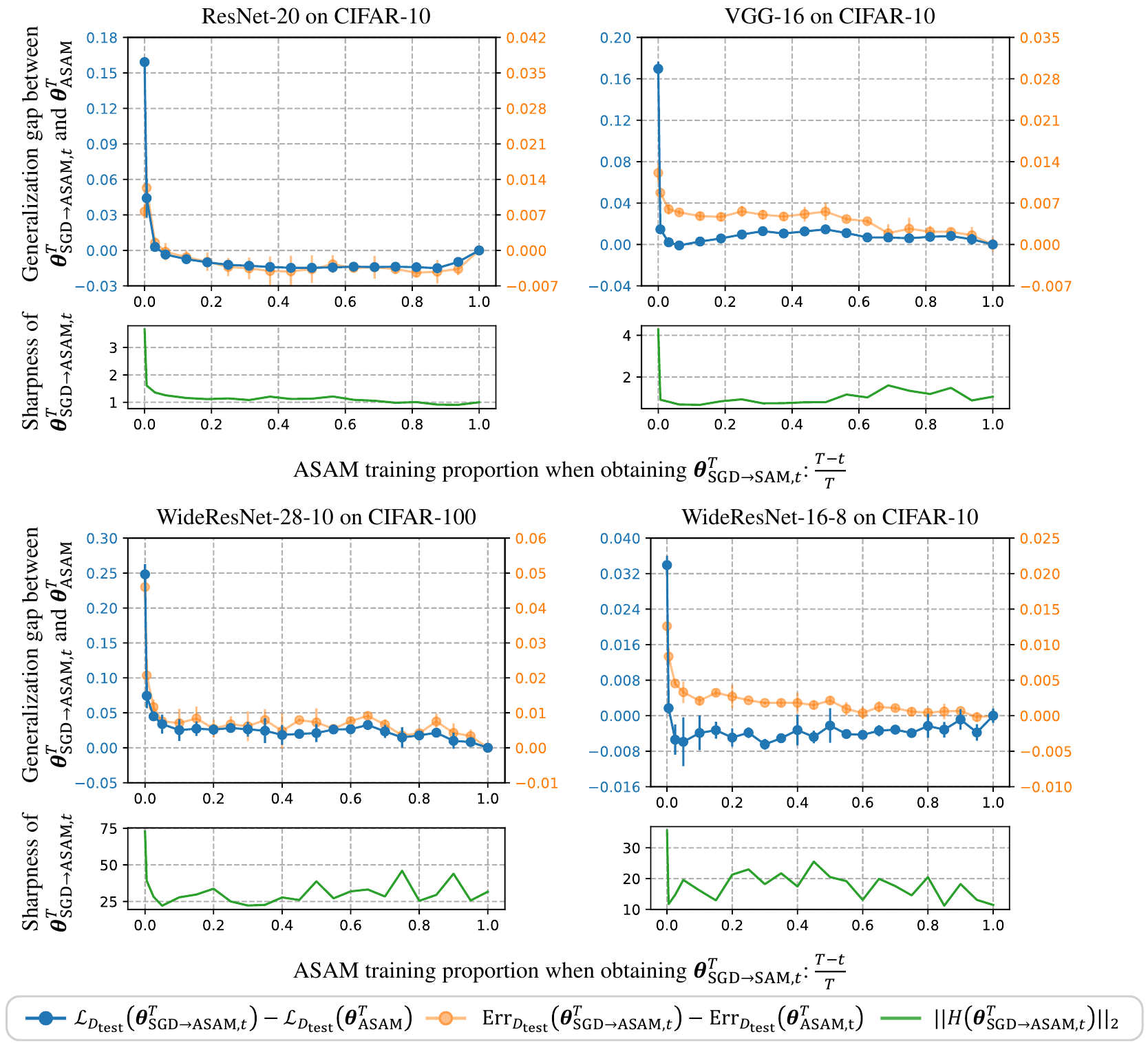}
    \caption{
    \textbf{The impact of ASAM training proportion on generalization/sharpness when switching from SGD to ASAM.}
    The generalization gap between the models {\small $\boldsymbol{\theta}_{\textrm{SGD} \to \textrm{ASAM}, t}^T$} and {\small $\boldsymbol{\theta}_{\textrm{ASAM}}^T$} \textbf{(top row)} / the sharpness of {\small $\boldsymbol{\theta}_{\textrm{SGD} \to \textrm{ASAM}, t}^T$} \textbf{(bottom row)} vs. the ASAM training proportion of {\small $\boldsymbol{\theta}^T_{\textrm{SGD} \to \textrm{ASAM}, t}$}.
    Dots represent the mean over three trials with different random seeds, and error bars indicate standard deviations.
    }
    \label{fig:suppl_SGD_to_ASAM_generalization}
    \vspace{-5pt}
\end{figure}

In this subsection, we generalize our findings on SAM to another SAM variant, ASAM~\citep{kwon2021asam}.
Similar to SAM, we find that applying ASAM during the late training phase, after initial training with SGD, achieves generalization ability and solution sharpness comparable to full ASAM training.
Still, we apply the switching method to investigate the effect of the late-phase ASAM across various architectures and datasets.
We use the switching method, which switches from SGD to ASAM, to obtain the model {\small $\boldsymbol{\theta}_{\textrm{SGD} \to \textrm{ASAM}, t}^T$}.
We also train a baseline model, i.e., {\small $\boldsymbol{\theta}_{\textrm{SGD} \to \textrm{SAM}, t}^T$}.
We vary the switching point $t$ while keeping $T$ fixed to adjust the proportion of ASAM training and study its impact on the generalization gap between {\small $\boldsymbol{\theta}_{\textrm{ASAM}}^T$} and  {\small $\boldsymbol{\theta}_{\textrm{SGD} \to \textrm{ASAM}, t}^T$}, as well as the sharpness of {\small $\boldsymbol{\theta}_{\textrm{SGD} \to \textrm{ASAM}, t}^T$}.
As shown in \cref{fig:suppl_SGD_to_ASAM_generalization}, both the generalization gap and the sharpness drop once the ASAM training proportion exceeds zero, implying that ASAM, even when applied only during the last few epochs, achieves comparable generalization performance and solution sharpness as training entirely with ASAM.
These results further highlight the importance of studying the implicit bias of different optimization algorithms towards the end of training.

\subsection{Extended Experiments on USAM}\label{suppl:exp_USAM}

\begin{figure}[tb!]
    \centering
    \includegraphics[width=0.9182\textwidth]{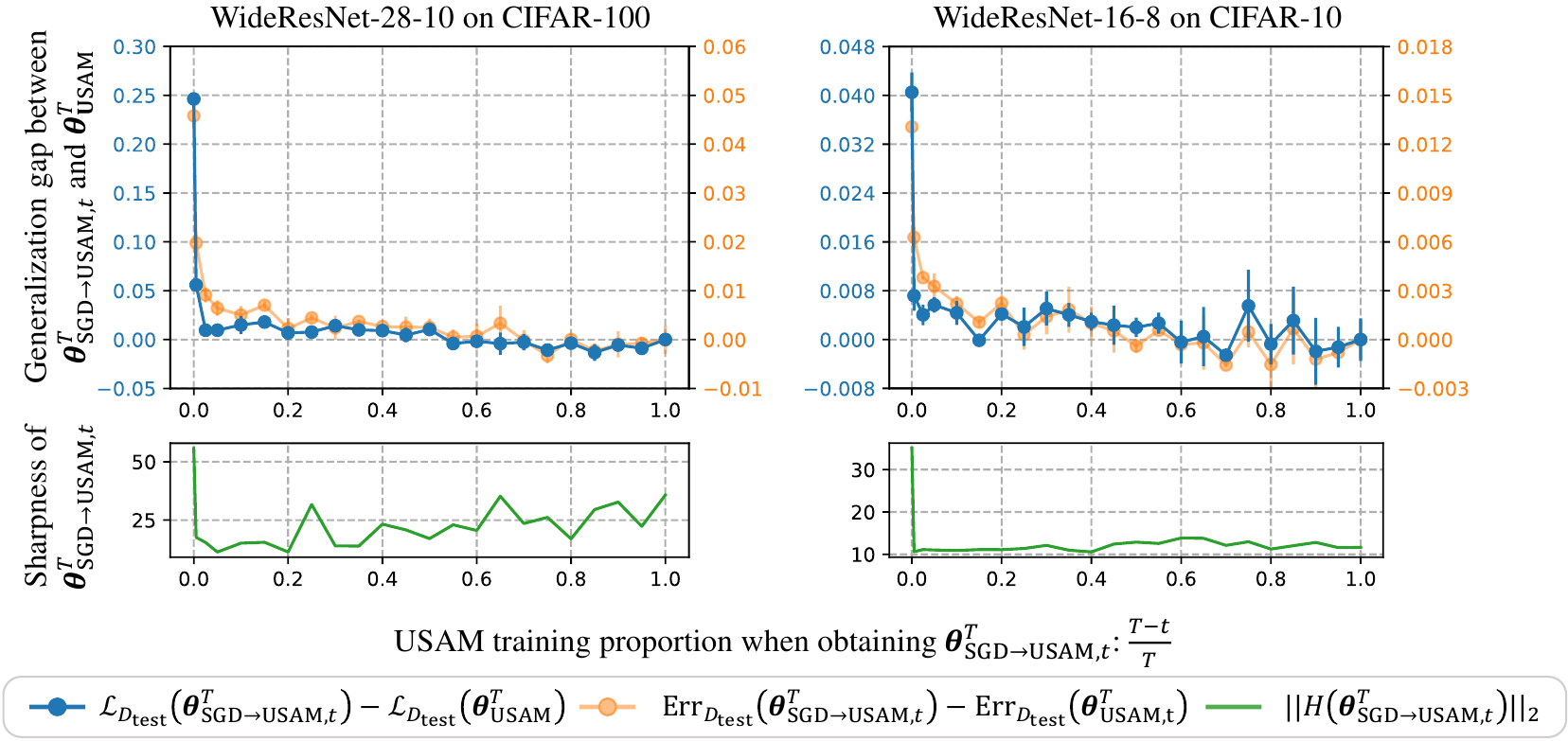}
    \caption{
    \textbf{The impact of USAM training proportion on generalization when switching from SGD to USAM.}
    The generalization gap between the models {\small $\boldsymbol{\theta}_{\textrm{SGD} \to \textrm{USAM}, t}^T$} and {\small $\boldsymbol{\theta}_{\textrm{USAM}}^T$} vs. the USAM training proportion of {\small $\boldsymbol{\theta}^T_{\textrm{SGD} \to \textrm{USAM}, t}$}.
    Dots represent the mean over three trials with different random seeds, and error bars indicate standard deviations.
    }
    \label{fig:suppl_SGD_to_SAM_generalization_USAM}
    \vspace{-5pt}
\end{figure}

In this subsection, we generalize our findings on SAM to another SAM variant, USAM~\citep{Andriushchenko2022towardsunderstand}, to further justify the simplification made on the SAM update rule in our theoretical analysis.
With a small modification to \cref{eq:update_rule_of_SAM}, the update rule of USAM with stochastic gradients is: 
\begin{align}
    \boldsymbol{\theta}^{t+1} = \boldsymbol{\theta}^{t} - \eta \nabla \mathcal{L}_{\xi_t}\rbracket{\boldsymbol{\theta}^{t} + \rho \nabla \mathcal{L}_{\xi_t}(\boldsymbol{\theta}^{t})}.
    \label{eq:update_rule_of_USAM_in_exp}
\end{align}

Consistently with the main paper, we measure how the generalization gap between $\boldsymbol{\theta}_{{\rm SGD \to USAM}, t}^T$ and $\boldsymbol{\theta}_{{\rm USAM}}^T$, as well as the sharpness of $\boldsymbol{\theta}_{{\rm SGD \to USAM}, t}^T$, changes as the USAM training proportion $\frac{T-t}{T}$ increases. 
In \cref{fig:suppl_SGD_to_SAM_generalization_USAM}, both the generalization gap and sharpness drop sharply once the USAM training proportion exceeds zero. 
This finding validates the effectiveness of USAM in capturing the main behavior of SAM, further bridging the gap between our theoretical results and empirical findings.

\subsection{More Experiments on Early-Phase SAM}
\label{supp:exp_more_early_sam}

\begin{figure}[tb!]
    \centering
    \includegraphics[width=0.9182\textwidth]{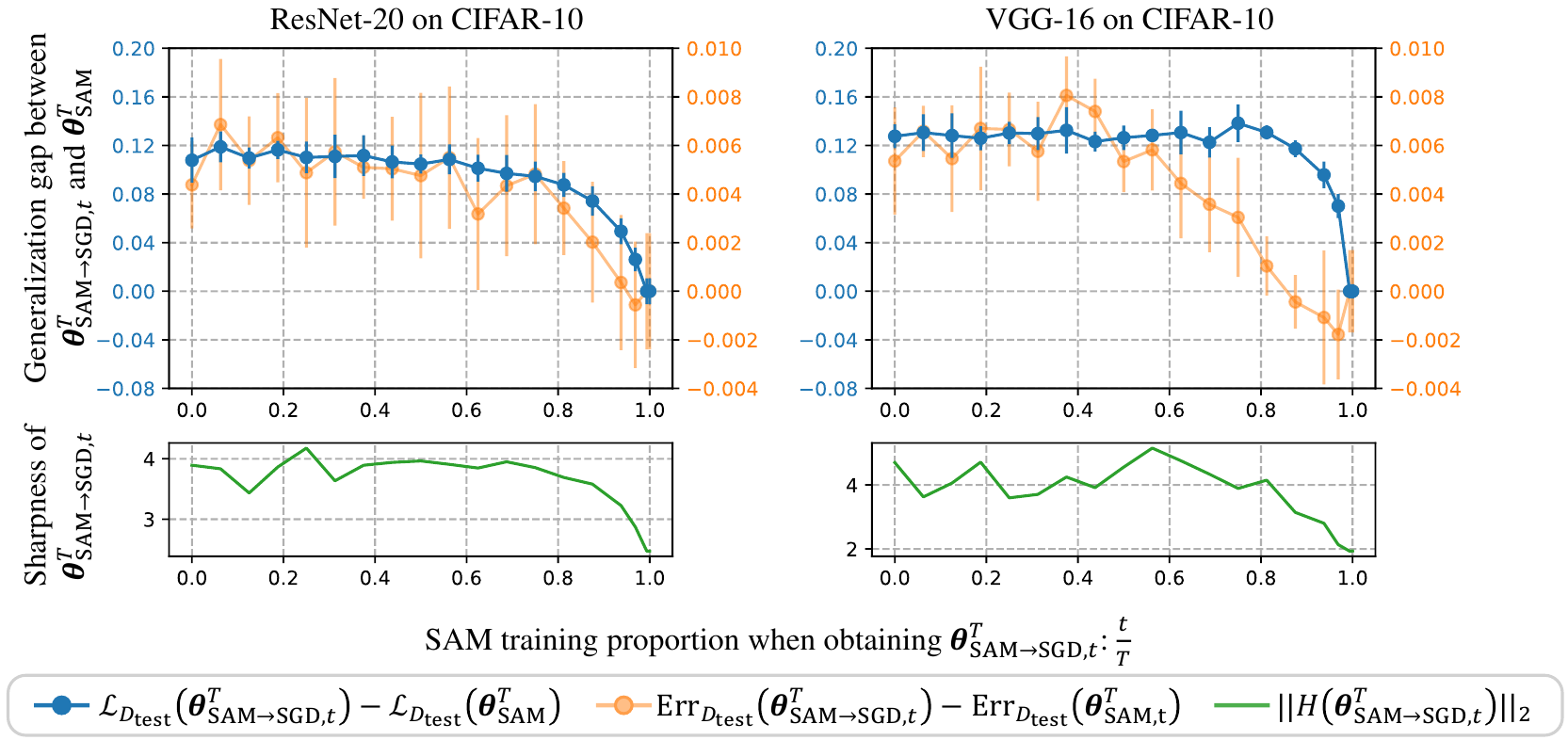}
    \caption{
    \textbf{Additional experiments for VGG-16 and ResNet-20 on CIFAR-10: the impact of SAM training proportion on generalization/sharpness when switching from SAM to SGD.}
    The generalization gap between the models {\small $\boldsymbol{\theta}_{\textrm{SAM} \to \textrm{SGD}, t}^T$} and {\small $\boldsymbol{\theta}_{\textrm{SAM}}^T$} \textbf{(top row)}/the sharpness of {\small $\boldsymbol{\theta}_{\textrm{SAM} \to \textrm{SGD}, t}^T$} \textbf{(bottom row)} vs. the SAM training proportion of {\small $\boldsymbol{\theta}^T_{\textrm{SAM} \to \textrm{SGD}, t}$}.
    Dots represent the mean over five trials with different random seeds, and error bars indicate standard deviations.
    }
    \label{fig:suppl_SAM_to_SGD_generalization}
    \vspace{-5pt}
\end{figure}

\begin{figure}[tb!]
    \centering
    \includegraphics[width=0.4335\textwidth]{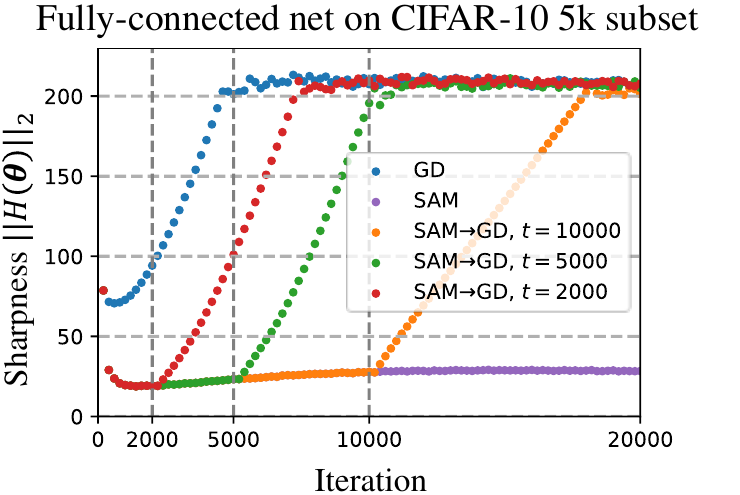}
    \caption{
    \textbf{The impact of (early-phase) SAM on the sharpness evolution.}
    The sharpness evolution curves for fully-connected nets on CIFAR-10 5k subset trained with different strategies.
    The blue and purple baseline curves represents training entirely with GD and (full-batch) SAM, respectively.
    The red, green and orange curves represent the switching method, which switches from GD to (full-batch) SAM at different switching point $t$, with $t\in \{2000, 5000, 10000\}$. 
    A constant learning rate $\eta = 0.01$ ($2/\eta = 200$) is used for both GD and (full-batch) SAM.
    }
    \label{fig:edge_of_stability}
    \vspace{-5pt}
\end{figure}

In this subsection, we provide more experimental results to demonstrate the effect of SAM on generalization and sharpness when applied during the early phase of training.

\textbf{The effect of early-phase SAM on the generalization ability.}
In \cref{fig:suppl_SAM_to_SGD_generalization}, we include more experimental results for ResNet-20 and VGG-16 on the CIFAR-10 datasets.
Similar to our results in \cref{fig:SAM_to_SGD_generalization}, the generalization gap between {\small $\boldsymbol{\theta}_{\textrm{SAM} \to \textrm{SGD}, t}^T$} and {\small $\boldsymbol{\theta}_{\textrm{SAM}}^T$} remains substantial until the SAM training proportion exceeds $0.6$. 
The same applies for the sharpness of {\small $\boldsymbol{\theta}_{\textrm{SAM} \to \textrm{SGD}, t}^T$}.
These results further validate that applying SAM only during the early phase offers limited improvements on generalization and sharpness of the final solution compared to full SAM training.

\textbf{The effect of early-phase SAM on the sharpness evolution.}
Other than focusing on the sharpness of the final solution, we are also interested in the effect of early-phase SAM on the evolution process of the sharpness.

Furthermore, we explore how sharpness at the iterator evolves when switching from SAM to (S)GD.
Recent works~\citep{cohen2021gradient,kwangjun2022unstable} have shown that when training neural networks with Gradient Descent (GD), the sharpness $\norm{H(\boldsymbol{\theta})}_2$ tends to increase monotonically until it arrives at $2/\eta$, where $\eta$ is the learning rate.
This is commonly known as the \emph{Edge of Stability (EoS)} phenomenon\footnote{Despite \citet{long2024sameos} noted an ``edge of stability'' for SAM, here, the Edge of Stability phenomenon refers specifically to GD.}.
Specifically, we follow the setup of \citet{cohen2021gradient}, measuring sharpness evolution for models trained with the switching method, which switches from (full-batch) SAM\footnote{In this experiment, we adopt the SAM with full-batch gradient for fair comparison with GD.} to GD, as well as models trained entirely with GD and SAM.
In \cref{fig:edge_of_stability}, we observe the EoS phenomenon for models trained entirely with GD. 
However, for models trained entirely with SAM, sharpness initially decreases, then slightly increases before converging to a value much smaller than $2/\eta$.
As for models trained using the switching method, sharpness evolution mirrors that of full SAM training up to the switching point $t$.
After $t$, the sharpness rapidly increases and then oscillates around $2/\eta$, regardless of how late the switch occurs.
This implies that early-phase SAM cannot prevent the emergence of the EoS phenomenon.
Once switching to GD, the iterator immediately escapes the flat minima found by SAM, resulting in relatively sharper minima.
Together with our findings in \cref{fig:SAM_to_SGD_generalization,fig:suppl_SAM_to_SGD_generalization}, we firmly say that early-phase SAM has a limited impact on both generalization and sharpness of the final solution.

\subsection{Other Experiments}
\label{suppl:other_exp}

\begin{figure}[tb!]
    \centering
    \includegraphics[width=1\textwidth]{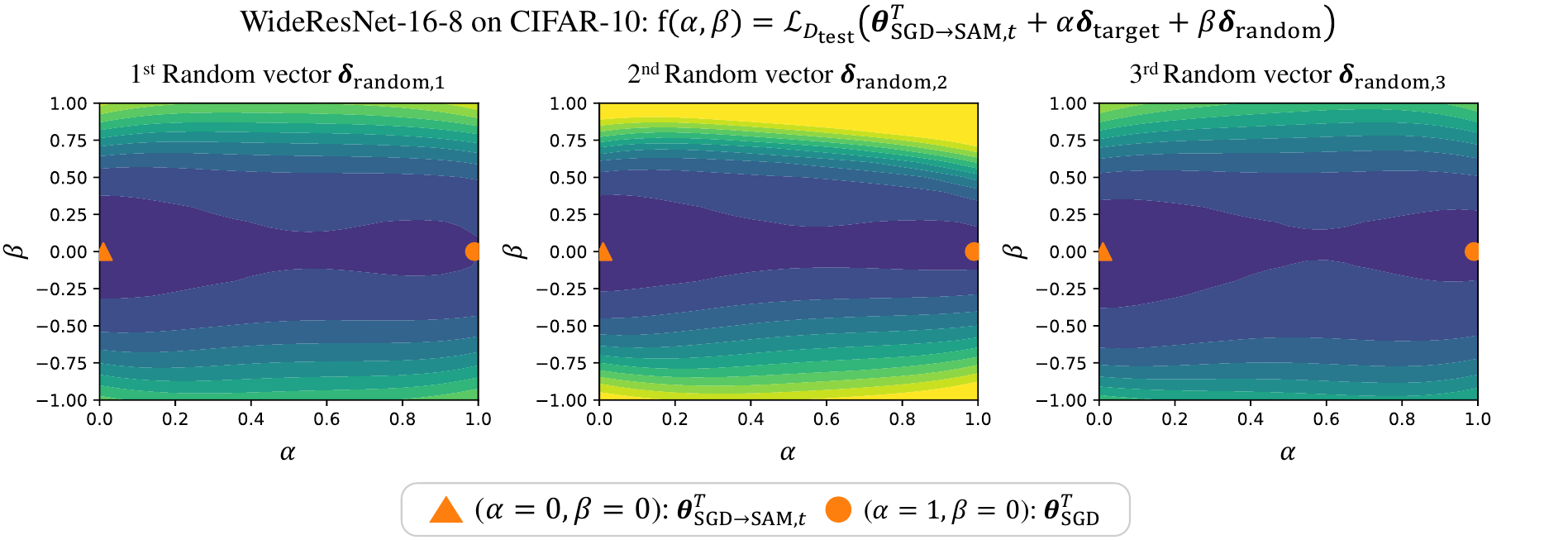}
    \caption{
    \textbf{A 2D visualization of the loss landscape around SGD and SGD-to-SAM solutions.}
    The orange dot ($\alpha = 1, \beta = 0$) denotes the SGD solution, and the orange triangle denotes ($\alpha = 0, \beta = 0$) denotes the solution obtained by switching from SGD to SAM.
    The contour plot represents the loss landscape, with darker regions indicating lower loss.
    The random vectors are sampled three times.
    }
    \label{fig:suppl_2D}
    \vspace{-5pt}
\end{figure}

\textbf{A 2D visualization of loss landscape}
Other than the 1D interpolation in \cref{fig:two_phase}~(c), we provide 2D visualization of the loss landscape, using the approach proposed by \citet{li2018visualizing}. 
In this setup, we center the visualization at $\boldsymbol{\theta}_{{\rm SGD \to SAM}, t}^T$ and select two directions: $\boldsymbol{\delta}_{\rm target}$ and $\boldsymbol{\delta}_{\rm random}$. 
Specifically, $\boldsymbol{\delta}_{\rm target} = \boldsymbol{\theta}_{{\rm SGD}}^T - \boldsymbol{\theta}_{{\rm SGD \to SAM}, t}^T$, pointing towards the SGD solution, and $\boldsymbol{\delta}_{\rm random}$ is a random direction sampled from $\mathcal{N}(0, \boldsymbol{I}_p)$ , with layer-wise normalization~\citep{li2018visualizing} applied. 
We then plot the loss landscape as a function $f(\alpha, \beta) = \mathcal{L}(\boldsymbol{\theta}_{{\rm SGD \to SAM}, t}^T + \alpha \boldsymbol{\delta}_{\rm target} + \beta \boldsymbol{\delta}_{\rm random})$. 
In \cref{fig:suppl_2D}, it is clear that $\boldsymbol{\theta}_{{\rm SGD \to SAM}, t}^T$ and $\boldsymbol{\theta}_{{\rm SGD}}^T$ stay within the same valley. 
Moreover, the contours around $\boldsymbol{\theta}_{{\rm SGD \to SAM}, t}^T$ are wider compared to $\boldsymbol{\theta}_{{\rm SGD}}^T$, indicating $\boldsymbol{\theta}_{{\rm SGD \to SAM}, t}^T$ corresponds to a flatter minimum. 
This two-dimensional approach provides additional insights into the structure of the loss landscape and further supports our two key claims \textbf{P2} and \textbf{P3}.

\begin{figure}[tb!]
    \centering
    \includegraphics[width=0.922\textwidth]{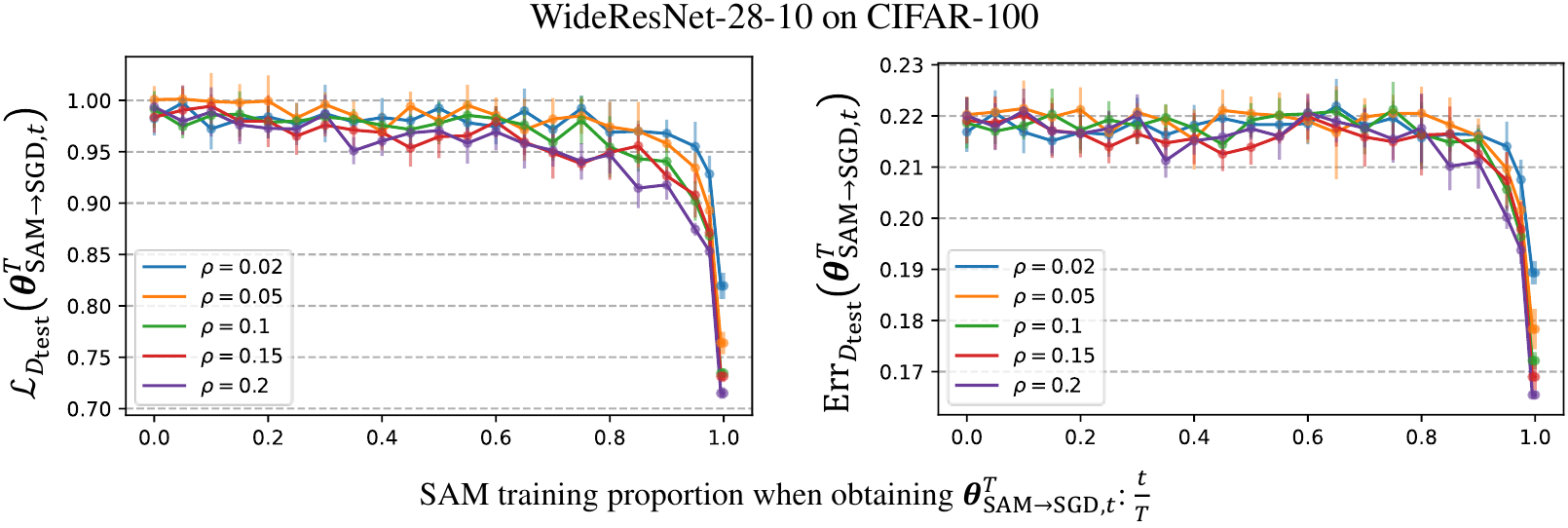}
    \caption{
    \textbf{The impact of perturbation radius on generalization when switching from SAM to SGD.}
    The test loss \textbf{(left)} / test error (right) of the model {\small $\boldsymbol{\theta}_{\textrm{SAM} \to \textrm{SGD}, t}^T$} vs. the SAM training proportion.
    Different perturbation radius $\rho \in \{0.02, 0.05, 0.1, 0.15, 0.2\}$ are chosen for SAM.
    Dots represent the mean over five trials with different random seeds, and error bars indicate standard deviations.
    }
    \label{fig:suppl_radius}
    \vspace{-5pt}
\end{figure}

\textbf{The effect of perturbation radius on generalization when switching from SAM to SGD}
In our theory (see \cref{tab: comparison SAM SGD}), the perturbation radius $\rho$ influences the sharpness of the global minima selected by SAM. 
Specifically, a larger perturbation radius leads to the selection of flatter global minima. 
In contrast, SGD's behavior is independent of the perturbation radius. 
However, as discussed in our conjecture in \cref{sec:early_phase}, the properties of the final solutions are primarily shaped by the optimization method chosen in the late training phase. 
Consequently, when switching from SAM to SGD, the generalization properties of the final solution are expected to be independent of the perturbation radius, as SGD dominates in the final stage of training.

To further explore this, we have conducted experiments to investigate the effect of perturbation radius on generalization when switching from SAM to SGD. 
Specifically, we varied the perturbation radius $\rho \in \{0.02, 0.05, 0.1, 0.15, 0.2\}$ for SAM and measured how the test loss and error of $\boldsymbol{\theta}_{{\rm SAM \to SGD}, t}^T$ change as the SAM training proportion $\frac{t}{T}$ increases. 
In \cref{fig:suppl_radius}, we observe no significant differences in test loss or error across different $\rho$ values, even when training with SAM for most of the time (e.g., $\frac{t}{T} = 0.8$). 
Notably, only full SAM training (i.e., $\frac{t}{T}=1$) shows a pronounced dependence on $\rho$, where a larger perturbation radius leads to better generalization. 
These findings align with our expectation that the perturbation radius has minimal impact on generalization when transitioning from SAM to SGD, further validating our theoretical results  in \cref{sec:understanding} and conjecture in \cref{sec:early_phase}. 

\begin{figure}[tb!]
    \centering
    \includegraphics[width=0.7878\textwidth]{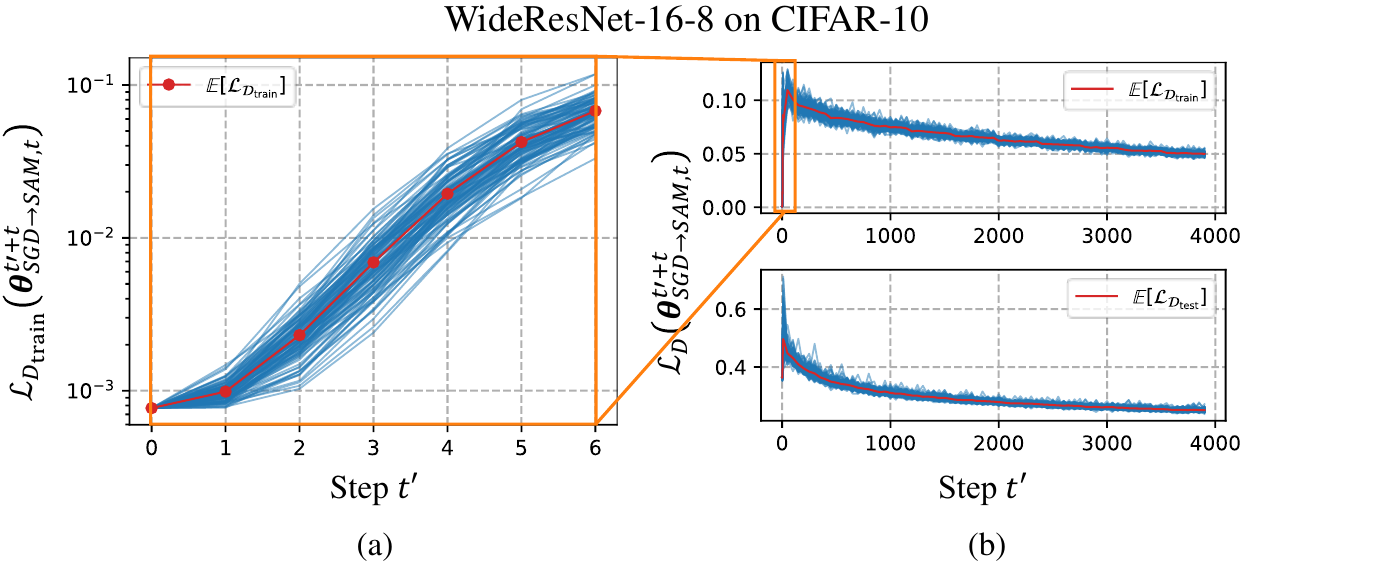}
    \caption{
    \textbf{Extending \cref{fig:two_phase}~(c) to include more update steps.}
    Train/test loss $\cL_{\mathcal{D}}(\btheta_{\textrm{SGD}\to\textrm{SAM}, t}^{t+t'})$ v.s. the update step $t'$.
    Here, $t$ is the switching step, chosen to be sufficiently large for SGD to converge, and $t'$ is the number of updates after switching to SAM.
    The red line represents the mean over $100$ trials with different randomness.
    }
    \label{fig:suppl_more_steps}
    \vspace{-5pt}
\end{figure}

\textbf{Extending \cref{fig:two_phase}~(c) to include more update steps.}
In \cref{fig:suppl_more_steps}, we observe that the training loss initially increases exponentially fast and then gradually decreases, consistent with our theoretical predictions.

\clearpage
\newpage 

\section{Proofs in Section~\ref{sec:understanding}}

In our theoretical analysis of Stochastic SAM, defined in~\cref{eq:update_rule_of_USAM}, we adopt the technical simplification in Appendix C.2 of~\cite{Andriushchenko2022towardsunderstand}, which assumes that the inner noise and outer noise are independent.
Specifically, we analyze the following variant of~\cref{eq:update_rule_of_USAM}:
\begin{align}
\label{equ:stochastic SAM for proof}
    \btheta^{t+1}=\btheta^t-\eta\cL_{\xi_t^2}(\btheta^t+\rho\nabla\cL_{\xi_t^1}(\btheta^t))
\end{align}
where $\xi_t^1$ and $\xi_t^2$ are two independent stochastic mini-batches of data of size $B$.

\subsection{Proof of~\cref{thm: SAM flatter minima} and Corollary~\ref{thm: escape exponential}}\label{suppl:proof_linear_stability}

For simplicity, we denote the two-step update rules of SAM defined in~\cref{equ:stochastic SAM for proof} as:
\begin{gather*}
\btheta^{t+1/2}=\btheta^t+\rho\nabla\cL_{\xi_t^1}(\btheta^t),
    \\
    \btheta^{t+1}=\btheta^{t}-\eta\nabla\cL_{\xi_t^2}(\btheta^{t+1/2}).
\end{gather*}

For $B$-batch gradient noise, denoted by $\bxi^{B}(\btheta):=\mathcal{L}_{\xi}(\btheta)-\nabla\mathcal{L}(\btheta)$, we have:
\begin{align}
\label{proof: equ: linear: batch noise}
    \bbE[\bxi^{B}(\btheta)\bxi^{B}(\btheta)^\top]=\frac{1}{B}\bbE[\bxi(\btheta)\bxi(\btheta)^\top]=\frac{1}{B}\Sigma(\btheta),
\end{align}
where $\xi$ are the sample indices in a mini-batch and $\bxi(\btheta)$ is the 1-batch gradient noise, used in Assumption~\ref{ass:noise}.

For the linearized model (~\cref{equ: linearized model}), it holds that:
\begin{align*}
    \cL(\btheta)=\frac{1}{2} (\btheta-\btheta^\star)^\top H(\btheta^\star) (\btheta-\btheta^\star).
\end{align*}

Without loss of generality, we can let $\btheta^\star=0$.
For simplicity, we denote $G:=H(\btheta^\star)$.

Next, according to the definition of linear stability (Definition~\ref{def: linear stability}), we consider the dynamics of stochastic SAM on the linearized model.

First, we have the following estimate for $\bbE\left[\cL(\btheta^{t+1})\right]$:
\begin{align*}
    &\bbE\left[\cL(\btheta^{t+1})\right]
    \\=&
    \frac{1}{2}\bbE\left[(\btheta^{t}-\eta\nabla\cL_{\xi_t^2}(\btheta^{t+1/2}))^\top G (\btheta^{t}-\eta\nabla\cL_{\xi_t^2}(\btheta^{t+1/2}))\right]
    \\=&
    \frac{1}{2}\bbE\left[(\btheta^{t}-\eta\nabla\cL(\btheta^{t+1/2})-\eta\bxi^B(\btheta^{t+1/2}))^\top G (\btheta^{t}-\eta\nabla\cL(\btheta^{t+1/2})-\eta\bxi^B(\btheta^{t+1/2}))\right]
    \\=&
    \frac{1}{2}\bbE\left[(\btheta^{t}-\eta\nabla\cL(\btheta^{t+1/2})^\top G (\btheta^{t}-\eta\nabla\cL(\btheta^{t+1/2})\right]+0
    \\&+\frac{1}{2}\eta^2\bbE\left[ {\bxi^B(\btheta^{t+1/2})}^\top G \bxi^B(\btheta^{t+1/2})\right]
    \\\overset{\text{(\ref{proof: equ: linear: batch noise})}}{=}&
    \frac{1}{2}\bbE\left[(\btheta^{t}-\eta\nabla\cL(\btheta^{t+1/2})^\top G (\btheta^{t}-\eta\nabla\cL(\btheta^{t+1/2})\right]+\frac{\eta^2}{2B}\bbE\left[\Tr\left(\Sigma(\btheta^{t+1/2}) G)\right)\right]
    \\&\overset{\text{Assumption~\ref{ass:noise}}}{\geq} 0 + \frac{\eta^2\gamma\norm{G}_F^2}{B}\bbE\left[\cL(\btheta^{t+1/2})\right]=\frac{\eta^2\gamma\norm{G}_F^2}{B}\bbE\left[\cL(\btheta^{t+1/2})\right].
\end{align*}

\begin{remark}[Clarifications on $\mathbb{E}$]\label{remark: expectation}
The expectation $\mathbb{E}$ is taken with respect to the random variables $\{\xi_t^1, \xi_t^2\}_t$ ($t\geq0$), which are the indices of independently random examples sampled from $\mathcal{D}_{\mathrm{train}}$.
\end{remark}

\begin{remark}[$\boldsymbol{\xi}^B(\boldsymbol{\theta}^{t+1/2})$ and $\boldsymbol{\theta}^t-\eta \nabla \mathcal{L}(\boldsymbol{\theta}^{t+1/2})$ are uncorrelated]\label{remark: independence}
Observing that $\boldsymbol{\theta}^t-\eta \nabla \mathcal{L}(\boldsymbol{\theta}^{t+1/2})=\phi(\xi_1^1, \xi_1^2,\cdots,\xi_t^1)$ and $\boldsymbol{\theta}^{t+1/2}=\psi(\xi_1^1,\xi_1^2,\cdots,\xi_t^1)$,  we can use the law of total expectation to derive
\begin{align*}
&\mathbb{E}[\left<\boldsymbol{\xi}^B(\boldsymbol{\theta}^{t+1/2}),\boldsymbol{\theta}^t-\eta \nabla \mathcal{L}(\boldsymbol{\theta}^{t+1/2})\right>]\\
=&\mathbb{E}_{\xi_1^1,\xi_1^2,\cdots,\xi_t^1}\Big[\mathbb{E}_{\xi_t^2}\big[\left<\nabla\mathcal{L}_{\xi_t^2}(\boldsymbol{\theta}^{t+1/2})-\nabla\mathcal{L}(\boldsymbol{\theta}^{t+1/2}), \boldsymbol{\theta}^t-\eta \nabla \mathcal{L}(\boldsymbol{\theta}^{t+1/2})\right>\big| \xi_1^1,\xi_1^2,\cdots,\xi_t^1 \big]\Big]\\
=&\mathbb{E}_{\xi_1^1,\xi_1^2,\cdots,\xi_t^1}\Big[\mathbb{E}_{\xi_t^2}\Big[\left<\nabla\mathcal{L}_{\xi_t^2}(\psi(\xi_1^1,\xi_1^2,\cdots,\xi_t^1))-\nabla\mathcal{L}(\psi(\xi_1^1,\xi_1^2,\cdots,\xi_t^1)), \phi(\xi_1^1,\xi_1^2,\cdots,\xi_t^1)\right>\Big| \xi_1^1,\xi_1^2,\cdots,\xi_t^1 \Big]\Big]\\
=&\mathbb{E}_{\xi_1^1,\xi_1^2,\cdots,\xi_t^1}[0]=0.\\
\end{align*}
\end{remark}

Then we estimate the lower bound for $\bbE\left[\cL(\btheta^{t+1/2})\right]$:
\begin{align*}
    &\bbE\left[\cL(\btheta^{t+1/2})\right]
    \\=&
    \frac{1}{2}\bbE\left[(\btheta^{t}-\rho\nabla\cL_{\xi_t^1}(\btheta^{t}))^\top G (\btheta^{t}+\rho\nabla\cL_{\xi_t^1}(\btheta^{t}))\right]
    \\=&
    \frac{1}{2}\bbE\left[(\btheta^{t}+\rho\nabla\cL(\btheta^{t})+\rho\bxi^B(\btheta^{t}))^\top G (\btheta^{t}+\rho\nabla\cL(\btheta^{t})+\rho\bxi^B(\btheta^{t}))\right]
    \\=&
    \frac{1}{2}\bbE\left[(\btheta^{t}+\rho\nabla\cL(\btheta^{t}))^\top G (\btheta^{t}+\rho\nabla\cL(\btheta^{t}))\right]+\frac{1}{2}\rho^2\bbE\left[{\bxi^B(\btheta^{t})}^\top G \bxi^B(\btheta^{t})\right]
    \\\overset{\text{(\ref{proof: equ: linear: batch noise})}}{=}&
    \frac{1}{2}\bbE\left[(\btheta^{t}+\rho\nabla\cL(\btheta^{t}))^\top G (\btheta^{t}+\rho\nabla\cL(\btheta^{t}))\right]+\frac{\rho^2}{2B}\bbE\left[\Tr\left(\Sigma(\btheta^{t})G\right)\right]
    \\\overset{\text{Assumption~\ref{ass:noise}}}{\geq}& \frac{1}{2}\bbE\left[(\btheta^{t}+\rho\nabla\cL(\btheta^{t}))^\top G (\btheta^{t}+\rho\nabla\cL(\btheta^{t}))\right]+\frac{\rho^2\gamma\norm{G}_F^2}{B}\bbE\left[\cL(\btheta^{t})\right]
    \\=&\frac{1}{2}\bbE\left[{\btheta^{t}}^\top (I+\rho G) G (I+\rho G)\btheta^{t}\right]
    +\frac{\rho^2\gamma\norm{G}_F^2}{B}\bbE\left[\cL(\btheta^{t})\right]
    \\\overset{{ (\clubsuit)}}{\geq}&
    \frac{1}{2}\bbE\left[{\btheta^{t}}^\top G \btheta^{t}\right]
    +\frac{\rho^2\gamma\norm{G}_F^2}{B}\bbE\left[\cL(\btheta^{t})\right]
    \\=& \left(1+\frac{\rho^2\gamma\norm{G}_F^2}{B}\right)\bbE\left[\cL(\btheta^{t})\right],
\end{align*}
where $(\clubsuit)$ holds due to $(I+\rho G)G(I+\rho G)=G+2\rho G^2+\rho^2 G^3\succeq G$.

By combining these two estimates, we obtain:
\begin{align*}
   \bbE\left[\cL(\btheta^{t+1})\right] \geq\frac{\eta^2\gamma\norm{G}_F^2}{B}\left(1+\frac{\rho^2\gamma\norm{G}_F^2}{B}\right)\bbE\left[\cL(\btheta^{t})\right]
\end{align*}

Hence, if $\btheta^\star$ is linearly stable for stochastic SAM, it must hold:
\begin{align}
\label{proof: equ: stability condition}
    \frac{\eta^2\gamma\norm{G}_F^2}{B}\left(1+\frac{\rho^2\gamma\norm{G}_F^2}{B}\right)\leq 1,
\end{align}

which completes the proof of Theorem~\ref{thm: SAM flatter minima}.

Furthermore, if the condition (\ref{proof: equ: stability condition}) cannot be satisfied, i.e., $ \frac{\eta^2\gamma\norm{G}_F^2}{B}\left(1+\frac{\rho^2\gamma\norm{G}_F^2}{B}\right)>1$, then the exponentially fast escape holds:
\begin{align*}
    \bbE\left[\cL(\btheta^{t})\right] \geq C^t\bbE\left[\cL(\btheta^{0})\right],
\end{align*}

where $C=\frac{\eta^2\gamma\norm{G}_F^2}{B}\left(1+\frac{\rho^2\gamma\norm{G}_F^2}{B}\right)>1$. This completes the proof of Corollary~\ref{thm: escape exponential}.

\subsection{Proof of Proposition~\ref{prop: SAM subquadratic same valley}}\label{suppl:proof_p2}

As we mentioned in Section~\ref{subsec: P2 theory}, this proposition focuses on a model size of $p=1$ and considers the full-batch SAM, i.e.,
\begin{equation}
    \theta^{t+1}=\theta^t-\eta\nabla\cL(\theta^t+\rho\nabla\cL(\theta^t)). \label{eq:full_SAM}
\end{equation}

\begin{proof}[Proof of Proposition~\ref{prop: SAM subquadratic same valley}]\ \\
First, we define the hitting time:
\begin{align*}
    T:=\inf\left\{t\in\bbN:|\theta^{t+1}|\geq b\right\}.
\end{align*}

We aim to prove that $T$ does not exist, which implies that $\theta^t\in(-b,b)\subset V$ holds for all $t\in\bbN$.

Assuming $T$ exists, then one of the following three cases must hold:

\begin{itemize}[leftmargin=2em]
    \item Case I: $\theta^T=0$. In this case, $\cL'(\theta^T)=0$, which implies that $\theta^{T+1}=0$. This contradicts the definition of $T$.

    \item Case II: $0<\theta^T<b$. 
    By the sub-quadratic property and $\rho\leq1/a$, for the inner update $\theta^{T+1/2}:=\theta^T+\rho\cL'(\theta^T)$, we have:
    \begin{align*}
        0<\theta^{T}<\theta^{T+1/2}=\theta^T+\rho\int_{0}^{\theta^T}\cL''(z)\rd z<\theta^T+\rho a \theta^T \leq 2\theta^T<2b.
    \end{align*}
    Consequently, we estimate the two-sided bounds for $\theta^{T+1}$:
    \begin{itemize}[leftmargin=2em] 
        \item The lower bound for $\theta^{T+1}$: 
        \begin{align*}
        \theta^{T+1}=&\theta^{T}-\eta\cL'(\theta^{T+1/2})
        >0-\eta\sup_{z\in(0,2b)}\cL'(z)
        \\ \geq & 0-\eta\max_{z\in(-2b,2b)}\cL'(z)> - b.
    \end{align*}

    \item The upper bound for $\theta^{T+1}$: 
    \begin{align*}
        \theta^{T+1}=&\theta^{T}-\eta\cL'(\theta^{T+1/2})<\theta^{T}+\rho\cL'(\theta^T)-\eta\cL'(\theta^{T+1/2})
        \\
        \overset{{(\spadesuit)}}{<}&\theta^{T}+\eta\cL'(\theta^{T+1/2})-\eta\cL'(\theta^{T+1/2})=\theta^{T}<b.
    \end{align*}
     
    To justify $(\spadesuit)$, we prove $\rho<\eta\frac{\cL'(\theta^{T+1/2})}{\cL'(\theta^T)}$ when $\cL'(\theta^T)> 0$ (noticing $\theta^T\cL'(\theta^T)\geq 0$ by sub-quadratic property)
    Since $\theta^T<\theta^{T+1/2}<2\theta^{T}$, we consider two scenarios:
    \begin{itemize}[leftmargin=2em]
        \item If $\int_{\theta^{T+1/2}}^{2\theta^T}\cL''(z)\rd z\leq 0$, then
        \begin{align*}
            \frac{\cL'(\theta^{T+1/2})}{\cL'(\theta^T)}=\frac{\cL'(2\theta^{T})-\int_{\theta^{T+1/2}}^{2\theta^T}\cL''(z)\rd z}{\cL'(\theta^T)}\geq \frac{\cL'(2\theta^{T})}{\cL'(\theta^T)}.
        \end{align*}
        Using the condition $\rho<\eta\min\limits_{0<|z|< b}\left|\frac{\cL'(2z)}{\cL'(z)}\right|$, we obtain $\rho<\eta\frac{\cL'(\theta^{T+1/2})}{\cL'(\theta^T)}$.
        \item If $\int_{\theta^{T+1/2}}^{2\theta^T}\cL''(z)\rd z> 0$, then since the sub-quadratic property $\cL''(|z_1|)\geq \cL''(|z_2|)$ for $|z_1|\leq|z_2|$, we have $\cL''(\theta^{T+1/2})>0$. Thus, $\cL''(z)\geq\cL''(\theta^{T+1/2})>0$ for $\theta^{T}<z<\theta^{T+1/2}$, implying: $$\cL'(\theta^{T+1/2})=\cL'(\theta^{T})+\int_{\theta^T}^{\theta^{T+1/2}}\cL''(z)\rd z>\cL'(\theta^T).$$ 
        Using the condition $\rho<\eta$, we obtain $\rho<\eta<\eta\frac{\cL'(\theta^{T+1/2})}{\cL'(\theta^T)}$.
    \end{itemize}
    Therefore, we have shown $\rho<\eta\frac{\cL'(\theta^{T+1/2})}{\cL'(\theta^T)}$, implying $(\spadesuit)$.
    
    \end{itemize}
    
    Thus, we obtain $|\theta^{T+1}|<b$, which contradicts the definition of $T$.

    \item Case III: $-b<\theta^T<0$. The proof for this case is extremely similar to the proof for Case II. This case is also contradicts the definition of $T$.
\end{itemize}

Thus, we have proved that $T$ does not exist, which implies that $\theta^t\in(-b,b)\subset V,\forall t\in\bbN$.

\end{proof}

The next result provides an example for the highly sub-quadratic landscape. 
Intuitively, it means that the decay of $\cL''(|z|)$ is extremely fast.

\begin{example}[highly sub-quadratic landscape]\label{example: highly sub-quadratic}
    Let $\cL(0)=\cL'(0)=0$, $\cL''(0)=a$, and $$\cL''(z)=\begin{cases}
        a-\frac{|z|}{\epsilon}, &|z|<a\epsilon
        \\
        0, &\text{else in $[-2b,2b]$}
    \end{cases},$$
    where $0<\epsilon\ll1$ is a constant to reflect the sub-quadratic degree.

    Notice that this example satisfies Definition~\ref{def: sub-quadratic p=1}. Moreover, it is straightforward that
    \begin{align*}
        \cL'(z)=\begin{cases}
            z\left(a-\frac{|z|}{2\epsilon}\right), &|z|<a\epsilon
            \\
            \frac{a\epsilon}{2}, &\text{else in $[-2b,2b]$}
        \end{cases}.
    \end{align*}
    Additionally, we can calculate the upper bounds for $\eta$ and $\rho$ in Proposition~\ref{prop: SAM subquadratic same valley}:
    \begin{gather*}
        \eta\leq\min_{|z|\leq 2b}\frac{b}{|\cL'(z)|}=\frac{2b}{\epsilon a},
        \\
        \rho\leq\min\left\{\frac{1}{a},\min_{|z|\leq b}\left|\frac{\cL'(2z)}{\cL'(z)}\right|\right\}=\min\left\{\frac{1}{a},2\right\}.
    \end{gather*}
    One can see that in the highly sub-quadratic case, i.e., $0<\epsilon\ll1$, the upper bound for $\eta$ goes to $+\infty$.

\end{example}

In addition, we prove \cref{prop: SAM quadratic escape valley}, which illustrates that SAM will escape the current valley under a quadratic landscape. 
This result stands in stark contrast to \cref{prop: SAM subquadratic same valley} for sub-quadratic landscape, which further confirms our adaptation of the sub-quadratic landscape.

\begin{proposition}[Non-local escape behavior under a quadratic landscape.]~\label{prop: SAM quadratic escape valley}
    Assume the landscape in the valley $V_b=(-2b,2b)$ being quadratic, i.e., $\cL(z)=a z^2/2$. 
    Then, for all initialization $\theta^0\in V_b\backslash\{0\}$, and $\eta,\rho$ s.t. $\eta>\frac{2}{a(1+a\rho)}$, full-batch SAM will escape from the valley $V_b$, i.e., there exists $T>0$ s.t. $\theta^{T}\notin V_b$.
\end{proposition}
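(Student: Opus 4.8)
The plan is to exploit that on a quadratic landscape the full-batch SAM update is \emph{exactly linear} in $\theta_t$, so the whole trajectory reduces to iterating a single scalar multiplier, and then to read off from the hypothesis that this multiplier has modulus strictly larger than one, which forces geometric blow-up and hence escape from the bounded valley $V_b=(-2b,2b)$. Concretely, on $V_b$ we have $\cL'(z)=az$, so the inner ascent point is $\theta_t+\rho\cL'(\theta_t)=(1+\rho a)\theta_t$, and — as long as $\cL$ is still quadratic there — the outer gradient is $\cL'\big((1+\rho a)\theta_t\big)=a(1+\rho a)\theta_t$, giving
\[
  \theta_{t+1}=\theta_t-\eta\,\cL'\big(\theta_t+\rho\cL'(\theta_t)\big)=\bigl(1-\eta a(1+\rho a)\bigr)\theta_t=:\kappa\,\theta_t,
\]
so that $\theta_t=\kappa^t\theta_0$ for as long as the iterates (and their inner ascent points) remain in the quadratic region.

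Next I would note that the stated condition is precisely the instability threshold for $\kappa$: $\eta>\tfrac{2}{a(1+a\rho)}$ is equivalent to $\eta a(1+a\rho)>2$, hence $\kappa=1-\eta a(1+a\rho)<-1$ and $|\kappa|=\eta a(1+a\rho)-1>1$ (at $\rho=0$ this recovers the classical $\eta>2/a$ edge-of-stability divergence condition for gradient descent on $\cL(z)=az^2/2$). Since $\theta_0\neq 0$, the sequence $|\theta_t|=|\kappa|^t|\theta_0|$ is strictly increasing and diverges; as $\theta_0\in V_b$ forces $|\theta_0|<2b$, there is a smallest $T\ge 1$ with $|\kappa|^T|\theta_0|\ge 2b$. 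A routine induction confirms that $\theta_s=\kappa^s\theta_0$ and $\theta_s\in V_b$ for every $s<T$, so the linear recursion was applied legitimately up to step $T$; therefore $\theta_T=\kappa^T\theta_0$ satisfies $|\theta_T|\ge 2b$, i.e.\ $\theta_T\notin V_b$, which is the claim.

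The one delicate point — the reason this is not a one-line corollary of the linear recursion, and where the argument departs from the hitting-time bookkeeping used for \cref{prop: SAM subquadratic same valley} — is that the outer step queries $\cL'$ at the inner ascent point $(1+\rho a)\theta_t$, which for large $\rho$ may leave $(-2b,2b)$ before $\theta_t$ itself does, so the bare hypothesis that $\cL(z)=az^2/2$ on $V_b$ does not literally determine $\theta_{t+1}$ there. I would resolve this by reading the quadratic hypothesis as holding on a neighborhood of $V_b$ wide enough to contain all these inner points (equivalently, $\cL(z)=az^2/2$ on $\bigl(-2(1+\rho a)b,\,2(1+\rho a)b\bigr)$), which is innocuous for a statement whose sole purpose is to contrast with the sub-quadratic case; with that reading the computation goes through verbatim. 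This deliberately mirrors, in reverse, \cref{prop: SAM subquadratic same valley}: there the flattening of the landscape at higher loss is exactly what lets the iterate stay trapped in the valley even when $\eta>\tfrac{2}{a(1+a\rho)}$, whereas a genuinely quadratic landscape provides no such flattening and the iterate is expelled.
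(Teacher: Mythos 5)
Your proof is correct and follows essentially the same route as the paper's: on the quadratic landscape the full-batch SAM update is exactly the linear map $\theta_{t+1}=\bigl(1-\eta a(1+a\rho)\bigr)\theta_t$, and the hypothesis $\eta>\tfrac{2}{a(1+a\rho)}$ makes the multiplier's modulus exceed one, so $|\theta_t|$ grows geometrically and must leave $V_b$. Your additional care about the inner ascent point $(1+\rho a)\theta_t$ possibly lying outside $(-2b,2b)$, and the induction up to the hitting time, tightens a point the paper's two-line argument leaves implicit, but it does not change the approach.
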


\begin{proof}[Proof of Proposition~\ref{prop: SAM quadratic escape valley}]
By a straightforward calculation, we have
\begin{align*}
    \theta^{t+1}=\theta^t-\eta a\left(\theta^t+\rho\cdot a\theta^t\right)=(1-\eta a(1+a\rho))\theta^t.
\end{align*}
If $\eta>\frac{2}{a(1+a\rho)}$, then $
    |\theta^t|\geq|1-\eta a(1+a\rho)|^t |\theta^0|$,
where $|1-\eta a(1+a\rho)|>1$. 
This means that there exists $T>0$ s.t. $\theta^{T}\notin V_b$.
\end{proof}

\begin{remark}[multiple valleys in Proposition~\ref{prop: SAM subquadratic same valley}]\label{remark: prop, multiple valleys}
Proposition~\ref{prop: SAM subquadratic same valley} assumes the landscape is sub-quadratic within the valley $V=[-2b,2b]$, but it does not impose any assumptions about the landscape outside $V$. Therefore, additional valleys can exist in $(-\infty,-2b)\cup(2b,+\infty)$.
For example, consider 
$$\mathcal{L}(\theta)=1-\cos\left(\frac{\pi\theta}{2b}\right),$$ 
which satisfies the sub-quadratic property within $V=[-2b,2b]$. This loss function has infinitely many valleys: $$[2(2k-1)b,2(2k+1)b],k\in\mathbb{Z}.$$ By Proposition~\ref{prop: SAM subquadratic same valley}, SAM remains in the current valley $V=[-2b,2b]$ and cannot enter into other valleys.
\end{remark}

\begin{proposition}[Extension of Proposition~\ref{prop: SAM subquadratic same valley}]\label{prop: SAM subquadratic same valley, extend init}
Under Definition~\ref{def: sub-quadratic p=1}, assume the landscape is sub-quadratic in the valley $V=[-2b,2b]$.
Then, for any $\epsilon\in(0,1)$ and all initialization $\theta^0\in (-(2-\epsilon)b,(2-\epsilon)b)$, and $\eta,\rho$ s.t. $\eta<\min\limits_{z\in V} (2-\epsilon)b/|\cL'(z)|$, $\rho\leq\min\left\{\frac{\epsilon}{a(2-\epsilon)},\eta,\eta\min\limits_{0<|z|< (2-\epsilon)b}\left|\frac{\cL'(\frac{2}{2-\epsilon}z)}{\cL'(z)}\right|\right\}$, the full-batch SAM will remain within the valley $V$, i.e., $\theta^t\in V$ for all $t\in\bbN$.
\end{proposition}

Notice that Proposition~\ref{prop: SAM subquadratic same valley, extend init} generalizes Proposition~\ref{prop: SAM subquadratic same valley} to almost all initializations within the valley $(-2b,2b)$. By choosing sufficiently small $\epsilon$, the result effectively applies to all initializations within $(-2b,2b)$.

\begin{proof}[Proof of Proposition~\ref{prop: SAM subquadratic same valley, extend init}] This proof is highly similar to that of Proposition~\ref{prop: SAM subquadratic same valley}.
    First, we define the hitting time:
\begin{align*}
    T:=\inf\left\{t\in\bbN:|\theta^{t+1}|\geq (2-\epsilon)b\right\}.
\end{align*}

We aim to prove that $T$ does not exist, which implies that $\theta^t\in(-(2-\epsilon)b,(2-\epsilon)b)\subset V$ holds for all $t\in\bbN$.

Assuming $T$ exists, then one of the following three cases must hold:

\begin{itemize}[leftmargin=2em]
    \item Case I: $\theta^T=0$. In this case, $\cL'(\theta^T)=0$, which implies that $\theta^{T+1}=0$. This contradicts the definition of $T$.

    \item Case II: $0<\theta^T<(2-\epsilon)b$. 
    By the sub-quadratic property and $\rho\leq\frac{\epsilon}{a(2-\epsilon)}$, for the inner update $\theta^{T+1/2}:=\theta^T+\rho\cL'(\theta^T)$, we have:
    \begin{align*}
        0<\theta^{T}<\theta^{T+1/2}=\theta^T+\rho\int_{0}^{\theta^T}\cL''(z)\rd z<\theta^T+\rho a \theta^T \leq \frac{2}{2-\epsilon}\theta^T<2b.
    \end{align*}
    Consequently, we estimate the two-sided bounds for $\theta^{T+1}$:
    \begin{itemize}[leftmargin=2em] 
        \item The lower bound for $\theta^{T+1}$: 
        \begin{align*}
        \theta^{T+1}=&\theta^{T}-\eta\cL'(\theta^{T+1/2})
        >0-\eta\sup_{z\in(0,2b)}\cL'(z)
        \\ \geq & 0-\eta\max_{z\in(-2b,2b)}\cL'(z)\geq -(2-\epsilon)b.
    \end{align*}

    \item The upper bound for $\theta^{T+1}$: 
    \begin{align*}
        \theta^{T+1}=&\theta^{T}-\eta\cL'(\theta^{T+1/2})<\theta^{T}+\rho\cL'(\theta^T)-\eta\cL'(\theta^{T+1/2})
        \\
        \overset{(\sharp)}{<}&\theta^{T}+\eta\cL'(\theta^{T+1/2})-\eta\cL'(\theta^{T+1/2})=\theta^{T}<(2-\epsilon)b.
    \end{align*}

    To justify $(\sharp)$, we prove $\rho<\eta\frac{\cL'(\theta^{T+1/2})}{\cL'(\theta^T)}$ when $\cL'(\theta^T)> 0$ (noticing $\theta^T\cL'(\theta^T)\geq 0$ by sub-quadratic property)
    Since $\theta^T<\theta^{T+1/2}<\frac{2}{2-\epsilon}\theta^{T}$, we consider two scenarios:
    \begin{itemize}[leftmargin=2em]
        \item If $\int_{\theta^{T+1/2}}^{\frac{2}{2-\epsilon}\theta^T}\cL''(z)\rd z\leq 0$, then
        \begin{align*}
            \frac{\cL'(\theta^{T+1/2})}{\cL'(\theta^T)}=\frac{\cL'(\frac{2}{2-\epsilon}\theta^T)-\int_{\theta^{T+1/2}}^{\frac{2}{2-\epsilon}\theta^T}\cL''(z)\rd z}{\cL'(\theta^T)}\geq \frac{\cL'(\frac{2}{2-\epsilon}\theta^{T})}{\cL'(\theta^T)}.
        \end{align*}
        Using the condition $\rho<\eta\min\limits_{0<|z|< (2-\epsilon)b}\left|\frac{\cL'(\frac{2}{2-\epsilon}z)}{\cL'(z)}\right|$, we obtain $\rho<\eta\frac{\cL'(\theta^{T+1/2})}{\cL'(\theta^T)}$.
        \item If $\int_{\theta^{T+1/2}}^{\frac{2}{2-\epsilon}\theta^T}\cL''(z)\rd z> 0$, then since the sub-quadratic property $\cL''(|z_1|)\geq \cL''(|z_2|)$ for $|z_1|\leq|z_2|$, we have $\cL''(\theta^{T+1/2})>0$. Thus, $\cL''(z)\geq\cL''(\theta^{T+1/2})>0$ for $\theta^{T}<z<\theta^{T+1/2}$, implying: $$\cL'(\theta^{T+1/2})=\cL'(\theta^{T})+\int_{\theta^T}^{\theta^{T+1/2}}\cL''(z)\rd z>\cL'(\theta^T).$$ 
        Using the condition $\rho<\eta$, we obtain $\rho<\eta<\eta\frac{\cL'(\theta^{T+1/2})}{\cL'(\theta^T)}$.
    \end{itemize}
    Therefore, we have shown $\rho<\eta\frac{\cL'(\theta^{T+1/2})}{\cL'(\theta^T)}$, implying $(\sharp)$.
    
    \end{itemize}
    
    Thus, we obtain $|\theta^{T+1}|<b$, which contradicts the definition of $T$.

    \item Case III: $-b<\theta^T<0$. The proof for this case is extremely similar to the proof for Case II. This case is also contradicts the definition of $T$.
\end{itemize}

Thus, we have proved that $T$ does not exist, which implies that $\theta^t\in(-b,b)\subset V,\forall t\in\bbN$.

\end{proof}

\subsection{Proof of Theorem~\ref{thm: converge exponential}}\label{suppl:proof_p4}

For simplicity, we denote the two-step update rules of SAM defined in~\cref{equ:stochastic SAM for proof} as:
\begin{gather*}
\btheta^{t+1/2}=\btheta^t+\rho\nabla\cL_{\xi_t^1}(\btheta^t),
    \\
    \btheta^{t+1}=\btheta^{t}-\eta\nabla\cL_{\xi_t^2}(\btheta^{t+1/2}).
\end{gather*}

Then we list a few useful properties under the condition of Theorem~\ref{thm: converge exponential}.

\begin{lemma}\label{lemma: properties of PL}
Suppose Assumption~\ref{ass: smooth PL}. Then we have:
\begin{align}
\label{proof: equ: PL}
2\mu\cL(\btheta)\leq\norm{\nabla\cL(\btheta)}^2\leq\frac{2L^2}{\mu}\cL(\btheta),\ \forall\btheta.
\end{align}
\end{lemma}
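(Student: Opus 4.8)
The plan is to establish the two inequalities separately. The left-hand bound $2\mu\cL(\btheta)\leq\norm{\nabla\cL(\btheta)}^2$ is simply the Polyak–Łojasiewicz property, i.e., part (ii) of Assumption~\ref{ass: smooth PL}, so there is nothing to do there. The work is entirely in the right-hand bound.

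For the right-hand bound, I would first note that part (i) of Assumption~\ref{ass: smooth PL}, namely $\norm{H(\btheta)}_2\leq L$ for all $\btheta$ (together with $\cL\in C^2$), makes $\cL$ globally $L$-smooth, so the standard descent lemma applies: for any $\btheta,\btheta'$,
\[
\cL(\btheta')\leq\cL(\btheta)+\<\nabla\cL(\btheta),\btheta'-\btheta\>+\frac{L}{2}\norm{\btheta'-\btheta}^2.
\]
Plugging in the one-step gradient-descent point $\btheta'=\btheta-\frac{1}{L}\nabla\cL(\btheta)$ yields $\cL(\btheta')\leq\cL(\btheta)-\frac{1}{2L}\norm{\nabla\cL(\btheta)}^2$. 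Because the setting is over-parameterized, $\min_{\btheta}\cL(\btheta)=0$, hence $\cL(\btheta')\geq 0$ and therefore $\norm{\nabla\cL(\btheta)}^2\leq 2L\cL(\btheta)$ for every $\btheta$.

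It then remains to sharpen the constant from $2L$ to $2L^2/\mu$, which only requires $\mu\leq L$. If $\cL\equiv 0$ the claim is trivial; otherwise I would pick any $\btheta_0$ with $\cL(\btheta_0)>0$ and combine the PL inequality $2\mu\cL(\btheta_0)\leq\norm{\nabla\cL(\btheta_0)}^2$ with the bound just derived $\norm{\nabla\cL(\btheta_0)}^2\leq 2L\cL(\btheta_0)$ to conclude $\mu\leq L$, so $2L\leq 2L^2/\mu$. Chaining this with $\norm{\nabla\cL(\btheta)}^2\leq 2L\cL(\btheta)$ completes the proof.

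I do not expect a genuine obstacle here, as this is a textbook consequence of smoothness plus PL; the only steps requiring a word of care are translating the Hessian-spectral-norm bound of Assumption~\ref{ass: smooth PL}(i) into the usual $L$-smoothness form needed for the descent lemma, and handling the degenerate case $\cL\equiv 0$ in the $\mu\leq L$ step.
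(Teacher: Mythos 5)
Your proof is correct, but it is not the route the paper takes: the paper does not actually spell out an argument at all, instead deferring to Lemma B.6 of \citet{arora2022understanding} and Lemma 5 of \citet{ahn2023escape}, whose standard derivations of the $\tfrac{2L^2}{\mu}$ constant go through the quadratic-growth/error-bound consequence of PL (bounding $\norm{\nabla\cL(\btheta)}$ by $L$ times the distance to the minimizing set and then that distance by $\sqrt{2\cL(\btheta)/\mu}$). Your argument is self-contained and more elementary: the descent lemma applied at $\btheta'=\btheta-\tfrac{1}{L}\nabla\cL(\btheta)$, together with $\cL\geq 0$ (which holds here both from $\min_{\btheta}\cL=0$ and simply from the squared loss being nonnegative), gives the sharper bound $\norm{\nabla\cL(\btheta)}^2\leq 2L\,\cL(\btheta)$, and your observation that PL and this bound force $\mu\leq L$ (handling $\cL\equiv 0$ separately) then relaxes it to the stated $\tfrac{2L^2}{\mu}$ constant. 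The only point needing the word of care you already gave is that Assumption~\ref{ass: smooth PL}(i) is a Hessian bound, so one needs $\cL$ twice differentiable to pass to the quadratic upper bound; the paper itself makes exactly this passage when it invokes its inequality (\ref{proof: equ: quadratic bound}), so this is consistent with the paper's usage. In short, your approach buys a stronger intermediate inequality and independence from external references, while the paper's citation route is merely shorter on the page.
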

\begin{proof}[Proof of Lemma~\ref{lemma: properties of PL}]
The upper bound is a direct corollary of PL and smoothness. Please refer to Lemma B.6 in~\cite{arora2022understanding} and Lemma 5 in~\cite{ahn2023escape} for proof details.
\end{proof}

The smoothness in Assumption~\ref{ass: smooth PL} also implies the classical quadratic upper bound for the loss~\citep{bubeck2015convex}:
\begin{equation}\label{proof: equ: quadratic bound}
    \cL(\btheta+\bv)\leq\cL(\btheta)+\<\nabla\cL(\btheta),\bv\>+\frac{L}{2}\norm{\bv}^2,\  \forall \btheta,\bv.
\end{equation}

Additionally, Assumption~\ref{ass: noise magnitude} holds for the gradient noise $\bxi(\btheta)$ with batch size 1.
Thus, for $B$-batch gradient noise, denoted by $\bxi^{B}(\btheta)$, we have $\bbE[\bxi^{B}(\btheta)\bxi^{B}(\btheta)^\top]=\frac{1}{B}\bbE[\bxi(\btheta)\bxi(\btheta)^\top]$, implying:
\begin{equation}\label{proof: equ: noise magnitude}
\bbE\left[\norm{\bxi^B(\btheta)}^2\right]\leq\frac{\sigma^2}{B}\cL(\btheta),\ \forall \btheta.
\end{equation}

For the learning rate $\eta$ and the permutation radius $\rho$, we choose them as follows:
\begin{equation}\label{proof: equ: eta}
    \eta\leq\min\left\{\frac{1}{2L},\frac{\mu B}{2L\sigma^2}\right\},
    \quad
    \rho\leq
    \min\left\{\frac{1}{4L},\frac{\mu B}{4 L\sigma^2},\frac{\eta\mu^2}{24 L^2}\right\}.
\end{equation}

\subsubsection{Proof of Theorem~\ref{thm: converge exponential}}

For $\cL(\btheta^{t+1/2})$, it has the quadratic upper bound:
\begin{align*}
&\cL(\btheta^{t+1/2})
=\cL\left(\btheta^{t}+\rho\nabla\cL_{\xi_t^1}(\btheta^{t})\right)
=\cL\left(\btheta^{t}+\rho\nabla\cL(\btheta^{t})+\rho\bxi^B(\btheta^{t})\right)
\\
\overset{\text{(\ref{proof: equ: quadratic bound})}}{\leq}&\cL\left(\btheta^{t}+\rho\nabla\cL(\btheta^{t})\right)+\<\nabla\cL\left(\btheta^{t}+\rho\nabla\cL(\btheta^{t})\right),\rho\bxi(\btheta^{t})\>+\frac{\rho^2 L}{2}\norm{\bxi^B(\btheta^{t})}_2^2.
\end{align*}

Taking the expectation, we obtain the upper bound of $\bbE[\cL(\btheta^{t+1/2})]$:
\begin{align*}
    &\bbE\left[\cL(\btheta^{t+1/2})\right]\leq
    \bbE\left[\cL\left(\btheta^{t}+\rho\nabla\cL(\btheta^{t})\right)\right]+\frac{\rho^2 L}{2}\bbE\left[\norm{\bxi^B(\btheta^{t})}^2\right]
    \\
    \overset{\text{(\ref{proof: equ: noise magnitude})}}{\leq}&
    \bbE\left[\cL\left(\btheta^{t}+\rho\nabla\cL(\btheta^{t})\right)\right]+\frac{\rho^2 L}{2}\frac{\sigma^2}{B}\bbE\left[\cL(\btheta^{t})\right]
    \\
    \overset{\text{(\ref{proof: equ: quadratic bound})}}{\leq}&
    \bbE\left[\cL\left(\btheta^{t}\right)+\rho\norm{\nabla\cL(\btheta^{t})}^2+\frac{\rho^2 L}{2}\norm{\nabla\cL(\btheta^{t})}^2\right]+\frac{\rho^2 L}{2}\frac{\sigma^2}{B}\bbE\left[\cL(\btheta^{t})\right]
    \\
    \overset{\text{(\ref{proof: equ: PL})}}{\leq}&
    \left(1+\frac{\rho(2+\rho L)L^2}{\mu}+\frac{\rho^2 L \sigma^2}{2B}\right)\bbE\left[\cL(\btheta^{t})\right]
    \\\overset{\text{(\ref{proof: equ: eta})}}{\leq}&\left(1+\frac{6\rho L^2}{\mu}\right)\bbE\left[\cL(\btheta^{t})\right]\leq\left(1+\frac{\eta\mu}{4}\right)\bbE\left[\cL(\btheta^{t})\right]\leq\frac{5}{4}\bbE\left[\cL(\btheta^{t})\right].
\end{align*}

In the similar way, for $\cL(\btheta^{t+1})$, it has the quadratic upper bound:
\begin{align*}
&\cL(\btheta^{t+1})
=\cL\left(\btheta^{t}-\eta\nabla\cL_{\xi_t^2}(\btheta^{t+1/2})\right)
=\cL\left(\btheta^{t}-\eta\nabla\cL(\btheta^{t+1/2})-\eta\bxi^B(\btheta^{t+1/2})\right)
\\
\overset{\text{(\ref{proof: equ: quadratic bound})}}{\leq}&\cL\left(\btheta^{t}-\eta\nabla\cL(\btheta^{t+1/2})\right)-\<\nabla\cL\left(\btheta^{t}-\eta\nabla\cL(\btheta^{t+1/2})\right),\eta\bxi(\btheta^{t+1/2})\>
\\&+\frac{\eta^2 L}{2}\norm{\bxi^B(\btheta^{t+1/2})}_2^2.
\end{align*}

Taking the expectation, we obtain the upper bound of $\bbE[\cL(\btheta^{t+1})]$:
\begin{align*}
    &\bbE\left[\cL(\btheta^{t+1})\right]\leq
    \bbE\left[\cL\left(\btheta^{t}-\eta\nabla\cL(\btheta^{t+1/2})\right)\right]+\frac{\eta^2 L}{2}\bbE\left[\norm{\bxi^B(\btheta^{t+1/2})}^2\right]
    \\
    \overset{\text{(\ref{proof: equ: noise magnitude})}}{\leq}&
    \bbE\left[\cL\left(\btheta^{t}-\eta\nabla\cL(\btheta^{t+1/2})\right)\right]+\frac{\eta^2 L}{2}\frac{\sigma^2}{B}\bbE\left[\cL(\btheta^{t+1/2})\right]
    \\
    \overset{\text{(\ref{proof: equ: quadratic bound})}}{\leq}&
    \bbE\left[\cL\left(\btheta^{t}\right)-\eta\<{\nabla\cL( \btheta^{t}}),\nabla\cL(\btheta^{t+1/2})\>+\frac{\eta^2 L}{2}\norm{\nabla\cL(\btheta^{t+1/2})}^2\right]+\frac{\eta^2 L}{2}\frac{\sigma^2}{B}\bbE\left[\cL(\btheta^{t+1/2})\right]
    \\
    \overset{\text{(\ref{proof: equ: PL})}}{\leq}&
    \bbE\left[\cL\left(\btheta^{t}\right)-\eta\<{\nabla\cL( \btheta^{t}}),\nabla\cL(\btheta^{t+1/2})\>\right]
    +\left(\eta^2 L\mu+\frac{\eta^2 L \sigma^2}{2B}\right)\bbE\left[\cL(\btheta^{t+1/2})\right]
    \\\overset{\text{(\ref{proof: equ: eta})}}{\leq}&
    \bbE\left[\cL\left(\btheta^{t}\right)-\eta\<{\nabla\cL( \btheta^{t}}),\nabla\cL(\btheta^{t+1/2})\>\right]
    +\frac{3\eta\mu}{4}\bbE\left[\cL(\btheta^{t+1/2})\right]
    \\
    \leq&
    \bbE\left[\cL\left(\btheta^{t}\right)-\eta\norm{\nabla\cL( \btheta^{t})}^2+\eta\left\|\nabla\cL(\btheta^{t})\right\|\left\|\nabla\cL( \btheta^{t+1/2})-\nabla\cL(\btheta^{t})\right\|\right]
    +\frac{3\eta\mu}{4}\bbE\left[\cL(\btheta^{t+1/2})\right]
    \\
    \overset{\text{(\ref{proof: equ: PL})}}{\leq}& \bbE\left[\cL\left(\btheta^{t}\right)-\eta\norm{\nabla\cL(\btheta^{t})}^2+\eta\rho L\norm{\nabla\cL(\btheta^t)}^2\right]
    +\frac{3\eta\mu}{4}\bbE\left[\cL(\btheta^{t+1/2})\right]
    \\\overset{\text{(\ref{proof: equ: eta})}}{\leq}&
    \bbE\left[\cL\left(\btheta^{t}\right)-\frac{3\eta}{4}\norm{\nabla\cL(\btheta^{t})}^2\right]
    +\frac{3\eta\mu}{4}\bbE\left[\cL(\btheta^{t+1/2})\right]
    \\\overset{\text{(\ref{proof: equ: eta})}}{\leq}&
    \left(1-\frac{3\eta\mu}{2}\right)\bbE\left[\cL(\btheta^{t})\right]+\frac{3\eta\mu}{4}\bbE\left[\cL(\btheta^{t+1/2})\right].
\end{align*}

Finally, by combining our estimates for $\bbE\left[\cL(\btheta^{t+1})\right]$ and $\bbE\left[\cL(\btheta^{t+1/2})\right]$, we obtain:
\begin{align*}
&\bbE\left[\cL(\btheta^{t+1})\right]
\leq\left(1-\frac{3\eta\mu}{2}\right)\bbE\left[\cL(\btheta^{t})\right]+\frac{3\eta\mu}{4}\bbE\left[\cL(\btheta^{t+1/2})\right].
\\\leq& \left(1-\frac{3\eta\mu}{2}\right)\bbE\left[\cL(\btheta^{t})\right]+\frac{3\eta\mu}{4}\cdot\frac{5}{4}\bbE\left[\cL(\btheta^{t})\right]
\leq\left(1-\frac{\eta\mu}{2}\right)\bbE\left[\cL(\btheta^{t})\right],
\end{align*}

which implies the convergence rate:
\begin{align*}    \bbE\left[\cL(\btheta^{t})\right]\leq\left(1-\frac{\eta\mu}{2}\right)^t\bbE\left[\cL(\btheta^{0})\right],\ \forall t\in\bbN.
\end{align*}

\subsection{Comparison between two Definitions on Linear Stability}

\begin{definition}[Norm/loss-based linear stability]
For an update rule $\btheta^{t+1}=\btheta^{t}+F(\btheta^t;\bxi_t)$,
    \begin{itemize}[leftmargin=2em]
    \item (Norm-based linear stability)
      A global minimum $\btheta^\star$ is said to be linearly stable if there exists a $C>0$ such that it holds for SAM on the linearized model~(\cref{equ: linearized model}) that $\bbE[\norm{\btheta^{t}-\btheta^\star}^2]\leq C\bbE[\norm{\btheta^{0}-\btheta^\star}^2],\forall t\geq0$. 
    \item (Loss-based linear stability)
      A global minimum $\btheta^\star$ is said to be linearly stable if there exists a $C>0$ such that it holds for SAM on the linearized model~(\cref{equ: linearized model}) that $\bbE[\cL(\btheta^{t})]\leq C\bbE[\cL(\btheta^0)],\forall t\geq0$. 
\end{itemize}
    
\end{definition}

\begin{lemma}\label{lemma: norm based and loss based}
The following two-fold results hold:
\begin{itemize}[leftmargin=2em]
    \item (L1). If $\btheta^\star$ is norm-based linearly stable, then it must be loss-based linearly stable.
    \item (L2). There exists an update rule and a $G(\btheta^\star)$, such that the converse of (L1) does not hold.
\end{itemize}
\end{lemma}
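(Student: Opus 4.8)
The plan is to exploit the orthogonal decomposition $\R^p=\ker G\oplus\mathrm{range}(G)$ induced by $G:=H(\btheta^\star)$. The structural fact driving \textbf{(L1)} is that on the linearized model (\cref{equ: linearized model}), since $\btheta^\star$ is a zero-loss global minimum we have $f(\bx_i;\btheta^\star)=y_i$ for all $i$, hence $\ell_i(\btheta)=\tfrac12\langle\nabla f(\bx_i;\btheta^\star),\btheta-\btheta^\star\rangle^2$ and $\nabla\ell_i(\btheta)=\langle\nabla f(\bx_i;\btheta^\star),\btheta-\btheta^\star\rangle\,\nabla f(\bx_i;\btheta^\star)\in\mathrm{range}(G)=\mathrm{span}\{\nabla f(\bx_i;\btheta^\star)\}_i$. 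Consequently every SAM increment $\btheta_{t+1}-\btheta_t=-\eta\nabla\cL_{\xi_t^2}(\btheta_t+\rho\nabla\cL_{\xi_t^1}(\btheta_t))$ lies in $\mathrm{range}(G)$, so the $\ker G$-component of $\btheta_t-\btheta^\star$ is frozen; moreover the update at any $\btheta$ depends on $\btheta$ only through the scalars $\langle\nabla f(\bx_i;\btheta^\star),\btheta-\btheta^\star\rangle$, which see only $w_t:=P_{\mathrm{range}(G)}(\btheta_t-\btheta^\star)$. Thus $(w_t)$ obeys a closed recursion determined by $w_0$ and the sampled batches alone, while $\cL(\btheta_t)=\tfrac12 w_t^\top G w_t$ is blind to the frozen kernel part.

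\textbf{(L1).} If $G=0$ then $\cL\equiv0$ and both notions hold trivially with $C=1$; so assume $G\neq0$ and let $\lambda_{\min}^+>0$ be its smallest nonzero eigenvalue. Given any initialization $\btheta_0$, write $\btheta_0-\btheta^\star=u_0+w_0$ with $u_0\in\ker G$, $w_0\in\mathrm{range}(G)$. By the observation above, coupling to the same batch draws, $\btheta_t-\btheta^\star=u_0+w_t$ where $\btheta^\star+w_t$ is precisely the SAM trajectory started from $\widetilde\btheta_0:=\btheta^\star+w_0$. Applying norm-based stability to $\widetilde\btheta_0$ gives $\bbE[\norm{w_t}^2]\le C\,\bbE[\norm{w_0}^2]$. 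Combining $\cL(\btheta_t)=\tfrac12 w_t^\top G w_t\le\tfrac12\norm{G}_2\norm{w_t}^2$ with $\cL(\btheta_0)=\tfrac12 w_0^\top G w_0\ge\tfrac12\lambda_{\min}^+\norm{w_0}^2$ yields $\bbE[\cL(\btheta_t)]\le\tfrac{\norm{G}_2}{\lambda_{\min}^+}C\,\bbE[\cL(\btheta_0)]$, i.e.\ loss-based stability with $C':=\norm{G}_2 C/\lambda_{\min}^+$.

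\textbf{(L2).} For the converse it suffices to produce one update rule and one $G$. Take $p=2$, $\btheta^\star=\bzero$, a single data point with $\nabla f(\bx_1;\btheta^\star)=(1,0)^\top$, so $G(\btheta^\star)=\mathrm{diag}(1,0)$ and $\cL(\btheta)=\tfrac12\theta_1^2$; and take the deterministic update rule $F(\btheta;\xi)=(-\tfrac12\theta_1,\ \theta_2)$, i.e.\ $\btheta_{t+1}=(\tfrac12\theta_1^t,\ 2\theta_2^t)$. Then $\cL(\btheta_t)=4^{-t}\cL(\btheta_0)\le\cL(\btheta_0)$ for every $\btheta_0$, so $\btheta^\star$ is loss-based linearly stable (with $C=1$); but for $\btheta_0=(0,1)$ we get $\norm{\btheta_t-\btheta^\star}^2=4^t\to\infty$ while $\norm{\btheta_0-\btheta^\star}^2=1$, so no constant $C$ makes $\btheta^\star$ norm-based linearly stable. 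Hence the converse of (L1) fails.

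\textbf{Main obstacle.} The only genuinely delicate step is the reduction in (L1): norm-stability is a hypothesis about arbitrary initializations, whereas $\cL$ only controls, and is controlled by, the $\mathrm{range}(G)$-component of $\btheta_t-\btheta^\star$; to legitimately restrict the hypothesis to that subspace one must verify that SAM's linearized dynamics never leave $\mathrm{range}(G)$, so that the $\ker G$-component is inert and invisible to the loss. This is exactly the structural property that a general update rule — as in (L2) — may violate by expanding a $\ker G$ direction, which is what keeps the two notions from being equivalent in general; everything else reduces to elementary eigenvalue estimates.
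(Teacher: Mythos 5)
Your proposal is correct, and on (L1) it takes a genuinely different (and more careful) route than the paper. The paper's own proof of (L1) is a one-line bound, $\cL(\btheta_t)=\tfrac12(\btheta_t-\btheta^\star)^\top G(\btheta^\star)(\btheta_t-\btheta^\star)\leq\norm{G(\btheta^\star)}\norm{\btheta_t-\btheta^\star}^2$, followed directly by the norm-stability hypothesis; it never lower-bounds $\cL(\btheta_0)$, so it silently glosses over initializations whose displacement has a large $\ker G$ component (where $\cL(\btheta_0)$ can be zero or tiny relative to $\norm{\btheta_0-\btheta^\star}^2$). You instead decompose $\btheta_0-\btheta^\star$ into $\ker G\oplus\mathrm{range}(G)$, verify that SAM's linearized increments are confined to $\mathrm{range}(G)$ and depend only on the range component, and then sandwich the loss between $\lambda_{\min}^+$ and $\norm{G}_2$ on that subspace. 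What each buys: the paper's argument is shorter and update-rule-agnostic under its loose reading of the definition; yours closes the degenerate-direction gap and delivers a constant $\norm{G}_2 C/\lambda_{\min}^+$ uniform over initializations, at the price of invoking SAM-specific structure — which, as your "main obstacle" remark correctly identifies, is exactly what (L2) shows cannot be dispensed with for general update rules (only add the trivial word that $w_0=0$ forces $w_t\equiv 0$, so the division by $\lambda_{\min}^+\norm{w_0}^2$ is never vacuous). For (L2) the two constructions are the same idea — make a $\ker G$ direction, invisible to the loss, blow up — the paper via a Gaussian random walk on $\theta_2$ (so $\bbE[\norm{\btheta_t}^2]=2+t$), you via deterministic doubling; yours is marginally simpler and equally valid.
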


\begin{proof}[Proof of Lemma~\ref{lemma: norm based and loss based}]\ \\
For (L1), it is straightforward by 
$$\cL(\btheta^{t})=\frac{1}{2}(\btheta-\btheta^\star)^\top G(\btheta^\star) (\btheta-\btheta^\star)\leq\norm{G(\btheta^\star)}\norm{\btheta^t-\btheta^\star}^2.
$$
Thus, the norm-based linear stability can imply the loss-based linear stability.

For (L2), we only need to consider an example containing degenerated directions. Specifically, let
\begin{align*}
    L(\btheta)=\frac{1}{2}\theta_1^2+0\cdot\theta_2^2,
\end{align*}
with $\btheta^\star=\bzero$.

Consider an update rule that only occurs on $\theta_2$:
\begin{align*}
    \theta_{1}^{t+1}=\theta_{1}^t;\quad
    \theta_{2}^{t+1}=\theta_{2}^t+\zeta_t,\ \zeta_t\sim\cN(0,1),
\end{align*}
starting from $\btheta^{0}=(1,1)^\top$. 
Then it is loss-based linearly stable due to \begin{align*}
    \cL(\btheta^{t})=\frac{1}{2}(\theta_{1}^t)^2+0=\frac{1}{2}(\theta_{1}^t)^2\equiv\cL(\btheta^{0}).
\end{align*}

However, it is norm-based linearly non-stable:
\begin{align*}
    \bbE\left[\norm{\btheta^t}^2\right]=1+\bbE\left[|\theta_{2}^t|^2\right]=1+
    \bbE\left[\left|1+\sum_{s=0}^{t-1} \zeta_s\right|^2\right]=2+t\to+\infty \text{\ when $t\to+\infty$}.
\end{align*}

\end{proof}

\end{document}